\def\eqref#1{equation~\ref{#1}}
\def\1{\bm{1}}
\DeclareMathAlphabet{\mathsfit}{\encodingdefault}{\sfdefault}{m}{sl}
\SetMathAlphabet{\mathsfit}{bold}{\encodingdefault}{\sfdefault}{bx}{n}
\newcounter{prompt}
\newcommand{\mutualInformation}[4]{\mathcal{I}_{#1}^{#2}\left\{{#3}\,;{#4}\right\}}
\newcommand{\conditionalMutualInformation}[5]{\mathcal{I}_{#1}^{#2}\left\{{#3}\,;{#4}\mid {#5}\right\}}
\newcommand{\probability}[3]{p_{#1}^{#2}\left({#3}\right)}
\newcommand{\conditionalProbability}[4]{p_{#1}^{#2}\left({#3}\mid {#4}\right)}
\newcommand{\expectation}[3]{\mathbb{E}_{#1}^{#2}\left[{#3}\right]}
\newcommand{\impliesThat}[0]{\Longrightarrow}
\newcommand{\isImpliedBy}[0]{\Longleftarrow}
\newcommand{\ifAndOnlyIf}[0]{\Longleftrightarrow}
\newcommand{\of}[1]{({#1})}
\newcommand{\setOf}[1]{\left\{{#1}\right\}}
\newcommand{\support}[1]{\textrm{supp}\left(#1\right)}
\newcommand{\llmStateRV}[0]{\Theta_Q}  %
\newcommand{\newSampleRV}[0]{B}  %
\newcommand{\newSampleInstance}[0]{b}
\definecolor{formalshade}{rgb}{1,1,1}
\newenvironment{formal}{%
  \MakeFramed{\advance\hsize-\width\FrameRestore}%
  \noindent\hspace{-4.55pt}%
  \begin{adjustwidth}{}{7pt}%
  \vspace{2pt}\vspace{2pt}%
}
{%
  \vspace{2pt}\end{adjustwidth}\endMakeFramed%
}
\definecolor{textgray}{HTML}{6E6E73}
\patchcmd{\wrong@fontshape}{\@gobbletwo}{}{}{}
\numberwithin{equation}{section}
\definecolor{light}{RGB}{125, 125, 125}
\crefname{tcb@cnt@pbox}{code}{code}
\Crefname{tcb@cnt@pbox}{Code}{Code}
\crefname{assumption}{assumption}{assumption}
\Crefname{assumption}{Assumption}{Assumptions}
\newtcolorbox[auto counter]{pbox}[2][]{
  colback=white,
  title=Code~\thetcbcounter: #2,
  #1,fonttitle=\sffamily,
  fontupper=\sffamily,
  arc=2pt,
  colframe=bgcolor,
  coltitle=fgcolor,
  colbacktitle=bgcolor,
  toptitle=0.25cm,
  bottomtitle=0.125cm
}
\newcommand\applefootnote[1]{%
  \begingroup
  \renewcommand\thefootnote{}%
  \renewcommand\@makefntext[1]{\noindent##1}%
  \footnote{#1}%
  \addtocounter{footnote}{-1}%
  \endgroup
}
\definecolor{cverbbg}{gray}{0.90}
\title{SelfReflect: Can LLMs Communicate Their\\Internal Answer Distribution?}
\author[1]{Michael Kirchhof}
\author[2]{Luca Füger}
\author[1]{Adam Goliński}
\author[1]{Eeshan Gunesh Dhekane}
\author[1]{Arno Blaas}
\author[3]{Seong Joon Oh}
\author[1]{Sinead Williamson}
\affiliation[1]{Apple}
\affiliation[2]{Independent Researcher}
\affiliation[3]{Tübingen AI Center}
\abstract{

The common approach to communicate a large language model’s (LLM) uncertainty is to add a percentage number or a hedging word to its response.
But is this all we can do?
Instead of generating a single answer and then hedging it, an LLM that is fully transparent to the user needs to be able to reflect on its internal belief distribution and output a summary of all options it deems possible, and how likely they are. 
To test whether LLMs possess this capability, we develop the SelfReflect metric, an information-theoretic distance between a given summary and a distribution over answers. 
In interventional and human studies, we find that \hbox{SelfReflect} indicates even slight deviations, yielding a fine measure of faithfulness between a summary string and an LLM's actual internal distribution over answers. With SelfReflect, we make a resounding negative observation: modern LLMs are, across the board, incapable of revealing what they are uncertain about, neither through reasoning, nor chains-of-thoughts, nor explicit finetuning. 
However, we do find that LLMs are able to generate faithful summaries of their uncertainties if we help them by sampling multiple outputs and feeding them back into the context. 
This simple approach shines a light at the universal way of communicating LLM uncertainties whose future development the SelfReflect score enables. 
To support the development of this universal form of LLM uncertainties, we publish the code that implements our metric for arbitrary LLMs.

}
\date{\sffamily\today}
\newcommand{\hl}[1]{#1}
\begin{document}

\maketitle

\section{Introduction}

When large language models (LLMs) are uncertain about a response, either because the query is ambiguous or because they are factually unsure, they should indicate it. Consider the example in \cref{fig:example}. The LLM's internal distribution comprises a variety of answers, so it is not enough to just output the greedy response. While existing uncertainty quantification approaches augment the greedy response (or any other single sample from the distribution) with a numerical measure of uncertainty \citep{aichberger2024rethinking,fadeeva2023,fomicheva2020unsupervised,malinin2020} or verbalize the confidence in the response \citep{lin2022teaching,yona2024can}, this offers limited insight into the model's beliefs: we do not see the full range of cities the LLM believes are plausible, nor the variety of supporting information (e.g., that Paris hosts the French government).

We believe we can do better than this. 
As motivation, consider the following comment on Gödel's proof on the incompleteness of number theory.

\begin{formal}%
\textit{Gödel had the insight that a statement of number theory could be about a statement of number theory (possibly even itself), if only numbers could somehow stand for statements.}

\citet{Hofstadter1979-HOFGEB}
\end{formal}%

Gödel's key idea was that statements of number theory are expressive of much more than just integers. The same holds for strings: 
An answer string $s$ generated by an LLM is expressive enough to describe a \emph{distribution over} all answer strings the LLM could generate. We can therefore use a single string $s$ to summarize the LLM's distribution $\conditionalProbability{\theta}{}{A}{q}$ over responses $A$ to a query $q$.
We see this in the ``self-reflective uncertainty'' example of \cref{fig:example}: A single string conveys the relative degrees of belief in different cities, and covers the detailed facts of all answers of the distribution. Communicating uncertainty like this, through a string rather than a number, is a new paradigm for uncertainty quantification -- so novel that there exists no way to benchmark it. Our contribution is thus twofold:

\begin{figure}
    \centering
    \includegraphics[width=\linewidth, trim=0.45cm 4cm 0.65cm 5.8cm, clip]{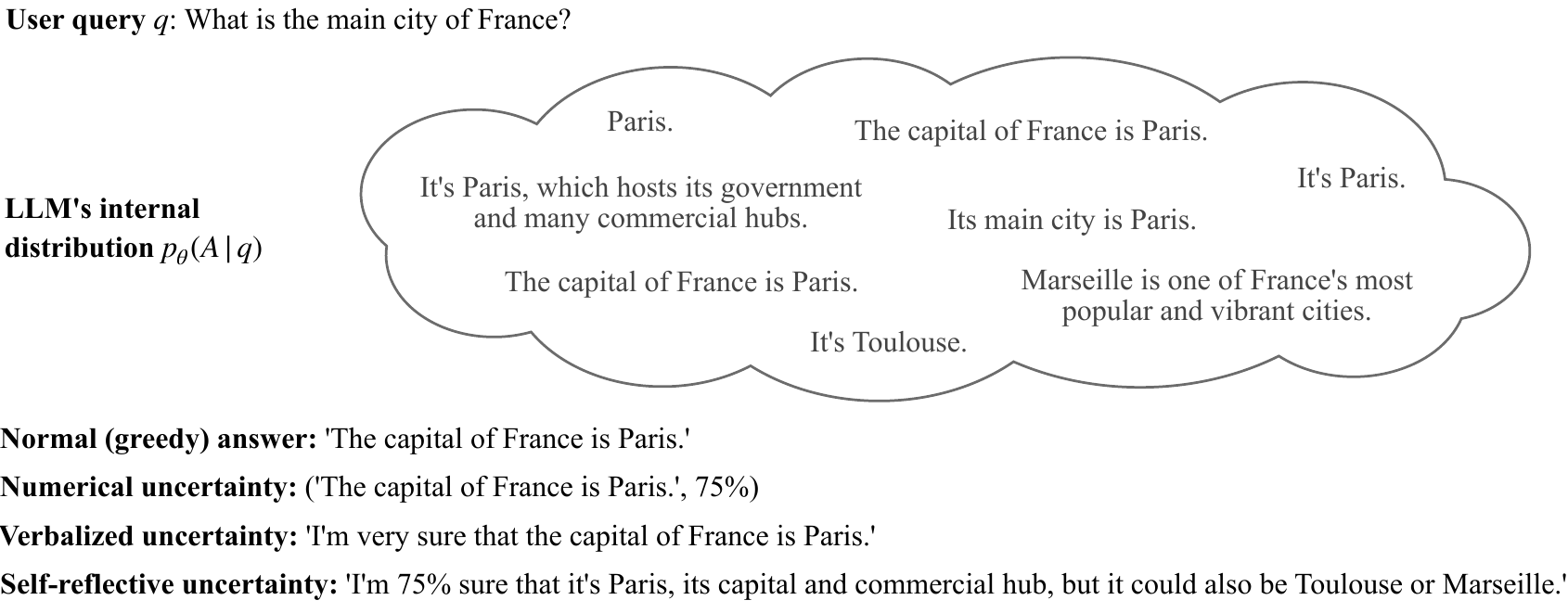}
    \caption{LLMs have internal answer distributions about user queries. Rather than just sampling an output, possibly combined with a percentage, LLMs should generate a string that is self-reflective of their internal distribution, summarizing all possibilities and which they find the most likely.}
    \label{fig:example}
    \vspace{-1mm}
\end{figure}

First, we define a \emph{benchmark} that evaluates whether a given self-summary string faithfully represents an LLM's internal distribution over possible responses\footnote{\hl{Throughout the paper, we use anthropomorphised language like ``what the LLM deems possible'' or ``being honest''. We do this only for brevety and giving intuitions. Technically, we always mean ``summarizing the answers that an LLM could give if sampled multiple times''.}}. %
The underlying challenge here is to measure whether a single string \textit{``carries the same information''} as a \emph{distribution over} strings, in some information-theoretic sense that takes into account both mentioned facts and their relative likelihoods.  
Our theoretical analysis yields the SelfReflect metric. It scores how well a self-summary string is predictively sufficient of a distribution of answer strings. %
To ensure that this measure of faithfulness is robust in practice, we conduct controlled experiments on both free-form and closed-form question datasets. We find that the SelfReflect score precisely discriminates good from bad (and almost-good) summaries of answer distributions, and that it agrees with human judgements, in both cases outperforming other possible benchmark metrics such as LM judges and embedding distances.

Second, we use the SelfReflect metric to test whether 20 modern LLMs can generate self-reflective uncertainty strings. %
We make a resounding negative observation: \hl{Neither explicit prompting, nor reasoning, nor SFT and DPO fine-tuning enable an LLM to faithfully summarize its internal beliefs. Its output may have a summary-style format, but it mentions arbitrary possibilities, not those that the LLM actually believes in.} It is, however, possible to give honest insights into the internal answer distributions by explicitly i.i.d. sampling an LLM and returning this back for summarization.

These findings mark but the start of this new avenue of uncertainty quantification, and, in extension, of fundamentally making LLMs aware of their internal uncertainties. We expect that future advances along our SelfReflect benchmark metric will unlock more honest and trustworthy LLM interactions.

\section{Related Work}

\subsection{Uncertainty in LLMs}

Most work on uncertainty in LLMs associates a single numerical expression of uncertainty to a specific string like the greedily decoded response. 
Since LLMs are, in essence, probabilistic next-token classifiers, one can attempt to read their uncertainty off their token logits %
\citep{aichberger2024rethinking,fadeeva2023,fomicheva2020unsupervised,malinin2020}. These methods can be extended to longer LLM answers for example by searching for fact tokens and extracting their logits \citep{fadeeva-etal-2024-fact} and made more human-readable by transforming the numeric uncertainty into a string like ``I am very sure that...'' \citep{lin2022teaching,yona2024can}. Still, these approaches quantify the uncertainty of only a single element of the LLM's internal distribution. 

So how can the full uncertainty of the LLM's distribution be captured? \citet{farquhar2024detecting} cluster answers sampled from the LLM's internal distribution semantically and calculate an entropy over the clusters. This considers the full distribution over strings, but it still reduces the uncertainty to a single number and presents this number alongside a single string from the distribution. Moving towards richer uncertainty explications, \citet{xu2024sayself} generate multiple samples from an LLM, use GPT-4 to summarize the distribution of samples and train the LLM to output such summaries. Similarly, \citet{yang2024logu} train an LLM to output strings that delineate which facts it is uncertain about. This is arguably one of the richest ways to express an LLM's uncertainty. But both papers, focusing on the generation of summaries rather than on evaluation, use simple LM judges to rate the summary strings. As we show in \cref{sec:goodvsbad}, LM judges can not discern how faithfully a string reflects a distribution over strings beyond relatively simple good vs bad cases. Our SelfReflect gives a better-founded and more precise metric to compare whether a summary string contains the same information as the LLM's internal distribution, enabling to further develop this new avenue of LLM uncertainties.

\subsection{Summarization}

Testing whether a summary of a long document is \emph{good} has a long history in natural language processing (NLP) \citep{zhang2024comprehensive}. Summaries are traditionally rated in terms of consistency with the long document, relevance of the chosen information, and fluency and coherence of their sentences \citep{fabbri2021summeval}, as rated by humans or recently by LM judges \citep{jain2023multi}. %
In modern LLM-generated summaries, fluency and coherence are usually granted, so that the focus lays on the consistency and relevance of the summary, in other words, whether it \emph{contains the same information} as the long document. This fundamental question dates back to the Cloze test \citep{taylor1953cloze}. This test, originally designed for human language learners, masks out words from the long document and asks to fill them in. Summarization metrics like BLANC \citep{vasilyev-etal-2020-fill} run this test twice, once when conditioning an NLP model on the summary and once without. If the summary contains correct information, the NLP model should fill in better words. The masked-out performance can be quantified either as an accuracy gain \citep{vasilyev-etal-2020-fill} or, more softly, as a pseudo log-likelihood \citep{pmlr-v101-shin19a,wang-cho-2019-bert,salazar-etal-2020-masked,kauf-ivanova-2023-better}. \hl{Other recent metrics use masked-out tasks to estimate pointwise mutual information \citep{jung2024informationtheoretic}.}

Since our SelfReflect metric also quantifies the quality of a summary, we base it off Cloze-like masked-out tasks. But there is a twist: The summary string $s$ does not summarize another string but a \emph{distribution over} strings $\conditionalProbability{\theta}{}{A}{q}$.
This means we must go beyond comparing $s$ to a specific string $a\sim\conditionalProbability{\theta}{}{A}{q}$,
to quantifying how faithfully $s$ represents the density over the string space that 
$\conditionalProbability{\theta}{}{A}{q}$
defines, i.e., to all possible answers and how likely they are. To this end, we re-think masked-out tasks from the lens of sufficient statistics in the following section.

\section{Distances between summary strings and distributions of strings}

Our main challenge is to find a distance that quantifies the extent to which a summary string \emph{carries the same information as} an LLM's internal answer distribution. We build a theoretical foundation for sufficient statistics in string spaces in \cref{sec:theory} and develop the SelfReflect metric in \Cref{sec:metric}. %

\subsection{Summaries as predictive sufficient statistics} 
\label{sec:theory}

Suppose we have an LLM (which we denote $\text{LLM}_\theta$), prompted with a random query $Q$. We posit that this puts us in a state $\llmStateRV$, which allows us to sample random responses $\newSampleRV$. We are interested in summarizing this distribution over  responses. Let $A^{\of{1:N}}:=(A^{\of{1}}, \dots, A^{\of{N}}) \in \mathcal{X}^N$ be a set of responses sampled from $\text{LLM}_\theta$, where $\mathcal{X}$ is the space of finite strings.\footnote{These $N$ samples may be generated independently and identically to $\newSampleRV$, but we do not require this; for example, the distribution over subsequent answers could depend on the previous answers.}
Consider a summarization function $\psi: \mathcal{X}^N \longrightarrow \mathcal{X}$ that, given %
$A^{\of{1:N}}$, %
generates a summary $S := \psi\of{A^{\of{1:N}}}$. 
What criteria should $\psi$ %
satisfy if its summaries are to exactly capture $\text{LLM}_\theta$'s distribution over $\newSampleRV$? %

\begin{wrapfigure}{r}{0.38\textwidth}
\vspace{-2mm}
    \centering
    \begin{tikzpicture}[scale = 0.9]
        \draw[very thick, fill=lightgray!50!white] (0.5, 0.5) circle (0.5); 
        \node[scale=0.85] at (0.5, 0.5) {$\newSampleRV$}; 
        \draw[->, very thick] (0.5, 2)--(0.5, 1); 
        \draw[very thick] (0.5, 2.5) circle (0.5); 
        \node[scale=0.85] at (0.5, 2.5) {$\llmStateRV$}; 
        \draw[->, very thick] (1, 2.5)--(2, 2.5); 
        \draw[very thick, fill=lightgray!50!white] (2.5, 2.5) circle (0.5); 
        \node[scale=0.85] at (2.5, 2.5) {$A^{\of{1:N}}$}; 
        \draw[->, very thick] (3, 2.5)--(4, 2.5); 
        \draw[very thick, fill=lightgray!50!white] (4.5, 2.5) circle (0.5); 
        \node[scale=0.85] at (4.5, 2.5) {$S$}; 
    \end{tikzpicture}
    \vspace{0.7mm}
    \caption{Graphical model for the sufficiency that SelfReflect quantifies.} %
    \label{fig:theory:graphical-model}
\end{wrapfigure}
\noindent Continuing the example from \cref{fig:example}, we can see that an ideal summary of $A^{\of{1:N}}$ %
should neither omit important details from the answer distribution nor add extra details.
For example, a summary stating ``The capital of France is Paris'' would ignore the LLM's belief in Marseille or Toulouse, whereas a summary stating ``The capital of France is Paris but for a period in history, it was Orl\'{e}ans'' would be adding unfaithful details.  
The same holds for the relative likelihood of answers: the ideal summary should state that the capital of France is most likely Paris, and not Toulouse or Marseille, because this answer has a higher probability mass in the LLM's internal distribution.
This indicates that an \emph{ideal summary should capture exactly the same information about the answer distribution as that contained in the sampled answers}. 
We can formalize this in terms of mutual information: %

\begin{definition}{\textbf{(Ideal Summary)}}{theory:axiom-of-ideal-sufficiency}
An ideal summary $S$ of answers $A^{\of{1:N}}$ of an LLM satisfies
\begin{equation}
  \textstyle \mutualInformation{}{}{A^{\of{1:N}}}{\newSampleRV}
  =
  \mutualInformation{}{}{S}{\newSampleRV}
\end{equation}
Here, $\mutualInformation{}{}{Y}{Z}$ denotes the mutual information between $Y$ and $Z$. 
Intuitively, for any subsequent answer $\newSampleRV$ from the LLM, the information about $\newSampleRV$ contained in $A^{\of{1:N}}$ is exactly captured by $S$.
\end{definition}

\noindent This definition is closely tied to the notion of predictive sufficiency~\citep{lauritzen1974sufficiency}, whereby a statistic $T\of{X^{\of{1:N}}}$ of observations $X^{\of{1:N}}$ %
is called sufficient if it satisfies $\conditionalProbability{}{}{X}{X^{\of{1:N}}} = \conditionalProbability{}{}{X}{T\of{X^{\of{1:N}}}}$ for any subsequent observation $X$.
In fact, we can reframe \cref{def:theory:axiom-of-ideal-sufficiency}: %
\begin{proposition}{\textbf{(Connection to Predictive Sufficiency)}}{theory:connection-to-predictive-sufficiency}
    For an ideal summary $S$ of answers $A^{\of{1:N}}$, 
    \begin{equation}
        \textstyle \mutualInformation{}{}{A^{\of{1:N}}}{\newSampleRV}
        =
        \mutualInformation{}{}{S}{\newSampleRV}
        \ifAndOnlyIf
        \conditionalProbability{}{}{\newSampleRV}{A^{\of{1:N}}}
        =
        \conditionalProbability{}{}{\newSampleRV}{S}
    \end{equation}
    Intuitively, the ideal summary $S$ is a predictive-sufficient statistic of the answers $A^{\of{1:N}}$ for $\newSampleRV$. 
\end{proposition}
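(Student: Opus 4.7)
The plan is to derive the equivalence from the data processing inequality together with its equality condition, using the deterministic nature of the summarization map $\psi$. The proof is a standard information-theoretic argument once the right Markov structure is identified, so the main work is setting up that structure carefully from the graphical model in \cref{fig:theory:graphical-model}.

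\paragraph{Step 1: Identifying the Markov chain.} First I would observe that, since $S = \psi(A^{(1:N)})$ is a deterministic function of $A^{(1:N)}$, the random variable $S$ carries no information about anything else conditional on $A^{(1:N)}$. In particular, $\conditionalProbability{}{}{\newSampleRV}{A^{(1:N)}, S} = \conditionalProbability{}{}{\newSampleRV}{A^{(1:N)}}$, so we have the Markov chain $\newSampleRV \longrightarrow A^{(1:N)} \longrightarrow S$. This is the only structural input the argument requires; the role of $\llmStateRV$ in the graphical model is simply to induce the joint law under which $A^{(1:N)}$ and $\newSampleRV$ are dependent.

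\paragraph{Step 2: Chain-rule decomposition.} Next I would expand the mutual information $\mutualInformation{}{}{(A^{(1:N)}, S)}{\newSampleRV}$ in two ways via the chain rule:
\begin{equation}
\mutualInformation{}{}{A^{(1:N)}}{\newSampleRV} + \conditionalMutualInformation{}{}{S}{\newSampleRV}{A^{(1:N)}} = \mutualInformation{}{}{S}{\newSampleRV} + \conditionalMutualInformation{}{}{A^{(1:N)}}{\newSampleRV}{S}.
\end{equation}
The Markov property from Step 1 gives $\conditionalMutualInformation{}{}{S}{\newSampleRV}{A^{(1:N)}} = 0$, so
\begin{equation}
\mutualInformation{}{}{A^{(1:N)}}{\newSampleRV} - \mutualInformation{}{}{S}{\newSampleRV} = \conditionalMutualInformation{}{}{A^{(1:N)}}{\newSampleRV}{S}.
\end{equation}
Since conditional mutual information is non-negative, this already recovers the data processing inequality and shows that equality of the two unconditional mutual informations holds iff $\conditionalMutualInformation{}{}{A^{(1:N)}}{\newSampleRV}{S} = 0$, i.e., iff $\newSampleRV$ is conditionally independent of $A^{(1:N)}$ given $S$.

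\paragraph{Step 3: Translating conditional independence into the conditional-distribution form.} The conditional independence $\newSampleRV \perp A^{(1:N)} \mid S$ is by definition equivalent to $\conditionalProbability{}{}{\newSampleRV}{A^{(1:N)}, S} = \conditionalProbability{}{}{\newSampleRV}{S}$. Using once more that $S$ is a deterministic function of $A^{(1:N)}$, the left-hand side simplifies to $\conditionalProbability{}{}{\newSampleRV}{A^{(1:N)}}$, which yields exactly $\conditionalProbability{}{}{\newSampleRV}{A^{(1:N)}} = \conditionalProbability{}{}{\newSampleRV}{S}$. For the converse direction, if this equality of conditional distributions holds then $H(\newSampleRV \mid A^{(1:N)}) = H(\newSampleRV \mid S)$ and hence $\mutualInformation{}{}{A^{(1:N)}}{\newSampleRV} = \mutualInformation{}{}{S}{\newSampleRV}$ directly by the definition $I(X;Y) = H(Y) - H(Y \mid X)$, closing the equivalence.

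\paragraph{Expected obstacle.} There is no deep technical difficulty; the argument is essentially the equality condition of the data processing inequality. The only subtlety worth flagging carefully is the use of determinism of $\psi$ in two different places (once to establish the Markov chain in Step 1, once to collapse $\conditionalProbability{}{}{\newSampleRV}{A^{(1:N)}, S}$ to $\conditionalProbability{}{}{\newSampleRV}{A^{(1:N)}}$ in Step 3). If one later wished to extend the result to a stochastic summarizer $\psi$, the backward direction would still go through but the forward direction would need to be stated as sufficiency in both directions, $B \perp A^{(1:N)} \mid S$ rather than as an equality of conditionals.
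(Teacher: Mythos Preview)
Your proof is correct and follows essentially the same route as the paper: both reduce the mutual-information equality to the vanishing of $\conditionalMutualInformation{}{}{A^{(1:N)}}{\newSampleRV}{S}$ and then translate the resulting conditional independence into the conditional-distribution form via the identity $\conditionalProbability{}{}{\newSampleRV}{A^{(1:N)}, S} = \conditionalProbability{}{}{\newSampleRV}{A^{(1:N)}}$. The only cosmetic difference is that the paper derives this identity (its Lemma~\ref{lemma:appendix:theory:dependence-on-A1N=and-S}) by marginalizing over $\llmStateRV$ in the graphical model---which also covers a stochastic summarizer depending only on $A^{(1:N)}$---whereas you invoke the determinism of $\psi$ directly; your ``Expected obstacle'' paragraph correctly flags exactly this distinction.
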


From \cref{def:theory:axiom-of-ideal-sufficiency} and \cref{thm:theory:connection-to-predictive-sufficiency}, we see that a measure of how much  %
$\conditionalProbability{}{}{\newSampleRV}{A^{\of{1:N}}}$ diverges from $\conditionalProbability{}{}{\newSampleRV}{S}$ would be a good metric for measuring how faithfully $S$ reflects the sampled answers $A^{\of{1:N}}$. %
Towards this, we formulate a Cloze-task based on masked-token prediction that constitutes a simple yet equivalent characterization of the desired predictive sufficiency. %
Let $\newSampleRV_i$ denote the $i$th word of  $\newSampleRV$ %
and let %
$\newSampleRV_{-i} := \of{\newSampleRV_j}_{j \ne i}$ denote all other words of the answer. 
We propose predicting the missing word $\newSampleRV_i$ from the rest of the words $\newSampleRV_{-i}$ with the extra context of either the sampled answers $A^{\of{1:N}}$ or their summary $S$. Identical behavior in this masked-token prediction task turns out to be equivalent to predictive sufficiency (and hence, \cref{def:theory:axiom-of-ideal-sufficiency}): %

\begin{proposition}{\textbf{(Informal; Towards the SelfReflect Metric)}}{theory:informal-towards-self-reflect-metric}
    For answers $A^{\of{1:N}}$ and their summary $S$, under mild conditions on all involved distributions and support of $\newSampleRV$, we have: 
    \begin{equation}
        \textstyle \conditionalProbability{}{}{\newSampleRV}{A^{\of{1:N}}}
        =
        \conditionalProbability{}{}{\newSampleRV}{S}
        \ifAndOnlyIf
        \ \textrm{for all masking indices $i$},\ 
        \conditionalProbability{}{}{
            \newSampleRV_i
        }{
            A^{\of{1:N}}, \newSampleRV_{-i}
        }
        =
        \conditionalProbability{}{}{
            \newSampleRV_i
        }{
            S, \newSampleRV_{-i}
        }
    \end{equation}
\end{proposition}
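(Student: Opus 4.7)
The plan is to establish the two directions of the biconditional separately, since they require different techniques.

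For the forward direction ($\Longrightarrow$), the argument will be immediate from marginalization and Bayes' rule: starting from $p(B \mid A^{(1:N)}) = p(B \mid S)$, I would marginalize over $B_i$ on both sides to obtain $p(B_{-i} \mid A^{(1:N)}) = p(B_{-i} \mid S)$, and then divide to conclude $p(B_i \mid A^{(1:N)}, B_{-i}) = p(B_i \mid S, B_{-i})$ for every masking index $i$. The positivity assumption on the support will ensure that this division is valid everywhere.

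For the reverse direction ($\Longleftarrow$), the plan is to adapt Brook's lemma in order to reconstruct the joint distribution over $B$ from its full conditionals. I would fix a reference string $b^*$ in the support and, for an arbitrary $b$ of the same length $L$, express the ratio $p(B = b \mid A^{(1:N)}) / p(B = b^* \mid A^{(1:N)})$ as a telescoping product of $L$ factors, where each factor has the form $p(B_i = b_i \mid A^{(1:N)}, B_{-i} = c) / p(B_i = b_i^* \mid A^{(1:N)}, B_{-i} = c)$ for an auxiliary sequence $c$ that interpolates one coordinate at a time from $b^*$ to $b$. By hypothesis each factor is unchanged when $A^{(1:N)}$ is replaced by $S$, so the entire ratio equals its analogue conditioned on $S$. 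Summing over $b$ and using that both distributions are normalized then forces $p(B = b^* \mid A^{(1:N)}) = p(B = b^* \mid S)$, and hence $p(B \mid A^{(1:N)}) = p(B \mid S)$ on the whole support.

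The main obstacle I expect will be making the ``mild conditions'' precise enough for the Brook-style argument to go through in the string setting. Two technical issues arise: first, positivity, so that every full conditional appearing along the swap path is strictly positive and the telescoping product is well-defined; and second, the variable length of answers, which I would handle by appending an explicit end-of-string token so that all responses embed into a common product space over an enlarged alphabet (alternatively, by conditioning on $|B| = L$ and running the argument per length, then recombining). Once these conventions are formalized as part of the hypothesis, both directions above go through with only routine bookkeeping.
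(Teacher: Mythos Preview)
Your proposal is correct and matches the paper's own proof essentially step for step: the forward direction is exactly the marginalize-then-divide argument you describe, and the reverse direction is the same Brook-style telescoping product over a coordinate-by-coordinate interpolation from $b$ to a fixed reference $b^*$, followed by normalization to pin down the constant. The paper handles your two anticipated technical issues just as you suggest, by assuming strict positivity of all involved conditionals (justified via the LLM modeling) and by fixing the answer length to the maximum context length with padding.
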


\noindent Full details and proofs of \cref{thm:theory:connection-to-predictive-sufficiency,thm:theory:informal-towards-self-reflect-metric}  are given in \cref{app:theory}. 
\cref{thm:theory:informal-towards-self-reflect-metric} motivates us to measure the divergence between the distributions $\conditionalProbability{}{}{
    \newSampleRV_i
}{
    S, \newSampleRV_{-i}
}$ and $
\conditionalProbability{}{}{
    \newSampleRV_i
}{
    A^{\of{1:N}}, \newSampleRV_{-i}
}
$ as a tractable metric for the quality of a summary, forming 
 the basis of the SelfReflect metric.

\subsection{The SelfReflect metric} \label{sec:metric}

\cref{thm:theory:informal-towards-self-reflect-metric} tells us we can use a sequence of masked-out tasks to quantify whether a summary $s$ contains the same information about $\text{LLM}_\theta$'s distribution $\conditionalProbability{\theta}{}{\newSampleRV}{q}$ %
as a sequence of $N$ samples from that distribution. 
We approximate this task using a second judge LLM, $\text{LLM}_J$, to estimate the conditional distribution over masked-out words. Intuitively, irrespective of whether we show the sampled answers or their ideal summary, a judge LLM should predict the same masked tokens. 

\begin{figure}[t]
    \definecolor{blueish}{HTML}{0076BA}
    \definecolor{greenish}{HTML}{1DB100}
    \centering
    \includegraphics[width=\linewidth, trim=1.6cm 8.7cm 1.6cm 4cm, clip]{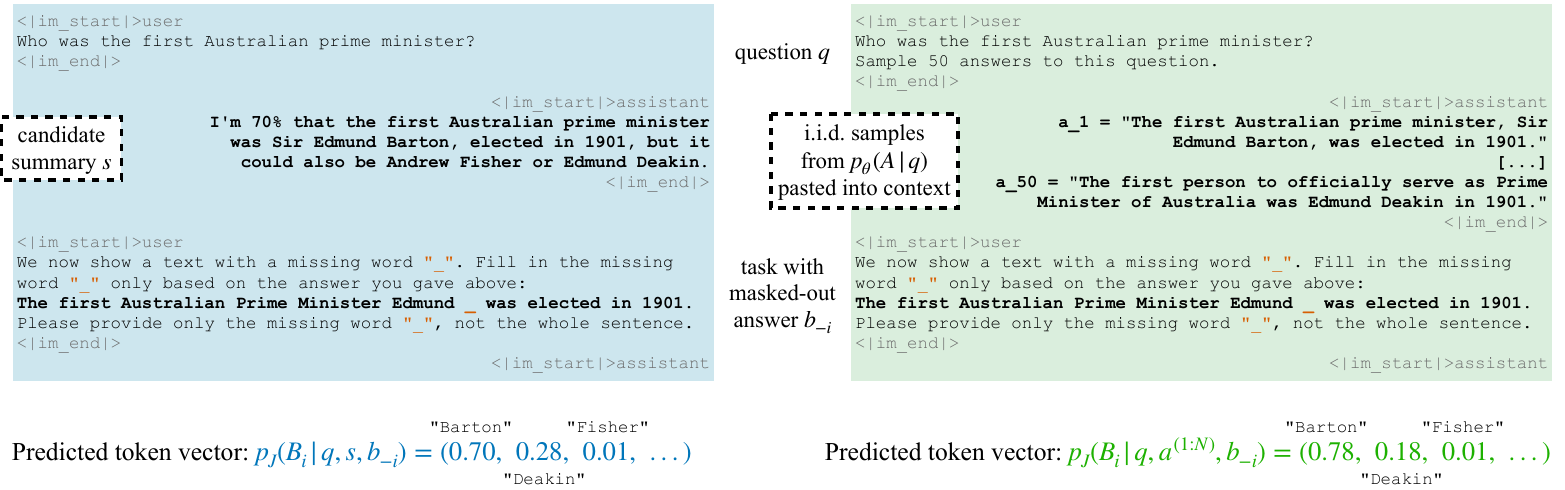}
    \scriptsize 
    $
        m_{\textit{SelfReflect}}(s) 
        =
        \mathcal{W}^1
        \left(
            \color{blueish}
            \conditionalProbability{J\hspace{-0.5mm}}{}{\newSampleRV_i}{
                q, s, \newSampleInstance_{-i}
            }
            \,
            \color{black}
            ,
            \color{greenish}
            \conditionalProbability{J\hspace{-0.5mm}}{}{
                \newSampleRV_i
            }{
                q, 
                    a^{\of{1:N}}
                ,  
                \newSampleInstance_{-i}
            } \color{black}
        \right)
    $
    \caption{To test whether a summary string $s$ contains the same information as a set of samples $a^{\of{1:N}}$, %
    SelfReflect prompts an LLM twice. First, it provides the summary as context; next, it provides the concatenated samples. %
    SelfReflect then compares the resulting distributions via a masked-out task. }%
    \label{fig:method}
\end{figure}

Concretely, we sample a new response $\newSampleRV$ at temperature 1 from $\text{LLM}_\theta$, mask out one word $\newSampleRV_i$, and ask $\text{LLM}_J$ to predict %
$\newSampleRV_i$ given the remainder of the answer $\newSampleRV_{-i}$, the query $q$, and either the summary $s$ or a sequence $a^{\of{1:N}}$ of $N$ samples from $\conditionalProbability{\theta}{}{A^{\of{1:N}}}{q}$, see \cref{fig:method}. %
This yields two distributions 
over the vocabulary space of $\text{LLM}_J$, 
$\conditionalProbability{J}{}{
            \newSampleRV_i
        }{
            Q=q, A^{\of{1:N}} = a^{\of{1:N}}, \newSampleRV_{-i}=\newSampleInstance_{-i}
        }$ and $\conditionalProbability{J}{}{
            \newSampleRV_i
        }{
            Q=q, S=s, \newSampleRV_{-i} = \newSampleInstance_{-i}
        }$,
which we compare using the 1-Wasserstein distance.\footnote{If $\text{LLM}_J$ is a black-box model that only returns the top-predicted word, i.e., $p_J$ are one-hot vectors, our $1$-Wasserstein comparison simplifies into an accuracy that tests whether the two predicted words are equal.} We marginalize over $\newSampleRV$ and index $i$ to satisfy the requirements of \cref{thm:theory:informal-towards-self-reflect-metric}. Finally, we take the expectation over all summaries that a summarization strategy $\psi$ writes for each question in the dataset, which gives us the SelfReflect metric:

\begin{definition}{\textbf{(SelfReflect Metric)}}{theory:definition-of-self-reflect-metric}
\begin{equation}
    m_\text{SelfReflect}(\psi) = \hspace{-3mm}\mathop{\mathbb{E}}_{Q, A^{\of{1:N}}, \newSampleRV, i} \left[ \mathcal{W}^1 \hspace{-1mm} \left( 
        \conditionalProbability{J\hspace{-0.5mm}}{}{
            \newSampleRV_i
        }{
            Q, \psi(Q), \newSampleRV_{-i}
        }\hspace{-1mm}, \conditionalProbability{J\hspace{-0.5mm}}{}{
            \newSampleRV_i        
        }{
            Q, 
                A^{\of{1:N}}
            , \newSampleRV_{-i}
        }
    \right) \right]\hspace{-0.5mm}\label{eqn:sr}
\end{equation}  
\end{definition}

Here, $\psi$ is any method that makes $\text{LLM}_\theta$ output a summary of its internal distribution in response to a query.\footnote{While the link to sufficiency only holds if $\psi$ depends only on $a^{\of{1:N}}$, the metric is well-defined whether the summary generation involves taking samples in-between or generating a summary answer for $q$ in other ways.} %
\hl{We put a set of $N=50$ samples $A^{\of{1:N}}$ per query into the reference context of $\text{LLM}_J$ and estimate \cref{eqn:sr} via Monte Carlo sampling. We iterate over $M=50$ samples of $\newSampleRV$ and each of their words  $i$ (except stopwords) to create masked-out tasks. We repeat this over 1000 queries per dataset to average the final SelfReflect benchmark score.} These are relatively conservative settings that take 67 minutes to compute on a node of 8 NVIDIA A100 GPUs. In \cref{app:stability}, we show that we can also reduce to 9 minutes or even below one minute if the goal is not to reach a benchmark metric precise to multiple digits but rapid development or reward signals. Further, literature notes that Cloze-like %
evaluations are often limited by synonyms \citep{kauf-ivanova-2023-better}, so we post-hoc flatten $p_J$ with $\tau=5$. We quantitatively find this improves discriminability since especially Instruction-tuned $\text{LLM}_J$ models otherwise place too much mass on one specific masked-out word. We discuss further design choices in \Cref{app:limitations}. %

We explore different choices of $\text{LLM}_J$ and find that SelfReflect is robust to the exact choice, see the quantitative results in \cref{app:judge} and the qualitative example in \cref{app:example}. We find that Qwen 2.5 Instruct \citep{qwen2.5} captures both textual details and the implicit relative certainties in summaries or concatenated samples in its context even when they are subtle. The 7B model provides results almost on par with the 72B model, so we choose it for efficiency. %

\section{Can SelfReflect scores quantify how good summaries are?} \label{sec:interventional}

We now verify that the SelfReflect metric works as a benchmarking tool, based on three pillars: Distinguishing hand-crafted good and bad summaries on free-form questions%
, a simplified study on multiple-choice QA answer distributions, and a comparison to which summaries humans deem faithful. %
In all studies, we compare SelfReflect to several other possible benchmarking metrics.

\textbf{Baselines. } While developing SelfReflect, we experimented with approaches from various roots for comparing %
a summary string $s$ to a set of strings $a^{\of{1:N}}$. %
First, \hl{we compare against multiple metrics from summarization literature} that treat $a^{\of{1:N}}$ %
as a single document that is summarized by $s$. \emph{Summarization} \citep{jain2023multi} uses the judge $\text{LLM}_J$ to rate the summary in terms of consistency, fluency, relevance, and coherence. %
\emph{LM Judge} prompts $\text{LLM}_J$ to rate how well $s$ matches $a^{\of{1:N}}$ in one prompt, following the chain-of-thoughts prompt of \citet{xu2024sayself}. \hl{\emph{InfoSumm} \citep{jung2024informationtheoretic} uses a masked-out task to estimate pointwise mutual-information between summary and document.} Next, we turn to the neighboring field of calibration. \citet{wang2024subjective} argue that calibration can be seen as a distance to a centroid. We implement this in \emph{Embedding} by comparing embeddings of $s$ to $a^{\of{1:N}}$. %
Finally, from an \emph{Optimal transport} perspective \citep{peyre2019computational}, we let $\text{LLM}_J$ split $s$ into a ``distribution'' over atomic statements and likelihoods, compute a pairwise entailment matrix and return the Earth Mover's distance to $\conditionalProbability{\theta}{}{A}{q}$. %

\textbf{Ablations. } We also ablate key characteristics of SelfReflect (SR).  
\emph{SR-logl} replaces the Wasserstein distance over the whole logit vector with only the log probability assigned to the masked-out word given either context. \emph{SR-PMI} (SelfReflect with Pointwise Mutual Information) even removes the one-by-one masked-out task and directly compares the log likelihoods of the full answers $A^{(1:N)}$; 
analogous to Proposition~\ref{thm:theory:connection-to-predictive-sufficiency}. 
\emph{SR-sampling-free} uses the masked-out task, but compares the masked-out logit vectors given the summary to predictions of $\text{LLM}_\theta$ given $q$, instead of putting sampled answers into the context of $\text{LLM}_J$. 
\emph{SR-P(True)} changes from a generative to a discriminative masked-out task,
asking $\text{LLM}_J$ whether several candidates words fit or not (via the P(True) method of \citeauthor{kadavath2022}, \citeyear{kadavath2022}), given either the summary or the samples.
More details are in \cref{app:details}.

\begin{table}
    \caption{Given pairs of good and bad summaries, we measure how often the SelfReflect score, and other benchmark metrics, correctly assign a better score to the good summary to verify that they work as benchmarking metrics. We test multiple pairs of good and bad summaries, e.g., lacking possibilities or lacking details. Mean $\pm$ 95\% confidence interval. Per-dataset results are in \cref{sec:table_1_per_dataset}.}%
    \label{tab:edgecases}
    \resizebox{\textwidth}{!}{
    \begin{tabular}{lcccccc}
    \toprule
        Metric & \shortstack{Good summaries vs \\ bad summaries} & \shortstack{Good vs \\ almost-good} & \shortstack{Detailed vs \\ truncated} & \shortstack{Verbalized uncertainty vs \\ only majority answer} & \shortstack{Verbalized vs \\ or-concatenated} & \shortstack{Percentage vs \\ or-concatenated} \\
    \midrule
        Summarization & 93.33\%\tiny{$\pm 0.89\%$}&	39.72\%\tiny{$\pm 1.87\%$}&	53.05\%\tiny{$\pm 6.04\%$}&	19.90\%\tiny{$\pm 5.66\%$}&58.12\%\tiny{$\pm 7.00\%$}& 64.92\%\tiny{$\pm 6.77\%$} \\
        \hl{InfoSumm} & \hl{\textbf{99.87}\%\tiny{$\pm 0.13\%$}}&	\hl{60.81\%\tiny{$\pm 1.87\%$}}& \hl{49.24\%\tiny{$\pm 6.05\%$}}&	\hl{15.71\%\tiny{$\pm 5.16\%$}}&	\hl{27.75\%\tiny{$\pm 6.35\%$}}&	\hl{10.99\%\tiny{$\pm 4.44\%$}} \\
        LM Judge & 98.33\%\tiny{$\pm 0.46\%$}&	47.32\%\tiny{$\pm 1.91\%$}&	59.92\%\tiny{$\pm 5.93\%$}&	19.37\%\tiny{$\pm 5.60\%$}&	34.55\%\tiny{$\pm 6.74\%$}& 35.08\%\tiny{$\pm 6.77\%$} \\
        Opt. Transport & 80.16\%\tiny{$\pm 1.43\%$}&	60.78\%\tiny{$\pm 1.87\%$}&	39.69\%\tiny{$\pm 5.92\%$}&	48.69\%\tiny{$\pm 7.09\%$}&	52.88\%\tiny{$\pm 7.08\%$}&	69.11\%\tiny{$\pm 6.55\%$} \\
        Embedding & 96.50\%\tiny{$\pm 0.66\%$}&	65.49\%\tiny{$\pm 1.82\%$}&	65.65\%\tiny{$\pm 5.75\%$}&	10.99\%\tiny{$\pm 4.44\%$}&	43.98\%\tiny{$\pm 7.04\%$}&	36.65\%\tiny{$\pm 6.83\%$} \\
        SR-logl & 96.37\%\tiny{$\pm 0.67\%$}&	85.90\%\tiny{$\pm 1.33\%$}&	86.64\%\tiny{$\pm 4.12\%$}&	58.12\%\tiny{$\pm 7.00\%$}&	40.84\%\tiny{$\pm 6.97\%$}&	49.21\%\tiny{$\pm 7.09\%$} \\
        SR-PMI & 88.40\%\tiny{$\pm 1.15\%$}&	33.64\%\tiny{$\pm 1.81\%$}&	53.44\%\tiny{$\pm 6.04\%$}&	25.65\%\tiny{$\pm 6.19\%$}&	14.14\%\tiny{$\pm 4.94\%$}&	20.42\%\tiny{$\pm 5.72\%$} \\
        SR-sampling-free & 88.26\%\tiny{$\pm 1.15\%$}&	54.85\%\tiny{$\pm 1.90\%$}&	73.28\%\tiny{$\pm 5.36\%$}&	38.74\%\tiny{$\pm 6.91\%$}&	35.08\%\tiny{$\pm 6.77\%$}&	38.22\%\tiny{$\pm 6.89\%$} \\
        SR-P(True) & 65.29\%\tiny{$\pm 1.70\%$}&	81.91\%\tiny{$\pm 1.47\%$}&	69.47\%\tiny{$\pm 5.58\%$}&	\textbf{87.96\%}\tiny{$\pm 4.62\%$}&	71.73\%\tiny{$\pm 6.39\%$}&	\textbf{86.39\%\tiny{$\pm 4.86\%$}} \\
        SelfReflect & 98.77\%\tiny{$\pm 0.40\%$}&	\textbf{93.20\%\tiny{$\pm 0.96\%$}}&	\textbf{93.13\%\tiny{$\pm 3.06\%$}}&	85.34\%\tiny{$\pm 5.02\%$}&	\textbf{72.77\%\tiny{$\pm 6.31\%$}}&	80.10\%\tiny{$\pm 5.66\%$} \\
    \bottomrule
    \end{tabular}
    }
\end{table}

\subsection{Study 1: Distinguishing good from bad and almost-good summaries} \label{sec:goodvsbad}
We first conduct an interventional study to test whether summaries that we know are good are judged as better than summaries that we know are bad. To this end, we take $3\times 1,000$ open-ended questions from Natural Questions \citep{kwiatkowski-etal-2019-natural}, TriviaQA \citep{JoshiTriviaQA2017}, and SimpleQA \citep{wei2024measuring}, and let Qwen 2.5 7B Instruct generate 50 answers each. We then give these answer distributions to Gemini 2.0 Flash and prompt it to generate \emph{good} summaries, containing all possibilities, details, and relative likelihoods, and \emph{bad} summaries, which alter key facts of the good summaries, but keep their remaining style (human-written summaries reach equivalent results in \cref{app:goodvsbadhuman}). We then calculate which score SelfReflect gives to the summaries, and in how many percent of the good-bad pairs it correctly gives the good summary a better (lower) score than the bad one. %

\cref{tab:edgecases} shows that SelfReflect correctly discriminates good from bad in 98.77\% of cases. But several other baseline metrics also score over 90\%. So we make the task harder by comparing good to \emph{almost-good} summaries, which only contain facts that are faithful to the answer distribution, but leave out some possibilities and details. SelfReflect gives the good summary a better score than the almost-good summaries in 93.2\% of all questions. The other metrics, including the LM judge used in literature, can no longer distinguish these fine-grained quality differences and are thus not good for benchmarking. We ablate this multiple times, finding the SelfReflect score also correctly notices when a summary does not mention all written details, or when it does not mentions all options but only the majority answer. Even when a summary mentions all options (\textit{``It is ... or ... or ...''}), SelfReflect assigns a yet better score to a summary that also faithfully delineates in words or numbers which options are how likely. The SelfReflect score picks these subtle differences (last five columns) up better than all other benchmarking metrics, matched only in some cases by its own SR-P(True) ablation. Further, all these tests checked whether SelfReflect can distinguish the quality of \emph{individual summaries}. In later benchmarks, which average over thousands of summaries per method, averaged SelfReflect scores will become even more exact by the law of large numbers, making SelfReflect a precise benchmarking tool that allows to iteratively develop summary-generating methods.

\subsection{Study 2: Distances of multiple-choice distributions} \label{sec:mmlu}

\begin{wraptable}{r}{0.43\textwidth}
\vspace{-4.7mm}
  \centering
  \caption{Agreement (rank corr.) between SelfReflect, and others, and a special benchmark metric for MMLU. Mean $\pm$ 95\%.}
    \label{tab:mmlu}
    \resizebox{0.43\textwidth}{!}{
    \begin{tabular}{lcc}
    \toprule
        Metric & Per Question & Whole Dataset \\
    \midrule
        Summarization & 0.45\tiny{$\pm 0.03$} & 0.80\tiny{$\pm 0.00$} \\
        LM Judge & \textbf{0.76\tiny{$\pm 0.02$}} & \textbf{1.00\tiny{$\pm 0.00$}} \\
        Opt. Transport & 0.67\tiny{$\pm 0.02$} & 0.82\tiny{$\pm 0.00$} \\
        Embedding & -0.24\tiny{$\pm 0.04$}\hspace{1mm} & 0.19\tiny{$\pm 0.02$} \\
        SR-logl & 0.59\tiny{$\pm 0.03$} & \textbf{1.00}\tiny{$\pm 0.00$} \\
        SR-PMI & 0.07\tiny{$\pm 0.03$} & 0.20\tiny{$\pm 0.00$} \\
        SR-sampling-free & 0.51\tiny{$\pm 0.03$} & 0.80\tiny{$\pm 0.00$} \\
        SR-P(True) & 0.57\tiny{$\pm 0.03$} & \textbf{1.00\tiny{$\pm 0.00$}} \\
        SelfReflect & 0.65\tiny{$\pm 0.03$} & \textbf{1.00\tiny{$\pm 0.00$}} \\
    \bottomrule
    \end{tabular}
    }
    \vspace{-0.2cm}
\end{wraptable}

Next, we investigate SelfReflect in a narrower setup. We generate $2\times$1,000 answer distributions for MMLU \citep{hendryckstest2021}, a multiple-choice dataset with choices A, B, C, and D for each question, with Gemma 3 12B (non-Instruct) \citep{team2025gemma}, and Qwen 2.5 7B Instruct. \hl{Since MMLU is a multiple-choice dataset, we can sample the LLM multiple times to obtain a simple categorical distribution. We then create summaries that either talk about this distribution \textit{``The answer is most likely C (54\% sure), but it could also be B (32\% sure) or A (14\% sure).''} or that mention the most likely answer only, are overconfident, or give random percentages. This gives a range of different-quality summaries that the benchmark metrics have to tell apart. The categorical setup of MMLU also allows to calculate a reference benchmark metric, namely the Wasserstein distance between the percentages mentioned in the summary and that of the real distribution.  This lets us test if the SelfReflect metric and the other baselines agrees with the true distance in this special case. Specifically, we report the rank correlation between them and the reference metric.}

\cref{tab:mmlu} shows that most metrics have a positive rank correlation with the reference metric. The LM judge metric even slightly outperforms SelfReflect on this particular task, indicating that SelfReflect may be slightly noisy on individual questions when summaries contain exact probabilities. However, when we compute the average score across all questions, as it will later be used in the benchmark, SelfReflect, like two of its ablations and LM Judge, achieves a perfect agreement with the reference metric. This shows SelfReflects generic power as benchmark metric, even in this special case.

\subsection{Study 3: Do the ratings align with human ratings?} \label{sec:human}

Finally, we assess whether SelfReflect scores are aligned with human judgements. We conduct a user study using 200 open-ended questions from the TriviaQA dataset \citep{JoshiTriviaQA2017}. For each question, we generate ten sample responses using Phi-4 \citep{abdin2024phi}, and four summaries: a \textit{good} summary and a \textit{bad} summary, generated using Gemini 2.0 Flash as in \cref{sec:goodvsbad}; a \textit{greedy} summary, i.e., the greedy response of Phi-4; and a Chain of Thought (\textit{CoT}) summary, using Phi-4 to reason about possible answers and then summarize its reasoning. Note that the greedy and CoT summaries are not based on the actual samples. All prompts are provided in \cref{app:details}. Raters were shown the question, the ten sample answers, and two of the summaries, and asked to choose which best summarized the set of samples. Each question/summary combination was evaluated by 5 raters. To assess agreement between human raters, we calculate Krippendorff's $\alpha$.\hl{\footnote{\url{https://github.com/grrrr/krippendorff-alpha/tree/master}}} Alternative agreement metrics such as Cohen's kappa or Fleiss' kappa are not appropriate here since each rater only rates a subset of the combinations. We then calculate Krippendorff's $\alpha$ between the majority human preference and that of SelfReflect and other scores. Further details are in \Cref{app:userstudy}.

As we see from \cref{tab:userstudy}, SelfReflect has the highest overall alignment with the majority human judgement ($\alpha=0.690$). This is close to the inter-human alignment ($\alpha=0.723)$ and significantly higher than any of the competing methods or ablations. Looking into the individual summary types, we see all metrics other than SR-P(True) have good alignment with humans on the \emph{bad} vs \emph{good}, \emph{bad} vs \emph{greedy}, and \emph{bad} vs \emph{CoT} comparisons. However, the other metrics show poor agreement with humans on the more nuanced \emph{good} vs \emph{greedy} and \emph{good} vs \emph{CoT}. For all pairs of summary type, SelfReflect is close to inter-human agreement and either the most aligned with the majority human preference, or has overlapping 95\% confidence intervals with the most aligned metric.

\begin{table}[t]
    \caption{Agreement of metrics with human preference (consensus over five raters) on a pairwise summary preference task, using Krippendorff's $\alpha$ (values in [-1, 1]; positive numbers indicate agreement). Also shown is  Krippendorff's $\alpha$ between individual human raters. Mean $\pm$ 95\% CI.}
    \label{tab:userstudy}
    \resizebox{\textwidth}{!}{
    \begin{tabular}{lccccccc}
    \toprule
 & all & bad vs good & bad vs greedy & bad vs CoT & good vs greedy & good vs CoT & greedy vs CoT\\
 \midrule
 Summarization & 0.480\tiny{$\pm0.050$} & 0.950\tiny{$\pm0.046$} & 0.910\tiny{$\pm0.050$} & \textbf{0.940}\tiny{$\pm0.046$} & -0.211\tiny{$\pm0.156$} & -0.067\tiny{$\pm0.135$} & 0.260\tiny{$\pm0.121$}\\
 LM Judge & 0.517\tiny{$\pm0.046$} & 0.940\tiny{$\pm0.048$} & \textbf{0.920}\tiny{$\pm0.058$} & 0.930\tiny{$\pm0.046$} & -0.063\tiny{$\pm0.152$} & -0.015\tiny{$\pm0.151$} & 0.267\tiny{$\pm0.128$}\\
 Opt. Transport & 0.487\tiny{$\pm0.047$} & 0.850\tiny{$\pm0.076$} & 0.779\tiny{$\pm0.085$} & 0.679\tiny{$\pm0.104$} & 0.098\tiny{$\pm0.155$} & 0.265\tiny{$\pm0.132$} & 0.191\tiny{$\pm0.146$}\\
 Embeddings  & 0.435\tiny{$\pm0.047$} & 0.750\tiny{$\pm0.081$} & 0.799\tiny{$\pm0.087$} & 0.477\tiny{$\pm0.125$} & -0.363\tiny{$\pm0.136$} & 0.331\tiny{$\pm0.135$} & \textbf{0.490}\tiny{$\pm0.121$}\\
 SR-PMI  & 0.436\tiny{$\pm0.053$} & 0.820\tiny{$\pm0.081$} & 0.890\tiny{$\pm0.067$} & 0.769\tiny{$\pm0.080$} & -0.246\tiny{$\pm0.156$} & 0.029\tiny{$\pm0.147$} & 0.246\tiny{$\pm0.114$}\\
 SR-sampling-free & 0.530\tiny{$\pm0.045$} & 0.829\tiny{$\pm0.076$} & 0.870\tiny{$\pm0.071$} & 0.799\tiny{$\pm0.080$} & 0.025\tiny{$\pm0.143$} & 0.241\tiny{$\pm0.131$} & 0.340\tiny{$\pm0.141$}\\
 SR-P(True) & -0.032\tiny{$\pm0.052$} & -0.029\tiny{$\pm0.138$} & -0.335\tiny{$\pm0.124$} & -0.474\tiny{$\pm0.120$} & 0.311\tiny{$\pm0.147$} & 0.409\tiny{$\pm0.125$} & -0.024\tiny{$\pm0.143$}\\
  SelfReflect & \textbf{0.690}\tiny{$\pm0.036$} & \textbf{0.990}\tiny{$\pm0.015$} & 0.850\tiny{$\pm0.066$} & 0.850\tiny{$\pm0.070$} & \textbf{0.489}\tiny{$\pm0.131$} & \textbf{0.599}\tiny{$\pm0.103$} & 0.329\tiny{$\pm0.125$}\\
\midrule
Human vs human & 0.723\tiny{$\pm 0.027$} & 0.988\tiny{$\pm 0.013$} & 0.906\tiny{$\pm 0.035$} & 0.871\tiny{$\pm 0.048$} & 0.441\tiny{$\pm 0.075$} & 0.636\tiny{$\pm 0.064$} & 0.452\tiny{$\pm 0.069$}\\
\bottomrule
    \end{tabular}}
\end{table}

\section{Can LLMs generate self-reflective responses?}
\label{sec:summary_methods_performance}

Now that we have a metric to benchmark how well summaries summarize the distribution of LLM answers, 
we explore the performance of different summarization methods, that is: can one (somehow) make LLMs reflect on and summarize their own internal distributions? %

\subsection{Experimental setup}
We distinguish two broad categories of methods:
A) \emph{Sample \& summarize}: draw multiple independent samples from the LLM, and then feed them back to the LLM to summarize them,
B) \emph{Single-decoding}: methods which utilize only one decoding, requiring the LLM to reflect on its internal distribution on its own. %
We consider three single-decoding methods:
a) \emph{Basic}: a prompt asking the LLM for a summary of all possible answer options;
b) \emph{CoT}: a prompt inducing chain-of-thoughts reasoning about the possible answers and then summarizing them;
c) \emph{Greedy}: Simply return the greedy-decoding answer without summarizing all possibilities; we use this as a baseline.
The \emph{Greedy} baseline is, in fact, strong: On questions where a model has a unimodal distribution on a specific answer, \emph{Greedy} is in fact the best possible summary of this distribution and achieves a competitive SelfReflect score. To account for this, we report the percentage of answers where we observe such "$\conditionalProbability{\theta}{}{A}{q}$ \emph{unimodal}" distributions per LLM.
We evaluate all summarization methods via the SelfReflect score on 3$\times$1000 randomly chosen questions from Natural Questions, SimpleQA, and TriviaQA \hl{(retrieval-augmented generation experiments on HotpotQA are in \cref{sec:hotpotqa})}.
We use the same LLM to sample the answers to the question and generate the summaries in order to assess whether LLMs can access and describe their \emph{own} internal distributions. We publish all benchmarking code upon publication.
More details are in \cref{app:detailed_summary_methods_performance}.

\subsection{Result: LLMs can only access their internal distributions with some help}
As we see in \autoref{tab:summary_methods_performance}, \emph{Sample \& summarize} is able to create summaries that faithfully reflect the model's internal uncertainty, consistently outperforming the \emph{Greedy} baseline. In fact, its score matches that of humans asked to summarize samples from an LLM distribution, with humans achieving $90 \cdot 10^{-3}$ when summarizing Qwen 2.5 72B Instruct answer distributions and \emph{Sample \& summarize} achieving $88\cdot 10^{-3}$ on the data-split of \cref{app:goodvsbadhuman}. However, \emph{Sample \& summarize} helps the LLM in so far that it explicitly samples it i.i.d. and provides the samples back as context to summarize. 

\begin{table}[tb]
  \centering
  \caption{
  SelfReflect score $\downarrow$ ($\times 10^{-3}$ for readability). 
  The results in small font are relative to \emph{Greedy}.
  $p_{\theta}(A|q)$ \emph{unimodal} is the proportion of questions for which the LLM always gives the same answer.
  }
  \label{tab:summary_methods_performance}
  \small
\resizebox{\textwidth}{!}{
  \begin{tabular}{lrrrrrr}
    \toprule
    Model                                         & $\conditionalProbability{\theta}{}{A}{q}$ & \multicolumn{3}{c}{Single-decoding methods} & \multicolumn{2}{c}{Sample \& summarize}                                                                        \\
    \cmidrule(rl){3-5}
    \cmidrule(rl){6-7}
                                                  & unimodal& Greedy                                      & \hspace{5mm}Basic                                   & CoT                                       & \hspace{5mm} $N\!=\!10$                  & $N\!=\!20$                   \\
    \midrule
    Qwen2.5 0.5B Instruct \citep{qwen2.5}       & $7\%$   & 96                                          & 95\tiny{$\color{Green}-1$}              & 94\tiny{$\color{Green}-2$}                & 96\tiny{$\color{Red}-0$}  & 96\tiny{$\color{Red}-0$}     \\
    Qwen2.5 1.5B Instruct \citep{qwen2.5}        & $17\%$  & 94                                          & 94\tiny{$\color{Red}-0$}              & 92\tiny{$\color{Green}-2$}                & 87\tiny{$\color{Green}-7$}  & 87\tiny{$\color{Green}-7$}    \\
    Qwen2.5 3B Instruct \citep{qwen2.5}         & $27\%$  & 97                                          & 99\tiny{$\color{Red}+2$}                & 99\tiny{$\color{Red}+2$}                  & 91\tiny{$\color{Green}-6$}  & 89\tiny{$\color{Green}-8$}     \\
    Qwen2.5 7B Instruct \citep{qwen2.5}         & $36\%$   & 96                                          & 99\tiny{$\color{Red}+3$}                & 101\tiny{$\color{Red}+5$}                 & 91\tiny{$\color{Green}-5$}  & 90\tiny{$\color{Green}-6$}    \\
    Qwen2.5 14B Instruct \citep{qwen2.5}        & $52\%$    & 92                                          & 97\tiny{$\color{Red}+5$}                & 99\tiny{$\color{Red}+7$}                  & 86\tiny{$\color{Green}-6$}  & 85\tiny{$\color{Green}-7$}   \\
    Qwen2.5 32B Instruct \citep{qwen2.5}       & $49\%$     & 96                                          & 102\tiny{$\color{Red}+6$}               & 105\tiny{$\color{Red}+9$}                 & 91\tiny{$\color{Green}-5$}  & 91\tiny{$\color{Green}-5$}   \\
    Qwen2.5 72B Instruct \citep{qwen2.5}        & $50\%$    & 91                                          & 94\tiny{$\color{Red}+3$}                & 96\tiny{$\color{Red}+5$}                  & 85\tiny{$\color{Green}-6$}  & 84\tiny{$\color{Green}-7$}   \\
    Phi 4 14B \citep{abdin2024phi}              & $36\%$       & 92                                          & 92\tiny{$\color{Red}-0$}              & 93\tiny{$\color{Red}+1$}                  & 85\tiny{$\color{Green}-7$}  & 84\tiny{$\color{Green}-8$}    \\
    Ministral 8B Instruct 2410 \citep{ministral}  & $25\%$   & 107                                         & 106\tiny{$\color{Green}-1$}             & 105\tiny{$\color{Green}-2$}               & 101\tiny{$\color{Green}-6$} & 100\tiny{$\color{Green}-7$} \\
    Llama 3.1 70B Instruct \citep{llama3.1}       & $51\%$  & 92                                          & 92\tiny{$\color{Red}-0$}              & 95\tiny{$\color{Red}+3$}                  & 87\tiny{$\color{Green}-5$}  & 87\tiny{$\color{Green}-5$}   \\
    Llama 3.3 70B Instruct \citep{llama3.3}       & $63\%$  & 94                                          & 98\tiny{$\color{Red}+4$}                & 104\tiny{$\color{Red}+10$}                & 89\tiny{$\color{Green}-5$}  & 88\tiny{$\color{Green}-6$}   \\
    Llama 4 Scout 17B 16e Instruct \citep{llama4} & $53\%$  & 91                                          & 96\tiny{$\color{Red}+5$}                & 101\tiny{$\color{Red}+10$}                & 88\tiny{$\color{Green}-3$}  & 87\tiny{$\color{Green}-4$}   \\
    Gemma 3 1B Instruct \citep{team2025gemma}    & $26\%$   & 116                                         & 129\tiny{$\color{Red}+13$}              & 129\tiny{$\color{Red}+13$}                & 117\tiny{$\color{Red}+1$}   & 111\tiny{$\color{Green}-5$}  \\
    Gemma 3 4B Instruct \citep{team2025gemma}     & $52\%$  & 108                                         & 124\tiny{$\color{Red}+16$}              & 128\tiny{$\color{Red}+20$}                & 101\tiny{$\color{Green}-7$} & 100\tiny{$\color{Green}-8$}  \\
    Gemma 3 12B Instruct \citep{team2025gemma}    & $59\%$  & 105                                         & 116\tiny{$\color{Red}+11$}              & 121\tiny{$\color{Red}+16$}                & 102\tiny{$\color{Green}-3$} & 101\tiny{$\color{Green}-4$}  \\
    Gemma 3 27B Instruct \citep{team2025gemma}    & $71\%$  & 100                                         & 113\tiny{$\color{Red}+13$}              & 120\tiny{$\color{Red}+20$}                & 97\tiny{$\color{Green}-3$}  & 96\tiny{$\color{Green}-4$}   \\
    \midrule
    Generation time (seconds)                     &      & 1.56                                        & 1.59                                    & 2.48                                      & 3.65                        & 4.50                                  \\
    Length (characters)                       &    & 104.79                                      & 195.12                                  & 303.09                                    & 174.70                      & 219.22                                \\
    \bottomrule
  \end{tabular}
}

  \vspace{-2mm}
\end{table}

It is of particular interest if we can generate such self-reflective outputs without needing to sample in-between, for runtime and elegancy. \autoref{tab:summary_methods_performance} unveils a resounding negative result: No single-decoding methods is able to out-perform the \emph{Greedy} baseline,
corroborating that LLMs are not able to fully verbalize their own uncertainty by themselves, despite our best efforts to optimize the prompts. 

\begin{table}[t]
  \centering
  \caption{
  SelfReflect score $\downarrow$ ($\times 10^{-3}$) of RLVR models averaged over TriviaQA, NQ \& SimpleQA.
  \emph{Greedy} is generated w/o reasoning. 
  \emph{Basic} and \emph{Sample \& Summarize} reason and output a summary.
  }
  \label{tab:reasoning}
  \small
  \resizebox{\textwidth}{!}{
    \begin{tabular}{lrrrr}
  \toprule
  Model                                                                                                           & \multicolumn{2}{c}{Single-decoding methods} & \multicolumn{2}{c}{Sample\ \& Summarize}                                                             \\
  \cmidrule(rl){2-3}
  \cmidrule(rl){4-5}
                                                                                                                  & Greedy                                      & \hspace{5mm}Reasoning                        & \hspace{5mm} $N\!=\!10$     & \hspace{5mm} $N\!=\!20$     \\
  \midrule
  QwQ 32B \citep{qwq32b}                                                                                          & 96                                          & 105\tiny{$\color{Red}+9$}                & 91\tiny{$\color{Green}-5$}  & 90\tiny{$\color{Green}-6$}  \\
  DeepSeek R1 Distill Qwen 2.5 32B \citep{deepseekai2025deepseekr1incentivizingreasoningcapability} \hspace{-6mm} & 96                                          & 108\tiny{$\color{Red}+12$}               & 91\tiny{$\color{Green}-5$}  & 90\tiny{$\color{Green}-6$}  \\
  Qwen3 32B (Reasoning enabled) \citep{qwen3}                                                                     & 93                                          & 96\tiny{$\color{Red}+3$}                 & 86\tiny{$\color{Green}-7$}  & 85\tiny{$\color{Green}-8$}  \\
  Qwen3 8B  (Reasoning enabled) \citep{qwen3}                                                                     & 103                                         & 104\tiny{$\color{Red}+1$}                & 90\tiny{$\color{Green}-13$} & 89\tiny{$\color{Green}-14$} \\
  \midrule
Generation time (seconds)                                                                                       & 1.96                                        & 3.60                                     & 6.99                        & 8.57                        \\
  Length (characters)                                                                                             & 107.56                                      & 224.98                                   & 287.31                      & 350.98                      \\
  \bottomrule
\end{tabular}

  }
  \vspace{-2mm}
\end{table}

Maybe this task is too complex for an instruction-tuned LLM. We thus turn to reasoning models, asking them to reason about all possibilities and then output a summary.
But \autoref{tab:reasoning} reinforces our negative result. Reasoning models do not perform any better.
Qualitatively, summaries produced by reasoning models are similar to the instruction-tuned LLMs with \emph{CoT} prompts: they list possibilities, as prompted, but these possibilities are not faithful to the LLM's actual internal distribution.

Last, we attempt to explicitly train LLMs to output self-summaries. We take a dataset of 10,000 \emph{Sample \& summarize} summaries from TriviaQA or Natural Questions (SimpleQA is too small) as good examples and perform supervised finetuning (SFT) and/or direct preference optimization (DPO, against 10,000 greedy answers as negative examples) on a Qwen 3 8B non-reasoning model. \cref{app:sft} shows that SFT reduces the SelfReflect score on the train data but neither on held-out validation questions from the same dataset nor on out-of-domain questions from the other dataset. This suggests that the model memorizes individual summaries rather than learning a general mechanism for accessing and summarizing its internal distribution. These experiments show that generating self-reflective summaries that are faithful to the model's internal uncertainty %
is a non-trivial new challenge.

\subsection{If it is not faithful, then which answers does Chain-of-Thoughts list?}

To understand our resounding negative result further, we compare summaries and the true internal distributions. As example case, we use \emph{CoT} summaries from Qwen2.5 72B Instruct, our largest model.

\begin{wrapfigure}{r}{0.38\textwidth}
        \centering
        \vspace{-3mm}
\includegraphics[width=0.65\linewidth]{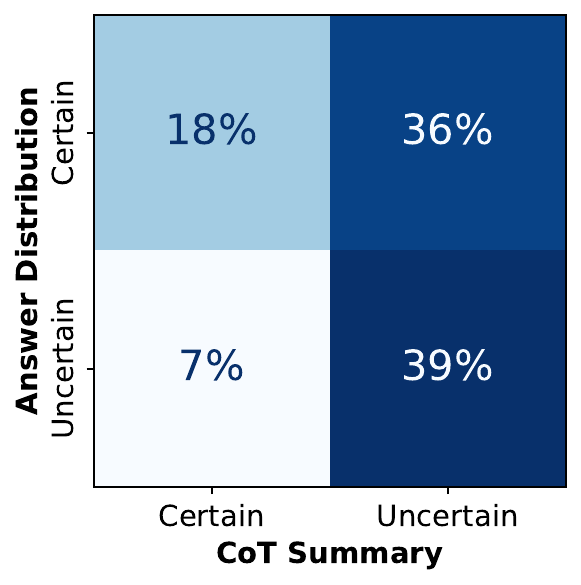}
        \caption{
            How often Qwen2.5 72B Instruct answer distributions span multiple possibilities vs how often their CoT summaries do.
        }
        \vspace{-4mm}\label{fig:cot_confusion}
\end{wrapfigure}

We first test whether \emph{CoT} correctly captures the \emph{spread} of the answer distribution, i.e., whether it focuses on a single answer when the true distribution is unimodal %
and includes multiple options when the true distribution is multimodal. %
We let Gemini 2.0 Flash classify whether the \emph{CoT} summaries and $a^{\of{1:N}}$ are certain (only mentioning one answer option) or uncertain (mentioning semantically different options, see also \cref{app:cot_deepdive}). \autoref{fig:cot_confusion} shows that for $36\%$ of the questions, its summary is uncertain even when the answer distribution samples are not, meaning it suggests multiple answers options that do not have high probability under the true distribution. The same holds in reverse; in fact, the cross-table reveals that \emph{CoT} generates certain or uncertain summaries nearly independently of whether the model's internal distribution is actually certain or uncertain.

Second, we investigate the possibilities mentioned in a summary. The most important possibility to cover is the ground-truth answer, so we use it as an anchor for this analysis. %
Following the best practices of \citet{santilli2025revisiting}, we measure the RougeL-Recall on Natural Questions' short answers, i.e., the longest substring of the true answer that appears in a summary, as percentage of the true answer's length. We find that \emph{Greedy} answers have an average overlap of $59.5\%$ with the true answers. \emph{Basic} summaries have $62.0\%$, \emph{CoT} summaries $64.0\%$, and \emph{Sample \& Summarize} summaries $65.6\%$. Evaluating with an LM Judge instead of RougeL-Recall shows the same trend, rating that $71.3\%$, $72.2\%$, $74.1\%$, and $76.0\%$ of the summaries include the true answer. In other words, summaries of the LLM's internal distributions are going in a promising direction in that they cover the true answer more often, they are just not faithful to the model's internal uncertainties (yet).

\section{Outlook}
\label{sec:discussion}

We present SelfReflect, a metric that judges how faithfully a single string represents a distribution over output strings. %
SelfReflect is intended to guide the field on a new avenue of expressing uncertainties: Developing methods to make LLMs honestly describe all possible answers to a prompt in one string. We have seen in our benchmark that this is a hard task, %
but a solution to this problem would be a fundamental building block in many applications: It provides a human-interpretable account of LLM uncertainty, which can be useful in building appropriate trust in the LLM's outputs. The string can also be fed back to the LLM, for example to reason about follow-up questions when a user query is ambiguous. \hl{Listing} all output possibilities is also a core necessity for conformal approaches, which are popular for classification but less explored for LLMs where the span of possible outputs is not immediately available. Finally, an accurate description of a distribution can also be recast into a numeric uncertainty value, thus generalizing traditional numeric and verbalized uncertainties.

\section{Limitations and design choices of the SelfReflect score} \label{app:limitations}

To outline the limitations of our work, we first note that 1-Wasserstein-based SelfReflect scores are not directly interpretable without baselines. A simplified version, like the percentage of equal top-predicted words using either summary or answer samples, would give more standardized values in [0,1]. However, we found that such an approach is less sensitive to differences in good vs \emph{almost}-good summaries, rendering it less useful as a benchmark metric. %

Second, seen from a summarization literature perspective, our SelfReflect metric intends to capture whether a summary faithfully represents the information of the model distribution. It does not intend to capture how short a summary is, so that concatenating 50 i.i.d. sampled answers as an adversarial summary would probably optimize the SelfReflect score without being a useful summary. From summarization literature we know that this is an orthogonal aspect that is better captured in a second metric like the summary length, so we appeal to always report summary lengths and qualitative samples along with the SelfReflect metric, as we do in this paper's results tables.

Third, we repeat that the faithfulness we measure is with respect to an LLM's \emph{subjective} uncertainty. We intentionally did not develop SelfReflect to quantify objective truthfulness, with the outlook that larger LLMs approximate their training datasets better and better, such that more faithful summaries of subjective uncertainties will ultimately lead to better objective uncertainties.

\section*{Reproducibility statement}

We intend to lay a foundation for a new avenue of communicating uncertainties with our work, and enable future researchers to contribute to it. Thus, we publish code to compute SelfReflect scores for arbitrary LLMs and summary-generating methods to enable standardized benchmarking. Besides code, we have added all prompts used throughout our experiments in the appendix, as well as all hyperparameters, and exemplary SelfReflect computations broken down to the word-level.

\section*{Acknowledgements}

We thank Eugene Ndiaye, Preetum Nakkiran, and Lukas Aichberger for feedback on a draft of this paper.

\applefootnote{ \textcolor{textgray}{\sffamily Apple and the Apple logo are trademarks of Apple Inc., registered in the U.S. and other countries and regions.}}

\bibliographystyle{abbrvnat}
\bibliography{main}

\newpage
\appendix
\crefalias{section}{appendix}

\startcontents[appendix]
\printcontents[appendix]{}{1}{\section*{Appendix Contents}\vspace{5mm}}
\newpage

\section{SelfReflect and predictive sufficiency: propositions and proofs} 
\label{app:theory}
\noindent In this appendix, we provide details of the propositions from the main text and their proofs.
We begin with the definition of predictive sufficiency and provide a proof of its two equivalent characterizations in the context of the SelfReflect metric.
We then prove an equivalence between solving the masked-token prediction task of the SelfReflect metric and the desired predictive sufficiency of the summary, providing a theoretical foundation for the design of the SelfReflect metric.

\begin{figure}[!h]
    \centering
    \begin{tikzpicture}[scale = 0.95]
        \draw[very thick, fill=lightgray!50!white] (0.5, 0.5) circle (0.5); 
        \node[scale=0.85] at (0.5, 0.5) {$\newSampleRV$}; 
        \draw[->, very thick] (0.5, 2)--(0.5, 1); 
        \draw[very thick] (0.5, 2.5) circle (0.5); 
        \node[scale=0.85] at (0.5, 2.5) {$\llmStateRV$}; 
        \draw[->, very thick] (1, 2.5)--(2, 2.5); 
        \draw[very thick, fill=lightgray!50!white] (2.5, 2.5) circle (0.5); 
        \node[scale=0.85] at (2.5, 2.5) {$A^{\of{1:N}}$}; 
        \draw[->, very thick] (3, 2.5)--(4, 2.5); 
        \draw[very thick, fill=lightgray!50!white] (4.5, 2.5) circle (0.5); 
        \node[scale=0.85] at (4.5, 2.5) {$S$}; 
    \end{tikzpicture}
    \caption{
        The graphical model for the setting of SelfReflect metric. 
        The figure is reproduced from Figure~\ref{fig:theory:graphical-model} of the main text for the sake of better readability of the formalization that follows. 
    }
    \label{fig:appendix:theory:graphical-model}
\end{figure}

\subsection{Setup, notations, and assumptions}
\label{appendix:theory:setup-notations-and-assumptions}
\noindent Recall that prompting a given LLM with question $Q$ puts it in state $\llmStateRV$, from which we sample $N$ answers $A^{\of{1:N}}$. 
A summarization mechanism function $\psi$ generates the summary of these answers as $S = \psi\of{A^{\of{1:N}}}$. 
For developing the SelfReflect metric, we generate another sample $\newSampleRV$ from the same state $\llmStateRV$ and require an ideal summary $S$ to capture all the information about $\newSampleRV$ that is captured by the samples $A^{\of{1:N}}$. 
Now, we formalize this setup of the SelfReflect metric by setting the notation, listing the assumptions of the setup, and providing their justifications. 

\subsubsection*{Setup and notation}
\label{appendix:theory:setup-notations-and-assumptions:setup-and-notation}
\begin{enumerate}
    \item Firstly, Figure~\ref{fig:theory:graphical-model} shows the graphical model of this setup, which we also reproduce here in Figure~\ref{fig:appendix:theory:graphical-model} for better readability. 
    In this graphical model, observed variables are shaded gray, which includes the sampled answers $A^{\of{1:N}}$, their summary $S$, and a subsequent answer $\newSampleRV$, whereas unobserved/latent variables are unshaded, which includes the LLM state $\llmStateRV$.
    \item We will use upper-case non-boldface letters (like $\newSampleRV$ or $S$) to represent random variables/vectors and the corresponding lower-case non-boldface letters (like $\newSampleInstance$ or $s$) to represent particular samples from their underlying distributions.
    \item For a random variable $Y$, the sampling of a particular value $y$ will be denoted as $y\sim Y$ or $y\in\support{Y}$, where $\support{Y}$ represents the support of the random variable $Y$. 
    \item Let $\mathbf{V}$ denote a finite vocabulary of words (or tokens), which is used to generate questions, the corresponding answers, and their summaries. 
    \item Let $Q$ denote the random variable for a question.
    \item Prompting the given LLM with this question $Q$ is assumed to put it in state, which is represented with the random variable $\llmStateRV$. From this state, we can sample multiple answers, which are then used to define the SelfReflect metric. 
    \item The random variables $A^{\of{1:N}} := \of{A^{\of{1}}, \cdots, A^{\of{N}}}$ are used to denote the $N$ answers sampled from the LLM in state $\llmStateRV$. 
    These samples may be sampled in an i.i.d. manner but we do not necessitate this. 
    In fact, one can sample each answer $A^{\of{n}}$ conditioned on all previous samples $A^{\of{1:n - 1}}$ as well. 
    We allow for this generality because throughout our derivation, we will always consider these answers jointly as $A^{\of{1:N}}$.
    \item A summarization mechanism inputs the sampled answers and generates their summary $S$.
    \item Suppose $\newSampleRV$ denote a subsequent sample from the LLM in the same state $\llmStateRV$. 
    For the SelfReflect metric, we require an idea summary $S$ of sampled answers $A^{\of{1:N}}$ to capture all information about this subsequent answer $\newSampleRV$. 
\end{enumerate}

\subsubsection*{Assumptions}
\label{appendix:theory:setup-notations-and-assumptions:assumptions}
\begin{enumerate}
    \item The support of question $Q$ is assumed to be the set of all finite-length sentences generated from $\mathbf{V}$, which we denote by $\mathcal{X}$. 
    \item The support of each $A^{\of{i}}$ is also assumed to be $\mathcal{X}$, the set of all finite-length sentences generated from vocabulary $\mathbf{V}$. 
    \item The summarization mechanism that inputs the sampled answers $A^{\of{1:N}}$ and generates their summary $S$ is assumed to be a function $\psi$. 
    Formally, $\psi: \mathcal{X}^N\longrightarrow \mathcal{X}$ inputs any $N$ sampled answers $A^{\of{1:N}}$ from the LLM and generates their summary $S$ as $S := \psi\of{A^{\of{1:N}}}$. 
    Note that the support of the summary $S$, will be a subset of the set of all finite-length sentences, i.e.,  $\support{S}\subseteq \mathcal{X}$. 
    This condition models our setup sufficiently well, where we have a candidate summary $S$ per set of answers $A^{\of{1:N}}$. 
    However, we acknowledge that it is a restrictive condition in that it doesn't allow for modeling a conditional distribution over all summaries given the answers. 
    Generalizing our SelfReflect metric for this case or proving its generality in this case is an interesting direction for future work. 
    \item We define the support of the subsequent new answer $\newSampleRV$ to be the set $\mathcal{X}_L := \mathbf{V}^L$ of all possible sentences from the vocabulary $\mathbf{V}$ that are of length $L$. 
    Despite being slightly restrictive, this assumption is not unreasonable; all LLMs have a maximum context length, which can be viewed as an upper limit on the length of the answer $\newSampleRV$. 
    Also, sentences with smaller lengths are usually padded to achieve the maximum context length. 
    \item Throughout our derivations, we will assume all required marginal and conditional distributions to be strictly positive. 
    This assumption is reasonable for our setting because in practice, we would be implementing corresponding distributions using the given LLM. 
    For instance, $\probability{}{}{W}$ would represent the probability of sentence $W$ under the given LLM.
    Further, $\conditionalProbability{}{}{Y}{Z}$ would represent the probability of sentence $Y$ when the LLM is prompted with the context $Z$.
    Since the LLMs generate distribution over the entire vocabulary $\mathbf{V}$, all the conditional distributions will have strictly positive values, albeit extremely small in certain cases. 
\end{enumerate}

\subsection{Predictive sufficiency and equivalent characterizations}
\label{appendix:theory:predictive-sufficiency}
\noindent Now, having set the notations and assumptions, we define the notion of sufficiency and connect it with the definition of an ideal summary. 
\begin{definition}{\textbf{(Bayesian and Predictive Sufficiency~\citep{bernardo2009bayesian})}}{appendix:theory:predictive-sufficiency:definition}
    Consider a distribution parameterized in terms of a parameter $\phi$. 
    Let $X^{\of{1:M}}$ denote $M$ (i.i.d.) samples from this distribution. 
    A statistic (function) $T\of{X^{1:M}}$ is called a \textbf{Bayesian sufficient statistic} of samples $X^{\of{1:M}}$ for $\phi$ if and only if we have: 
    $
        \conditionalProbability{}{}{
            \phi
        }{
            X^{\of{1:M}} = x^{\of{1:M}}
        }
        =
        \conditionalProbability{}{}{
            \phi
        }{
            T\of{X^{\of{1:M}}} = t\of{x^{\of{1:M}}}
        }
    $.
    On the other hand, it is called a \textbf{predictive sufficient statistic} of samples $X^{\of{1:M}}$ if and only if we have: 
    $
        \conditionalProbability{}{}{
            X = x
        }{
            X^{\of{1:M}} = x^{\of{1:M}}
        }
        =
        \conditionalProbability{}{}{
            X = x
        }{
            T\of{X^{\of{1:M}}} = t\of{x^{\of{1:M}}}
        }
    $ for any subsequent sample $X$ (with concrete value $x\in\support{X}$) from the same distribution. 
\end{definition}
\noindent Note that our Definition~\ref{def:theory:axiom-of-ideal-sufficiency} of an ideal summary is closely related to predictive sufficiency as defined in Definition~\ref{def:appendix:theory:predictive-sufficiency:definition}.
However, it turns out that Bayesian and predictive sufficiency notions are not exactly equivalent. 
In light of this, our reason for defining an ideal summary to be predictive sufficient, rather than Bayesian sufficient, is as follows. 
An LLM trained on a huge corpus of data contains information about a wide array of aspects. 
However, through the summary, we are interested in capturing only those aspects of the state $\llmStateRV$ of the LLM that are related to answering the given question $Q$. 
For this, requiring the summary to be predictive sufficient serves the purpose precisely. 

\noindent Now, in the context of the Definition~\ref{def:appendix:theory:predictive-sufficiency:definition} of predictive sufficiency, Definition~\ref{def:theory:axiom-of-ideal-sufficiency} of ideal summary, and the graphical model of Figure~\ref{fig:appendix:theory:graphical-model}, we prove Proposition~\ref{thm:theory:connection-to-predictive-sufficiency}, which asserts the equivalence in the information theoretic and conditional distribution based formulations of the ideal summary. 
We begin by proving a lemma about the graphical model of Figure~\ref{fig:appendix:theory:graphical-model}.

\begin{lemma}{\textbf{(Conditioning on $A^{\of{1:N}}$ and $S$)}}{appendix:theory:dependence-on-A1N=and-S}
    Under the graphical model given in Figure~\ref{fig:appendix:theory:graphical-model}, we have:
    $$
        \conditionalProbability{}{}{\newSampleRV}{A^{\of{1:N}}, S}
        =
        \conditionalProbability{}{}{\newSampleRV}{A^{\of{1:N}}}
    $$
\end{lemma}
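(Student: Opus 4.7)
The plan is to exploit the fact that the summarization mechanism $\psi$ is assumed to be a deterministic function (so $S = \psi(A^{(1:N)})$), together with the conditional independence structure of the graphical model in Figure~\ref{fig:appendix:theory:graphical-model}. First I would argue, by d-separation, that $\newSampleRV \perp S \mid A^{(1:N)}$: the only undirected path between $\newSampleRV$ and $S$ in the DAG is $\newSampleRV \leftarrow \llmStateRV \rightarrow A^{(1:N)} \rightarrow S$, and conditioning on the chain node $A^{(1:N)}$ blocks it. This conditional independence immediately yields the claim $\conditionalProbability{}{}{\newSampleRV}{A^{(1:N)}, S} = \conditionalProbability{}{}{\newSampleRV}{A^{(1:N)}}$.

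To make the argument self-contained without invoking d-separation as a black box, I would give a direct computation using the determinism of $\psi$. Fix any realization $a^{(1:N)}$ in the support of $A^{(1:N)}$ and let $s^{\star} := \psi(a^{(1:N)})$. Since $S$ is a deterministic function of $A^{(1:N)}$, we have the factorization
\begin{equation*}
    \conditionalProbability{}{}{S = s}{A^{(1:N)} = a^{(1:N)}} = \mathbb{1}\bigl[s = s^{\star}\bigr],
\end{equation*}
and therefore
\begin{equation*}
    \probability{}{}{A^{(1:N)} = a^{(1:N)}, S = s} = \probability{}{}{A^{(1:N)} = a^{(1:N)}}\, \mathbb{1}\bigl[s = s^{\star}\bigr].
\end{equation*}
The same indicator appears in the joint with $\newSampleRV$, because $S$ is determined by $A^{(1:N)}$ alone and in particular does not depend on $\newSampleRV$ given $A^{(1:N)}$:
\begin{equation*}
    \probability{}{}{\newSampleRV = b, A^{(1:N)} = a^{(1:N)}, S = s} = \probability{}{}{\newSampleRV = b, A^{(1:N)} = a^{(1:N)}}\, \mathbb{1}\bigl[s = s^{\star}\bigr].
\end{equation*}
Taking the ratio of these two expressions cancels the indicator for the only value $s = s^{\star}$ that has positive joint probability, giving $\conditionalProbability{}{}{\newSampleRV = b}{A^{(1:N)} = a^{(1:N)}, S = s^{\star}} = \conditionalProbability{}{}{\newSampleRV = b}{A^{(1:N)} = a^{(1:N)}}$, which is exactly the claim.

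The main subtlety, rather than an obstacle, is the handling of the conditioning event: the statement is only meaningful on the set of $(a^{(1:N)}, s)$ pairs with $s = \psi(a^{(1:N)})$, since other pairs have joint probability zero under the graphical model. I would state this explicitly so that the equality in the lemma is understood to hold for all $(a^{(1:N)}, s)$ in the support of $(A^{(1:N)}, S)$, which is consistent with the strict-positivity assumption listed in Appendix~\ref{appendix:theory:setup-notations-and-assumptions}. With that caveat in place, the lemma reduces to the two lines of algebra above and provides the clean conditional independence that will be reused when proving Proposition~\ref{thm:theory:connection-to-predictive-sufficiency}.
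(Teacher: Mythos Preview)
Your proof is correct, but it takes a different route from the paper's. The paper proves the lemma by marginalizing over the latent state $\llmStateRV$: it writes $\conditionalProbability{}{}{\newSampleRV}{A^{\of{1:N}}, S}$ as an integral over $\theta$, factorizes the joint $\probability{}{}{\llmStateRV, \newSampleRV, A^{\of{1:N}}, S}$ according to the graphical model, cancels the $\conditionalProbability{}{}{S}{A^{\of{1:N}}}$ factor appearing in both numerator and denominator, and then integrates $\llmStateRV$ back out to recover $\conditionalProbability{}{}{\newSampleRV}{A^{\of{1:N}}}$. Your argument instead never introduces $\llmStateRV$ at all: you either invoke d-separation directly, or exploit the determinism of $\psi$ to show that $\conditionalProbability{}{}{S}{\newSampleRV, A^{\of{1:N}}}$ is the indicator $\mathbb{1}[s=\psi(a^{\of{1:N}})]$, which cancels in the ratio.

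Your approach is more elementary and shorter; the d-separation line is essentially a one-sentence proof. The paper's approach, on the other hand, does not rely on $\psi$ being deterministic: the cancellation of $\conditionalProbability{}{}{S}{A^{\of{1:N}}}$ goes through for any conditional distribution over $S$ whose only parent is $A^{\of{1:N}}$, so the paper's computation would still hold verbatim if the summarization mechanism were stochastic. Your d-separation sketch shares this generality, but your ``self-contained'' computation as written leans on determinism; replacing the indicator by a generic $\conditionalProbability{}{}{S}{A^{\of{1:N}}}$ would recover the same generality and bring your direct calculation closer in spirit to the paper's. Your explicit remark about the support of $(A^{\of{1:N}}, S)$ is a nice addition that the paper leaves implicit.
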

\begin{proof}
    Consider the following manipulations:
    \begin{align}
        \label{proof:appendix:theory:dependence-on-A1N=and-S:proof-1}
        &
        \conditionalProbability{}{}{\newSampleRV}{A^{\of{1:N}}, S}
        =^{\of{1}}
        \int\nolimits_{\theta}
        \mathrm{d}\theta\,\,
        \conditionalProbability{}{}{\newSampleRV, \llmStateRV = \theta}{A^{\of{1:N}}, S}
        \nonumber\\
        &\qquad
        =^{\of{2}}
        \int\nolimits_{\theta}
        \mathrm{d}\theta\,\,
        \frac{
            \probability{}{}{\llmStateRV = \theta, \newSampleRV, A^{\of{1:N}}, S} 
        }{
            \probability{}{}{A^{\of{1:N}}, S}
        }
        \nonumber\\
        &\qquad
        =^{\of{3}}
        \int\nolimits_{\theta}
        \mathrm{d}\theta\,\,
        \frac{
            \probability{}{}{\llmStateRV = \theta} 
            \cdot
            \conditionalProbability{}{}{\newSampleRV}{\llmStateRV = \theta}
            \cdot
            \conditionalProbability{}{}{A^{\of{1:N}}}{\llmStateRV = \theta}
            \cdot
            \conditionalProbability{}{}{S}{A^{\of{1:N}}}
        }{
            \probability{}{}{A^{\of{1:N}}}
            \cdot
            \conditionalProbability{}{}{S}{A^{\of{1:N}}}
        }
        \nonumber\\
        &\qquad
        =^{\of{4}}
        \int\nolimits_{\theta}
        \mathrm{d}\theta\,\,
        \frac{
            \probability{}{}{\llmStateRV = \theta} 
            \cdot
            \conditionalProbability{}{}{\newSampleRV}{\llmStateRV = \theta}
            \cdot
            \conditionalProbability{}{}{A^{\of{1:N}}}{\llmStateRV = \theta}
        }{
            \probability{}{}{A^{\of{1:N}}}
        }
        \nonumber\\
        &\qquad
        =^{\of{5}}
        \int\nolimits_{\theta}
        \mathrm{d}\theta\,\,
        \frac{
            \probability{}{}{\llmStateRV = \theta, \newSampleRV, A^{\of{1:N}}} 
        }{
            \probability{}{}{A^{\of{1:N}}}
        }
        \nonumber\\
        &\qquad
        =^{\of{6}}
        \int\nolimits_{\theta}
        \mathrm{d}\theta\,\,
        \conditionalProbability{}{}{\newSampleRV, \llmStateRV = \theta}{A^{\of{1:N}}}
        =^{\of{7}}
        \conditionalProbability{}{}{\newSampleRV}{A^{\of{1:N}}}
    \end{align}
    \noindent Here, steps $\of{2}, \of{5}, \of{6}$ follow from chain rule. 
    Step $\of{4}$ follows by cancellation of the common terms. 
    Steps $\of{1}, \of{7}$ follows from integrating out variable $\llmStateRV$. 
    Step $\of{3}$ follows from the graphical model of Figure~\ref{fig:appendix:theory:graphical-model}. 
    Finally, an analogous derivation would follow by replacing integration with summation in the case of $\llmStateRV$ being a discrete variable. 
\end{proof}

\noindent Now, we prove Proposition~\ref{thm:theory:connection-to-predictive-sufficiency} establishing the equivalence of the information theoretic and conditional distribution based formulations of the desired predictive sufficiency.

\begin{theorem}{\textbf{(Connection of SelfReflect to Predictive Sufficiency)}}{appendix:theory:connection-of-selfreflect-to-predictive-sufficiency}
    Consider the graphical model given in Figure~\ref{fig:appendix:theory:graphical-model}. 
    Under this graphical model, for ideal summary $S$ of answers $A^{\of{1:N}}$, 
    $$
        \mutualInformation{}{}{A^{\of{1:N}}}{\newSampleRV}
        =
        \mutualInformation{}{}{S}{\newSampleRV}
        \ifAndOnlyIf
        \conditionalProbability{}{}{\newSampleRV}{A^{\of{1:N}}}
        =
        \conditionalProbability{}{}{\newSampleRV}{S}
    $$
\end{theorem}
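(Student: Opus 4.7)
The plan is to establish the equivalence via the chain rule of mutual information together with the structural fact that $S=\psi(A^{(1:N)})$ is a deterministic function of the answers. Under the graphical model of Figure~\ref{fig:appendix:theory:graphical-model}, this deterministic relationship implies that conditioning on $A^{(1:N)}$ already determines $S$, so $\mutualInformation{}{}{S}{\newSampleRV\mid A^{(1:N)}} = 0$. Expanding the joint mutual information $\mathcal{I}\{A^{(1:N)},S;\newSampleRV\}$ via the chain rule in two different orders then yields
\begin{equation*}
    \mutualInformation{}{}{A^{(1:N)}}{\newSampleRV} + 0 = \mutualInformation{}{}{S}{\newSampleRV} + \conditionalMutualInformation{}{}{A^{(1:N)}}{\newSampleRV}{S},
\end{equation*}
so the hypothesis $\mutualInformation{}{}{A^{(1:N)}}{\newSampleRV} = \mutualInformation{}{}{S}{\newSampleRV}$ is equivalent to $\conditionalMutualInformation{}{}{A^{(1:N)}}{\newSampleRV}{S} = 0$.

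Next I would translate this vanishing conditional mutual information into the desired conditional-distribution equality. Under the positivity assumptions listed in Appendix~\ref{appendix:theory:setup-notations-and-assumptions:assumptions}, $\conditionalMutualInformation{}{}{A^{(1:N)}}{\newSampleRV}{S} = 0$ holds iff $A^{(1:N)}$ and $\newSampleRV$ are conditionally independent given $S$, which in turn is equivalent to $\conditionalProbability{}{}{\newSampleRV}{A^{(1:N)}, S} = \conditionalProbability{}{}{\newSampleRV}{S}$. Applying Lemma~\ref{lemma:appendix:theory:dependence-on-A1N=and-S}, whose proof has already been carried out, the left-hand side simplifies to $\conditionalProbability{}{}{\newSampleRV}{A^{(1:N)}}$, yielding the equivalence $\conditionalProbability{}{}{\newSampleRV}{A^{(1:N)}} = \conditionalProbability{}{}{\newSampleRV}{S}$.

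Putting these two chains together gives the biconditional in both directions: mutual-information equality $\Leftrightarrow$ $\conditionalMutualInformation{}{}{A^{(1:N)}}{\newSampleRV}{S} = 0$ $\Leftrightarrow$ $\conditionalProbability{}{}{\newSampleRV}{A^{(1:N)}, S} = \conditionalProbability{}{}{\newSampleRV}{S}$ $\Leftrightarrow$ $\conditionalProbability{}{}{\newSampleRV}{A^{(1:N)}} = \conditionalProbability{}{}{\newSampleRV}{S}$. The only delicate point is checking that the deterministic-summary structure $S=\psi(A^{(1:N)})$ is used correctly in two places: once to zero out $\mutualInformation{}{}{S}{\newSampleRV\mid A^{(1:N)}}$ in the chain rule expansion, and once (implicitly, via Lemma~\ref{lemma:appendix:theory:dependence-on-A1N=and-S}) to collapse $\conditionalProbability{}{}{\newSampleRV}{A^{(1:N)},S}$ back to $\conditionalProbability{}{}{\newSampleRV}{A^{(1:N)}}$. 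I expect this bookkeeping, rather than any analytic difficulty, to be the main thing to get right; the rest is a routine application of information-theoretic identities under the assumed strict positivity of all involved distributions.
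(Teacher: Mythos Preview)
Your proposal is correct and follows essentially the same logic as the paper: both reduce the mutual-information equality to $\conditionalMutualInformation{}{}{A^{(1:N)}}{\newSampleRV}{S}=0$ and then invoke Lemma~\ref{lemma:appendix:theory:dependence-on-A1N=and-S} to pass between $\conditionalProbability{}{}{\newSampleRV}{A^{(1:N)},S}$ and $\conditionalProbability{}{}{\newSampleRV}{A^{(1:N)}}$. Your chain-rule expansion of $\mutualInformation{}{}{A^{(1:N)},S}{\newSampleRV}$ (using determinism of $S$ to kill $\conditionalMutualInformation{}{}{S}{\newSampleRV}{A^{(1:N)}}$) is a slightly cleaner route to that intermediate step than the paper's direct manipulation of the log-ratio expectations, but the underlying argument is identical.
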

\begin{proof}
    Consider following steps:
    \begin{align}
    \label{proof:appendix:theory:connection-of-selfreflect-to-predictive-sufficiency:proof-1}
        &
        \mutualInformation{}{}{A^{\of{1:N}}}{\newSampleRV}
        =
        \mutualInformation{}{}{S}{\newSampleRV}
        \ifAndOnlyIf^{\of{1}}
        \expectation{A^{\of{1:N}}, \newSampleRV}{}{
            \log \frac{
                \probability{}{}{A^{\of{1:N}}, \newSampleRV}
            }{
                \probability{}{}{A^{\of{1:N}}}
                \cdot
                \probability{}{}{\newSampleRV}
            }
        }
        =
         \expectation{S, \newSampleRV}{}{
            \log \frac{
                \probability{}{}{S, \newSampleRV}
            }{
                \probability{}{}{S}
                \cdot
                \probability{}{}{\newSampleRV}
            }
        }
        \nonumber\\
        &
        \ifAndOnlyIf
        \expectation{\newSampleRV, A^{\of{1:N}}, S}{}{
            \log \frac{
                \probability{}{}{A^{\of{1:N}}, \newSampleRV}
                \cdot 
                \probability{}{}{S}
            }{
                \probability{}{}{S, \newSampleRV}
                \cdot
                \probability{}{}{A^{\of{1:N}}}
            }
        }
        =
        0
        \ifAndOnlyIf^{\of{2}}
        \expectation{\newSampleRV, A^{\of{1:N}}, S}{}{
            \log\frac{
                \conditionalProbability{}{}{\newSampleRV}{A^{\of{1:N}}}
            }{
                \conditionalProbability{}{}{\newSampleRV}{S}
            }
        }
        =
        0
        \nonumber\\
        &
        \ifAndOnlyIf^{\of{3}}
        \expectation{\newSampleRV, A^{\of{1:N}}, S}{}{
            \log\frac{
                \conditionalProbability{}{}{\newSampleRV}{A^{\of{1:N}}, S}
            }{
                \conditionalProbability{}{}{\newSampleRV}{S}
            }
        }
        =
        0
        \ifAndOnlyIf^{\of{4}}
        \conditionalMutualInformation{}{}{A}{A^{\of{1:N}}}{S}
        =
        0
        \nonumber\\
        &
        \ifAndOnlyIf^{\of{5}}
        \conditionalProbability{}{}{\newSampleRV, A^{\of{1:N}}}{S}
        =
        \conditionalProbability{}{}{\newSampleRV}{S}
        \cdot
        \conditionalProbability{}{}{A^{\of{1:N}}}{S}
        \nonumber\\
        &
        \ifAndOnlyIf^{\of{6}}
        \conditionalProbability{}{}{\newSampleRV}{A^{\of{1:N}}, S}
        =
        \conditionalProbability{}{}{\newSampleRV}{S}
        \ifAndOnlyIf^{\of{7}}
        \conditionalProbability{}{}{\newSampleRV}{A^{\of{1:N}}}
        =
        \conditionalProbability{}{}{\newSampleRV}{S}
    \end{align}
    \noindent Here, step $\of{1}$ follows from the definition of mutual information, steps $\of{2}$ and $\of{6}$ from chain rule, steps $\of{3}$ and $\of{7}$ from Lemma~\ref{lemma:appendix:theory:dependence-on-A1N=and-S}, step $\of{4}$ from the definition of conditional mutual information, and step $\of{5}$ from the equality condition of conditional mutual information. 
    For details on mutual information and conditional mutual information, we refer the reader to~\cite{cover1999elements}. 
\end{proof}

\subsection{SelfReflect metric and equivalence to predictive sufficiency}
\label{appendix:theory:selfreflect-predictive-sufficiency}
\noindent Now, we demonstrate that the masked-token prediction task of SelfReflect is equivalent to the above notion of predictive sufficiency. 
For the SelfReflect metric, we consider the random variable $\newSampleRV$ for a new subsequent sample from the LLM in state $\llmStateRV$ and dissect it in terms of its words. 
In particular, we have: $\newSampleRV \equiv \of{\newSampleRV_1, \cdots, \newSampleRV_L}$, where $L$ is length of the sentence $\newSampleRV$ (which, as we saw, could be chosen to be the maximum context length for the LLM). 
Here, $\newSampleRV_i$ represents the random variable for the $i-$th word of the sentence $\newSampleRV$ for each value of $i\in\setOf{1, \cdots, L}$. 
For each $i$, we use the shorthand notation $\newSampleRV_{-i}$ to represent the variable for all the words in the sentence $\newSampleRV$ except for $\newSampleRV_i$, i.e., $\newSampleRV_{-i} := \of{\newSampleRV_1, \cdots, \newSampleRV_{i - 1}, \newSampleRV_{i + 1}, \cdots, \newSampleRV_L} = \of{\newSampleRV_j}_{j \ne i}$. 
Note that $\newSampleRV_\ell$, which represents the $\ell-$th word of sentence $\newSampleRV$, is not to be confused with $A^{\of{k}}$, which represents the $k-$th sampled answer from the LLM.
For each $\newSampleRV_i$, its support is going to be the vocabulary $\mathbf{V}$ and the supports of $\newSampleRV_{-i}$ and $\newSampleRV$ are $\mathbf{V}^{L - 1}$ and $\mathbf{V}^L \equiv\mathcal{X}_L$ respectively. 
With this setup, we can prove Proposition~\ref{thm:theory:informal-towards-self-reflect-metric}, which asserts that under assumptions from subsection~\ref{appendix:theory:setup-notations-and-assumptions}, SelfReflect metric provides an equivalent formulation of the desired predictive sufficiency of ideal summary $S$. 
This is done as follows. 

\begin{theorem}{\textbf{(SelfReflect Metric and Predictive Sufficiency)}}{appendix:theory:selfreflect-predictive-sufficiency:core-theorem}
    Suppose all involved 
    conditionals are modeled via the given LLM and hence, are strictly positive. 
    Then:
    \begin{equation}
    \footnotesize
        \conditionalProbability{}{}{\newSampleRV}{A^{\of{1:N}}}
        =
        \conditionalProbability{}{}{\newSampleRV}{S}
        \ifAndOnlyIf
        \ \textrm{for all masking indices $i$},\ 
        \conditionalProbability{}{}{
            \newSampleRV_i
        }{
            A^{\of{1:N}}, \newSampleRV_{-i}
        }
        =
        \conditionalProbability{}{}{
            \newSampleRV_i
        }{
            S, \newSampleRV_{-i}
        }
    \end{equation}
\end{theorem}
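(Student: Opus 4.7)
The statement is an equivalence, so my plan is to handle each direction separately. The forward implication will be essentially bookkeeping via marginalization, whereas the reverse implication is the substantive direction and rests on a Brook-type reconstruction of a joint distribution from its full conditionals.

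For the forward direction ($\Longrightarrow$), I would assume $\conditionalProbability{}{}{\newSampleRV}{A^{\of{1:N}}} = \conditionalProbability{}{}{\newSampleRV}{S}$. Writing $\newSampleRV \equiv (\newSampleRV_i, \newSampleRV_{-i})$, I marginalize out $\newSampleRV_i$ on both sides to obtain $\conditionalProbability{}{}{\newSampleRV_{-i}}{A^{\of{1:N}}} = \conditionalProbability{}{}{\newSampleRV_{-i}}{S}$. Taking the ratio of the two identities, which is valid because the assumed strict positivity of all involved conditionals propagates to these marginals on the finite support $\mathcal{X}_L$, yields the masked-token equality for every index $i$ and every $\newSampleInstance_{-i}$.

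For the reverse direction ($\Longleftarrow$), I would invoke Brook's lemma. For any strictly positive distribution over the finite product space $\mathcal{X}_L = \mathbf{V}^L$, fixing a reference point $\newSampleInstance^{\star} \in \mathcal{X}_L$ and telescoping a coordinate-by-coordinate path from $\newSampleInstance^{\star}$ to any target $\newSampleInstance$ gives
\[
\frac{\conditionalProbability{}{}{\newSampleRV = \newSampleInstance}{C}}{\conditionalProbability{}{}{\newSampleRV = \newSampleInstance^{\star}}{C}}
=
\prod_{i=1}^{L}
\frac{
\conditionalProbability{}{}{\newSampleRV_i = \newSampleInstance_i}{\newSampleRV_{-i} = u^{\of{i}}, C}
}{
\conditionalProbability{}{}{\newSampleRV_i = \newSampleInstance^{\star}_i}{\newSampleRV_{-i} = u^{\of{i}}, C}
},
\]
where $u^{\of{i}} := (\newSampleInstance_1,\ldots,\newSampleInstance_{i-1},\newSampleInstance^{\star}_{i+1},\ldots,\newSampleInstance^{\star}_L)$ is the hybrid context whose first $i-1$ coordinates come from $\newSampleInstance$ and whose last $L-i$ come from $\newSampleInstance^{\star}$. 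Each factor is a ratio of full conditionals and is well-defined by strict positivity. Applying this once with $C = A^{\of{1:N}}$ and once with $C = S$, the masked-token hypothesis makes every factor on the right-hand side identical in the two cases, so the two ratios agree. Summing $\conditionalProbability{}{}{\newSampleRV = \newSampleInstance}{C}$ over $\newSampleInstance \in \mathcal{X}_L$ and using normalization in each case forces $\conditionalProbability{}{}{\newSampleRV = \newSampleInstance^{\star}}{A^{\of{1:N}}} = \conditionalProbability{}{}{\newSampleRV = \newSampleInstance^{\star}}{S}$, and multiplying back through by the common ratio gives pointwise equality of the two full conditional distributions.

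The main obstacle is less a technical difficulty than a careful bookkeeping one: each step of the telescoping must flip only a single coordinate so that the corresponding numerator and denominator differ only in that coordinate and hence collapse to a ratio of full conditionals that the hypothesis controls. Because the support $\mathcal{X}_L$ is the full finite product $\mathbf{V}^L$ and the conditionals are assumed strictly positive throughout, there are no support-connectivity issues and no zero denominators, so Brook's lemma applies in its simplest form and the argument goes through cleanly.
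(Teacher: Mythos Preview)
Your proposal is correct and matches the paper's own proof essentially step for step: the forward direction is obtained by marginalizing and dividing, and the reverse direction is the same coordinate-by-coordinate telescoping from a reference point $\newSampleInstance^\star$ to the target $\newSampleInstance$, followed by a normalization argument to show the common ratio equals $1$. The only cosmetic differences are that you cite the telescoping identity as Brook's lemma whereas the paper derives it explicitly, and your hybrid path $u^{(i)}$ runs in the opposite direction to the paper's sequence $x^{(\ell)}$; neither affects the argument.
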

\begin{proof}
    \noindent $\of{\impliesThat}$\ 
    Suppose we are given that $
        \conditionalProbability{}{}{\newSampleRV}{A^{\of{1:N}}}
        =
        \conditionalProbability{}{}{\newSampleRV}{S}
    $. 
    Consider the following steps:
    \begin{align}
    \label{proof:appendix:theory:selfreflect-predictive-sufficiency:core-theorem:proof-1}
    \footnotesize
        &
        \conditionalProbability{}{}{\newSampleRV}{A^{\of{1:N}}}
        =
        \conditionalProbability{}{}{\newSampleRV}{S}
        \impliesThat
        \conditionalProbability{}{}{\newSampleRV_1, \cdots, \newSampleRV_L}{A^{\of{1:N}}}
        =
        \conditionalProbability{}{}{\newSampleRV_1, \cdots, \newSampleRV_L}{S}
        \nonumber\\
        &
        \impliesThat
        \sum\nolimits_{\newSampleInstance_i\in\mathbf{V}}
        \conditionalProbability{}{}{\newSampleRV_1, \cdots, \newSampleRV_i = \newSampleInstance_i, \cdots, \newSampleRV_L}{A^{\of{1:N}}}
        =
        \sum\nolimits_{\newSampleInstance_i\in\mathbf{V}}
        \conditionalProbability{}{}{\newSampleRV_1, \cdots, \newSampleRV_i = \newSampleInstance_i, \cdots, \newSampleRV_L}{S}
        \nonumber\\
        &
        \impliesThat^{\of{1}}
        \conditionalProbability{}{}{\newSampleRV_{-i}}{A^{\of{1:N}}}
        =
        \conditionalProbability{}{}{\newSampleRV_{-i}}{S}
    \end{align}
    \noindent Here, step $\of{1}$ follows from integrating out variable $\newSampleRV_i$. 
    Combining this result with the premise gives:
    \begin{multline}
    \label{proof:appendix:theory:selfreflect-predictive-sufficiency:core-theorem:proof-2}
        \conditionalProbability{}{}{\newSampleRV}{A^{\of{1:N}}}
        =
        \conditionalProbability{}{}{\newSampleRV}{S},\ 
        \conditionalProbability{}{}{\newSampleRV_{-i}}{A^{\of{1:N}}}
        =
        \conditionalProbability{}{}{\newSampleRV_{-i}}{S}
        \\
        \impliesThat
        \frac{
            \conditionalProbability{}{}{\newSampleRV}{A^{\of{1:N}}}
        }{
            \conditionalProbability{}{}{\newSampleRV_{-i}}{A^{\of{1:N}}}
        }
        =
        \frac{
            \conditionalProbability{}{}{\newSampleRV}{S}
        }{
            \conditionalProbability{}{}{\newSampleRV_{-i}}{S}
        }
        \impliesThat^{\of{1}}
        \conditionalProbability{}{}{\newSampleRV_i}{A^{\of{1:N}}, \newSampleRV_{-i}}
        =
        \conditionalProbability{}{}{\newSampleRV_i}{S, \newSampleRV_{-i}}
    \end{multline}
    Here, step $\of{1}$ follows because $\newSampleRV$ is formed of the $i-$th word $\newSampleRV_i$ and the rest of the words $\newSampleRV_{-i}$. 
    Since we can carry out these steps for any index $i$, we prove the forward direction of the theorem.
    
    \noindent $\of{\isImpliedBy}$\ Now, to prove the converse, suppose we are given that for all masking indices $i$, we have: 
    $
        \conditionalProbability{}{}{
            \newSampleRV_i
        }{
            A^{\of{1:N}}, \newSampleRV_{-i}
        }
        =
        \conditionalProbability{}{}{
            \newSampleRV_i
        }{
            S, \newSampleRV_{-i}
        }
    $ 
    and we have to prove that $
        \conditionalProbability{}{}{\newSampleRV}{A^{\of{1:N}}}
        =
        \conditionalProbability{}{}{\newSampleRV}{S}
    $.
    Since this is an equality of the random variables, we prove the equality of random variables by proving it for any and all choices of the samples of those random variables.
    Note that this works because of the assumption of summary mechanism $S$ being a function of $A^{\of{1:N}}$, which allows us to use the given condition as well as prove the desired result by assuming particular instantiations of $A^{\of{1:N}} = \bar{a}^{\of{1:N}}$ and using the corresponding summary $S = \bar{s} := \psi\of{\bar{a}^{\of{1:N}}}$. 
    Pick any instantiations of sampled answers from their support as $a^{\of{1:N}}\sim A^{\of{1:N}}$.
    Since the summary mechanism is a function, it gives us a concrete sample $s = \psi\of{a^{\of{1:N}}}\in \mathcal{X}$. 
    Now, suppose we want to prove the desired result for any particular given sample $\newSampleInstance\sim \newSampleRV$ with $\newSampleInstance := \of{\newSampleInstance_1, \cdots, \newSampleInstance_L}\in \mathbf{V}^L$. 
    Consider a fixed sentence $\newSampleInstance^\ast\in\mathbf{V}^L$ with $\newSampleInstance^\ast := \of{\newSampleInstance^\ast_1, \cdots, \newSampleInstance^\ast_L}$. 
    Now, we define a sequence of sentences as follows:
    \begin{align}
    \label{proof:appendix:theory:selfreflect-predictive-sufficiency:core-theorem:proof-3}
        x^{\of{0}} 
        &
        := 
        \of{\newSampleInstance_1, \newSampleInstance_2, \cdots, \newSampleInstance_L} 
        = 
        \newSampleInstance\in \mathbf{V}^L
        \nonumber\\
        x^{\of{1}}
        &
        :=
        \of{\newSampleInstance^\ast_1, \newSampleInstance_2, \cdots, \newSampleInstance_L}
        \in \mathbf{V}^L
        \nonumber\\
        x^{\of{2}}
        &
        :=
        \of{\newSampleInstance^\ast_1, \newSampleInstance^\ast_2, \cdots, \newSampleInstance_L}
        \in \mathbf{V}^L
        \nonumber\\
        &
        \vdots
        \nonumber\\
        x^{\of{L}}
        &
        :=
        \of{\newSampleInstance^\ast_1, \newSampleInstance^\ast_2, \cdots, \newSampleInstance^\ast_L}
        =
        \newSampleInstance^\ast\in \mathbf{V}^L
    \end{align}
    Intuitively, we create a sequence of sentences where each subsequent sentence $x^{\of{i}}$ differs from the previous sentence and the next sentence in exactly one word and as we go from sentence $x^{\of{0}}$ to $x^{\of{L}}$, we change the given sentence $\newSampleInstance$ to the fixed sentence $\newSampleInstance^\ast$. 
    Now, we consider the following manipulations for $\conditionalProbability{}{}{\newSampleRV = \newSampleInstance}{A^{\of{1:N}} = a^{\of{1:N}}}$:
    \begin{align}
    \label{proof:appendix:theory:selfreflect-predictive-sufficiency:core-theorem:proof-4}
        &
        \conditionalProbability{}{}{
            \newSampleRV = \newSampleInstance
        }{
            A^{\of{1:N}} = a^{\of{1:N}}
        }
        =
        \conditionalProbability{}{}{
            \newSampleRV = x^{\of{0}}
        }{
            A^{\of{1:N}} = a^{\of{1:N}}
        }
        \nonumber\\
        &
        =^{\of{1}}
        \conditionalProbability{}{}{\newSampleRV = x^{\of{0}}}{A^{\of{1:N}} = a^{\of{1:N}}}
        \cdot
        \prod\nolimits_{\ell = 1}^L
        \frac{
            \conditionalProbability{}{}{
                \newSampleRV = x^{\of{\ell}}
            }{
                A^{\of{1:N}} = a^{\of{1:N}}
            }
        }{
            \conditionalProbability{}{}{
                \newSampleRV = x^{\of{\ell}}
            }{
                A^{\of{1:N}} = a^{\of{1:N}}
            }
        }
        \nonumber
        \\
        &
        =^{\of{2}}
        \of{
            \prod\nolimits_{\ell = 1}^L
            \frac{
                \conditionalProbability{}{}{
                    \newSampleRV = x^{\of{\ell - 1}}
                }{
                    A^{\of{1:N}} = a^{\of{1:N}}
                }
            }{
                \conditionalProbability{}{}{
                    \newSampleRV = x^{\of{\ell}}
                }{
                    A^{\of{1:N}} = a^{\of{1:N}}
                }
            }
        }
        \cdot
        \conditionalProbability{}{}{
            \newSampleRV = \newSampleInstance^\ast
        }{
            A^{\of{1:N}} = a^{\of{1:N}}
        }
    \end{align}
    \noindent In an exactly analogous way, we get following manipulations for $\conditionalProbability{}{}{\newSampleRV = \newSampleInstance}{S = s}$:
    \begin{align}
    \label{proof:appendix:theory:selfreflect-predictive-sufficiency:core-theorem:proof-5}
        &
        \conditionalProbability{}{}{
            \newSampleRV = \newSampleInstance
        }{
            S = s
        }
        =
        \conditionalProbability{}{}{
            \newSampleRV = x^{\of{0}}
        }{
            S = s
        }
        \nonumber\\
        &
        =^{\of{1}}
        \conditionalProbability{}{}{
            \newSampleRV = x^{\of{0}}
        }{
            S = s
        }
        \cdot
        \prod\nolimits_{\ell = 1}^L
        \frac{
            \conditionalProbability{}{}{
                \newSampleRV = x^{\of{\ell}}
            }{
                S = s
            }
        }{
            \conditionalProbability{}{}{
                \newSampleRV = x^{\of{\ell}}
            }{
                S = s
            }
        }
        \nonumber
        \\
        &
        =^{\of{2}}
        \of{
            \prod\nolimits_{\ell = 1}^L
            \frac{
                \conditionalProbability{}{}{
                    \newSampleRV = x^{\of{\ell - 1}}
                }{
                    S = s
                }
            }{
                \conditionalProbability{}{}{
                    \newSampleRV = x^{\of{\ell}}
                }{
                    S = s
                }
            }
        }
        \cdot
        \conditionalProbability{}{}{
            \newSampleRV = \newSampleInstance^\ast
        }{
            S = s
        }
    \end{align}
    \noindent Note that in both Equation~\ref{proof:appendix:theory:selfreflect-predictive-sufficiency:core-theorem:proof-4} and Equation~\ref{proof:appendix:theory:selfreflect-predictive-sufficiency:core-theorem:proof-5} above, step $\of{1}$ follows from multiplying and dividing by the same terms and step $\of{2}$ follows from rearranging the terms and recognizing $x^{\of{L}} = \newSampleInstance^\ast$ by definition. 
    Now, we consider the $\ell-$th term from the Equation~\ref{proof:appendix:theory:selfreflect-predictive-sufficiency:core-theorem:proof-4} and simplify it as follows:
    \begin{align}
    \label{proof:appendix:theory:selfreflect-predictive-sufficiency:core-theorem:proof-6}
        &
        \frac{
            \conditionalProbability{}{}{
                \newSampleRV = x^{\of{\ell - 1}}
            }{
                A^{\of{1:N}} = a^{\of{1:N}}
            }
        }{
            \conditionalProbability{}{}{
                \newSampleRV = x^{\of{\ell}}
            }{
                A^{\of{1:N}} = a^{\of{1:N}}
            }
        }
        \nonumber\\
        &
        =^{\of{1}}
        \frac{
            \conditionalProbability{}{}{
                \newSampleRV_1 = \newSampleInstance^\ast_1, \cdots, \newSampleRV_{\ell - 1} = \newSampleInstance^\ast_{\ell - 1}, \newSampleRV_\ell = \newSampleInstance_\ell, \newSampleRV_{\ell + 1} = \newSampleInstance_{\ell + 1}, \cdots, \newSampleRV_L = \newSampleInstance_L
            }{
                A^{\of{1:N}} = a^{\of{1:N}}
            }
        }{
            \conditionalProbability{}{}{
                \newSampleRV_1 = \newSampleInstance^\ast_1, \cdots, \newSampleRV_{\ell - 1} = \newSampleInstance^\ast_{\ell - 1}, \newSampleRV_\ell = \newSampleInstance^\ast_\ell, \newSampleRV_{\ell + 1} = \newSampleInstance_{\ell + 1}, \cdots, \newSampleRV_L = \newSampleInstance_L
            }{
                A^{\of{1:N}} = a^{\of{1:N}}
            }
        }
        \nonumber\\
        &
        =^{\of{2}}
        \frac{
            \conditionalProbability{}{}{
                \newSampleRV_{-\ell} 
                = 
                \of{
                    \newSampleInstance^\ast_1, \cdots, \newSampleInstance^\ast_{\ell - 1}, \newSampleInstance_{\ell + 1}, \cdots, \newSampleInstance_L
                }
            }{
                A^{\of{1:N}} = a^{\of{1:N}}
            }
        }{
            \conditionalProbability{}{}{
                \newSampleRV_{-\ell} 
                = 
                \of{
                    \newSampleInstance^\ast_1, \cdots, \newSampleInstance^\ast_{\ell - 1}, \newSampleInstance_{\ell + 1}, \cdots, \newSampleInstance_L
                }
            }{
                A^{\of{1:N}} = a^{\of{1:N}}
            }
        }
        \nonumber\\
        &\qquad\qquad
        \times
        \frac{
            \conditionalProbability{}{}{
                \newSampleRV_\ell = \newSampleInstance_\ell
            }{
                A^{\of{1:N}} = a^{\of{1:N}},
                \newSampleRV_{-\ell} = \of{
                    \newSampleInstance^\ast_1, \cdots, \newSampleInstance^\ast_{\ell - 1}, \newSampleInstance_{\ell + 1}, \cdots, \newSampleInstance_L
                }
            }
        }{
            \conditionalProbability{}{}{
                \newSampleRV_\ell = \newSampleInstance^\ast_\ell
            }{
                A^{\of{1:N}} = a^{\of{1:N}}, 
                \newSampleRV_{-\ell} = \of{
                    \newSampleInstance^\ast_1, \cdots, \newSampleInstance^\ast_{\ell - 1}, \newSampleInstance_{\ell + 1}, \cdots, \newSampleInstance_L
                }
            }
        }
        \nonumber\\
        &
        =^{\of{3}}
        \frac{
            \conditionalProbability{}{}{
                \newSampleRV_\ell = \newSampleInstance_\ell
            }{
                A^{\of{1:N}} = a^{\of{1:N}},
                \newSampleRV_{-\ell} = \of{
                    \newSampleInstance^\ast_1, \cdots, \newSampleInstance^\ast_{\ell - 1}, \newSampleInstance_{\ell + 1}, \cdots, \newSampleInstance_L
                }
            }
        }{
            \conditionalProbability{}{}{
                \newSampleRV_\ell = \newSampleInstance^\ast_\ell
            }{
                A^{\of{1:N}} = a^{\of{1:N}}, 
                \newSampleRV_{-\ell} = \of{
                    \newSampleInstance^\ast_1, \cdots, \newSampleInstance^\ast_{\ell - 1}, \newSampleInstance_{\ell + 1}, \cdots, \newSampleInstance_L
                }
            }
        }
    \end{align}
    \noindent Again, in an exactly analogous way, we simplify the $\ell-$th terms of Equation~\ref{proof:appendix:theory:selfreflect-predictive-sufficiency:core-theorem:proof-5} as follows:
    \begin{align}
    \label{proof:appendix:theory:selfreflect-predictive-sufficiency:core-theorem:proof-7}
        &
        \frac{
            \conditionalProbability{}{}{
                \newSampleRV = x^{\of{\ell - 1}}
            }{
                S = s
            }
        }{
            \conditionalProbability{}{}{
                \newSampleRV = x^{\of{\ell}}
            }{
                S = s
            }
        }
        \nonumber\\
        &
        =^{\of{1}}
        \frac{
            \conditionalProbability{}{}{
                \newSampleRV_1 = \newSampleInstance^\ast_1, \cdots, \newSampleRV_{\ell - 1} = \newSampleInstance^\ast_{\ell - 1}, \newSampleRV_\ell = \newSampleInstance_\ell, \newSampleRV_{\ell + 1} = \newSampleInstance_{\ell + 1}, \cdots, \newSampleRV_L = \newSampleInstance_L
            }{
                S = s
            }
        }{
            \conditionalProbability{}{}{
                \newSampleRV_1 = \newSampleInstance^\ast_1, \cdots, \newSampleRV_{\ell - 1} = \newSampleInstance^\ast_{\ell - 1}, \newSampleRV_\ell = \newSampleInstance^\ast_\ell, \newSampleRV_{\ell + 1} = \newSampleInstance_{\ell + 1}, \cdots, \newSampleRV_L = \newSampleInstance_L
            }{
                S = s
            }
        }
        \nonumber\\
        &
        =^{\of{2}}
        \frac{
            \conditionalProbability{}{}{
                \newSampleRV_{-\ell} 
                = 
                \of{
                    \newSampleInstance^\ast_1, \cdots, \newSampleInstance^\ast_{\ell - 1}, \newSampleInstance_{\ell + 1}, \cdots, \newSampleInstance_L
                }
            }{
                S = s
            }
        }{
            \conditionalProbability{}{}{
                \newSampleRV_{-\ell} 
                = 
                \of{
                    \newSampleInstance^\ast_1, \cdots, \newSampleInstance^\ast_{\ell - 1}, \newSampleInstance_{\ell + 1}, \cdots, \newSampleInstance_L
                }
            }{
                S = s
            }
        }
        \nonumber\\
        &\qquad\qquad
        \times
        \frac{
            \conditionalProbability{}{}{
                \newSampleRV_\ell = \newSampleInstance_\ell
            }{
                S = s, 
                \newSampleRV_{-\ell} = \of{
                    \newSampleInstance^\ast_1, \cdots, \newSampleInstance^\ast_{\ell - 1}, \newSampleInstance_{\ell + 1}, \cdots, \newSampleInstance_L
                }
            }
        }{
            \conditionalProbability{}{}{
                \newSampleRV_\ell = \newSampleInstance^\ast_\ell
            }{
                S = s, 
                \newSampleRV_{-\ell} = \of{
                    \newSampleInstance^\ast_1, \cdots, \newSampleInstance^\ast_{\ell - 1}, \newSampleInstance_{\ell + 1}, \cdots, \newSampleInstance_L
                }
            }
        }
        \nonumber\\
        &
        =^{\of{3}}
        \frac{
            \conditionalProbability{}{}{
                \newSampleRV_\ell = \newSampleInstance_\ell
            }{
                S = s,
                A_{-\ell} = \of{
                    \newSampleInstance^\ast_1, \cdots, \newSampleInstance^\ast_{\ell - 1}, \newSampleInstance_{\ell + 1}, \cdots, \newSampleInstance_L
                }
            }
        }{
            \conditionalProbability{}{}{
                \newSampleRV_\ell = \newSampleInstance^\ast_\ell
            }{
                S = s, 
                \newSampleRV_{-\ell} = \of{
                    \newSampleInstance^\ast_1, \cdots, \newSampleInstance^\ast_{\ell - 1}, \newSampleInstance_{\ell + 1}, \cdots, \newSampleInstance_L
                }
            }
        }
    \end{align}
    \noindent In both these simplifications, step $\of{1}$ follows from the definition of the sentences $x^{\of{\ell - 1}}, x^{\of{\ell}}$, step $\of{2}$ follows from chain rule, and step $\of{3}$ follows from canceling the common terms. 
    However, given equality $\conditionalProbability{}{}{\newSampleRV_i}{A^{\of{1:N}}, \newSampleRV_{-i}} = \conditionalProbability{}{}{\newSampleRV_i}{S, \newSampleRV_{-i}}$ for all masking locations $i$ implies that for all $\ell$:
    \begin{align}
    \label{proof:appendix:theory:selfreflect-predictive-sufficiency:core-theorem:proof-8}
        &
        \conditionalProbability{}{}{
                \newSampleRV_\ell = \newSampleInstance_\ell
            }{
                A^{\of{1:N}} = a^{\of{1:N}},
                \newSampleRV_{-\ell} = \of{
                    \newSampleInstance^\ast_1, \cdots, \newSampleInstance^\ast_{\ell - 1}, \newSampleInstance_{\ell + 1}, \cdots, \newSampleInstance_L
                }
            }
        \nonumber\\
        &\qquad
        =
        \conditionalProbability{}{}{
                \newSampleRV_\ell = \newSampleInstance_\ell
            }{
                S = s,
                \newSampleRV_{-\ell} = \of{
                    \newSampleInstance^\ast_1, \cdots, \newSampleInstance^\ast_{\ell - 1}, \newSampleInstance_{\ell + 1}, \cdots, \newSampleInstance_L
                }
            }
        \ \textrm{and}\ 
        \\
        &
        \conditionalProbability{}{}{
                \newSampleRV_\ell = \newSampleInstance^\ast_\ell
            }{
                A^{\of{1:N}} = a^{\of{1:N}}, 
                \newSampleRV_{-\ell} = \of{
                    \newSampleInstance^\ast_1, \cdots, \newSampleInstance^\ast_{\ell - 1}, \newSampleInstance_{\ell + 1}, \cdots, \newSampleInstance_L
                }
            }
        \nonumber\\
        &\qquad
        =
        \conditionalProbability{}{}{
                \newSampleRV_\ell = \newSampleInstance^\ast_\ell
            }{
                S = s,
                \newSampleRV_{-\ell} = \of{
                    \newSampleInstance^\ast_1, \cdots, \newSampleInstance^\ast_{\ell - 1}, \newSampleInstance_{\ell + 1}, \cdots, \newSampleInstance_L
                }
            }
        \\
        &
        \impliesThat
        \frac{
            \conditionalProbability{}{}{
                \newSampleRV_\ell = \newSampleInstance_\ell
            }{
                A^{\of{1:N}} = a^{\of{1:N}},
                \newSampleRV_{-\ell} = \of{
                    \newSampleInstance^\ast_1, \cdots, \newSampleInstance^\ast_{\ell - 1}, \newSampleInstance_{\ell + 1}, \cdots, \newSampleInstance_L
                }
            }
        }{
            \conditionalProbability{}{}{
                \newSampleRV_\ell = \newSampleInstance^\ast_\ell
            }{
                A^{\of{1:N}} = a^{\of{1:N}}, 
                \newSampleRV_{-\ell} = \of{
                    \newSampleInstance^\ast_1, \cdots, \newSampleInstance^\ast_{\ell - 1}, \newSampleInstance_{\ell + 1}, \cdots, \newSampleInstance_L
                }
            }
        }
        \nonumber\\
        &\qquad
        =
        \frac{
            \conditionalProbability{}{}{
                \newSampleRV_\ell = \newSampleInstance_\ell
            }{
                S = s,
                \newSampleRV_{-\ell} = \of{
                    \newSampleInstance^\ast_1, \cdots, \newSampleInstance^\ast_{\ell - 1}, \newSampleInstance_{\ell + 1}, \cdots, \newSampleInstance_L
                }
            }
        }{
            \conditionalProbability{}{}{
                \newSampleRV_\ell = \newSampleInstance^\ast_\ell
            }{
                S = s, 
                \newSampleRV_{-\ell} = \of{
                    \newSampleInstance^\ast_1, \cdots, \newSampleInstance^\ast_{\ell - 1}, \newSampleInstance_{\ell + 1}, \cdots, \newSampleInstance_L
                }
            }
        }
        \nonumber\\
        &
        \impliesThat
        \frac{
            \conditionalProbability{}{}{
                \newSampleRV = x^{\of{\ell - 1}}
            }{
                A^{\of{1:N}} = a^{\of{1:N}}
            }
        }{
            \conditionalProbability{}{}{
                \newSampleRV = x^{\of{\ell}}
            }{
                A^{\of{1:N}} = a^{\of{1:N}}
            }
        }
        =
        \frac{
            \conditionalProbability{}{}{
                \newSampleRV = x^{\of{\ell - 1}}
            }{
                S = s
            }
        }{
            \conditionalProbability{}{}{
                \newSampleRV = x^{\of{\ell}}
            }{
                S = s
            }
        }\ \textrm{for all $\ell\in\setOf{1, \cdots, L}$.}
    \end{align}
    \noindent Combining this  with Equation~\ref{proof:appendix:theory:selfreflect-predictive-sufficiency:core-theorem:proof-4} and Equation~\ref{proof:appendix:theory:selfreflect-predictive-sufficiency:core-theorem:proof-5}, we get an interesting result:
    \begin{align}
    \label{proof:appendix:theory:selfreflect-predictive-sufficiency:core-theorem:proof-9}
        &
        \frac{
            \conditionalProbability{}{}{
                \newSampleRV = \newSampleInstance
            }{
                A^{\of{1:N}} = a^{\of{1:N}}
            }
        }{
            \conditionalProbability{}{}{
                \newSampleRV = \newSampleInstance
            }{
                S = s
            }
        }
        \nonumber\\
        &
        =
        \frac{
            \of{
                \prod\nolimits_{\ell = 1}^L
                \frac{
                    \conditionalProbability{}{}{
                        \newSampleRV = x^{\of{\ell - 1}}
                    }{
                        A^{\of{1:N}} = a^{\of{1:N}}
                    }
                }{
                    \conditionalProbability{}{}{
                        \newSampleRV = x^{\of{\ell}}
                    }{
                        A^{\of{1:N}} = a^{\of{1:N}}
                    }
                }
            }
            \cdot
            \conditionalProbability{}{}{
                \newSampleRV = \newSampleInstance^\ast
            }{
                A^{\of{1:N}} = a^{\of{1:N}}
            }
        }{
            \of{
                \prod\nolimits_{\ell = 1}^L
                \frac{
                    \conditionalProbability{}{}{
                        \newSampleRV = x^{\of{\ell - 1}}
                    }{
                        S = s
                    }
                }{
                    \conditionalProbability{}{}{
                        \newSampleRV = x^{\of{\ell}}
                    }{
                        S = s
                    }
                }
            }
            \cdot
            \conditionalProbability{}{}{
                \newSampleRV = \newSampleInstance^\ast
            }{
                S = s
            }
        }
        \nonumber\\
        &
        =^{\of{1}}
        \frac{
            \conditionalProbability{}{}{
                \newSampleRV = \newSampleInstance^\ast
            }{
                A^{\of{1:N}} = a^{\of{1:N}}
            }
        }{
            \conditionalProbability{}{}{
                \newSampleRV = \newSampleInstance^\ast
            }{
                S = s
            }
        }
    \end{align}
    \noindent Here, step $\of{1}$ follows from canceling equal terms 
    in both the numerator and the denominator.
    What Equation~\ref{proof:appendix:theory:selfreflect-predictive-sufficiency:core-theorem:proof-9} implies is that given $A^{\of{1:N}} = a^{\of{1:N}}$, thereby giving $S = s := \psi\of{a^{\of{1:N}}}$, the ratio $
        \frac{
            \conditionalProbability{}{}{
                \newSampleRV = \newSampleInstance
            }{
                A^{\of{1:N}} = a^{\of{1:N}}
            }
        }{
            \conditionalProbability{}{}{
                \newSampleRV = \newSampleInstance
            }{
                S = s
            }
        }
    $ equals the ratio $
        \frac{
            \conditionalProbability{}{}{
                \newSampleRV = \newSampleInstance^\ast
            }{
                A^{\of{1:N}} = a^{\of{1:N}}
            }
        }{
            \conditionalProbability{}{}{
                \newSampleRV = \newSampleInstance^\ast
            }{
                S = s
            }
        }
    $ for any and all values of $\newSampleInstance\in \mathbf{V}^L$, thereby making it a constant $c := c\of{a^{\of{1:N}}}$ (a constant that dependents on $a^{\of{1:N}}$). 
    Now, we can integrate out $\newSampleRV$ and obtain the value of this constant as follows:
    \begin{align}
    \label{proof:appendix:theory:selfreflect-predictive-sufficiency:core-theorem:proof-10} 
        &
        \textrm{For all $\newSampleInstance\in \mathbf{V}^L$},\ 
        \frac{
            \conditionalProbability{}{}{
                \newSampleRV = \newSampleInstance
            }{
                A^{\of{1:N}} = a^{\of{1:N}}
            }
        }{
            \conditionalProbability{}{}{
                \newSampleRV = \newSampleInstance
            }{
                S = s
            }
        }
        =
        c\of{a^{\of{1:N}}}
        \nonumber\\
        \impliesThat
        &
        1
        =
        \sum\nolimits_{\newSampleInstance\in \mathbf{V}^L}
        \conditionalProbability{}{}{
            \newSampleRV = \newSampleInstance
        }{
            A^{\of{1:N}} = a^{\of{1:N}}
        }
        =
        \sum\nolimits_{\newSampleInstance\in \mathbf{V}^L}
        c\of{a^{\of{1:N}}}
        \cdot
        \conditionalProbability{}{}{
            \newSampleRV = \newSampleInstance
        }{
            S = s
        }
        \nonumber\\
        &
        =
        c\of{a^{\of{1:N}}}
        \cdot
        \sum\nolimits_{\newSampleInstance\in \mathbf{V}^L}
        \conditionalProbability{}{}{
            \newSampleRV = \newSampleInstance
        }{
            S = s
        }
        =
        c\of{a^{\of{1:N}}}
        \cdot
        1
        =
        c\of{a^{\of{1:N}}}
    \end{align}
    \noindent This proves that in fact $c\of{a^{\of{1:N}}} = 1$, which gives that for all $\newSampleInstance\sim \newSampleRV$, we have: $
        \conditionalProbability{}{}{
            \newSampleRV = \newSampleInstance
        }{
            A^{\of{1:N}} = a^{\of{1:N}}
        }
        =
        \conditionalProbability{}{}{
            \newSampleRV = \newSampleInstance
        }{
            S = s
        }
    $. 
    Since this result holds for all $\newSampleInstance\sim \newSampleRV$, we can write the corresponding result with the underlying random variable as:
    $
        \conditionalProbability{}{}{
            \newSampleRV
        }{
            A^{\of{1:N}} = a^{\of{1:N}}
        }
        =
        \conditionalProbability{}{}{
            \newSampleRV
        }{
            S = s
        }
    $.
    However, since this result holds for any sample choice of $A^{\of{1:N}} = a^{\of{1:N}}$ (and corresponding $S = s := \psi\of{a^{\of{1:N}}}$), we get the desired results involving all underlying random variables:
    $
        \conditionalProbability{}{}{
            \newSampleRV
        }{
            A^{\of{1:N}}
        }
        =
        \conditionalProbability{}{}{
            \newSampleRV
        }{
            S
        }
    $. 
    This proves the reverse direction of the equivalence. 
\end{proof}

\subsection{Modeling with LLM: From derivation to implementation}
\label{appendix:theory:modeling-with-LLM}
\noindent Now, having proved the equivalence of the basis of the SelfReflect metric and the desired predictive sufficiency of summary, we show the connection with the exact definition of the SelfReflect metric.
Suppose we are given with a question $Q = q\in\mathcal{X}$, which is shown to an LLM labeled $\textrm{LLM}_\theta$. 
This puts $\textrm{LLM}_\theta$ in a state $\Theta_Q = \theta_q$, from which we sample answers $A^{\of{1:N}} = a^{\of{1:N}}$, and a subsequent sample $B = b\in\mathbf{V}^L$. 
Now, to calculate the SelfReflect metric, the core idea is that conditional distributions of the form $\conditionalProbability{}{}{Y}{Z}$ involved in the theoretical considerations above are modeled by prompting the judge $\text{LLM}_J$ with context $Z$ and checking the probability of $Y$. 
In our implementation, this $\text{LLM}_J$ will be temperature-scaled with temperature $\tau=5$ as mentioned in the main text in order to flatten its distribution and make it consider more synonyms.
Then, we build the prompt of $\text{LLM}_J$ by including the question $Q = q$ and either the samples $A^{\of{1:N}} = a^{\of{1:N}}$ or their summary $S = s := \psi\of{a^{\of{1:N}}}$, along with a description $t$ of the masked-token prediction task to tell the $\text{LLM}_J$ judge what it needs to do. 
We then mask each word of $B = b$ one by one to obtain the masked word $\newSampleRV_m = \newSampleInstance_m\in\mathbf{V}$ and the rest of the sentence $\newSampleRV_{-m} = \newSampleInstance_{-m}\in\mathbf{V}^{L - 1}$.
Then, we model the required conditional distributions that appear in the derivation using the $\text{LLM}_J$ judge as follows:
\begin{equation}
    \begin{aligned}
    \label{appendix:theory:llm-based-modeling}
        &
        \conditionalProbability{}{}{
            \newSampleRV_m = \newSampleInstance_m
        }{
            A^{\of{1:N}} = a^{\of{1:N}}, 
            \newSampleRV_{-m} = \newSampleInstance_{-m}
        } 
        \\
        &\qquad
        := 
        \conditionalProbability{
            \textrm{LLM}_J
        }{}{\newSampleRV_m = \newSampleInstance_m}{Q = q, A^{\of{1:N}} = a^{\of{1:N}}, t, \newSampleRV_{-m} = \newSampleInstance_{-m}},\ \textrm{and}\ 
        \\
        &
        \conditionalProbability{}{}{
            \newSampleRV_m = \newSampleInstance_m
        }{
            S = s, 
            \newSampleRV_{-m} = \newSampleInstance_{-m}
        }
        \\
        &\qquad
        := 
        \conditionalProbability{
            \textrm{LLM}_J
        }{}{\newSampleRV_m = \newSampleInstance_m}{Q = q, S = s, t, \newSampleRV_{-m} = \newSampleInstance_{-m}}
    \end{aligned}    
\end{equation}

\noindent This modeling along with Theorem~\ref{thm:appendix:theory:selfreflect-predictive-sufficiency:core-theorem} demonstrates the efficacy of SelfReflect metric:
\begin{corollary}{\textbf{(Efficacy of SelfReflect Metric)}}{appendix:corollary:self-reflect=efficacy}
    For any question $Q$, for all masking indices $m$, 
    \begin{equation}
        \begin{aligned}
            &
            \mathcal{W}^1\of{
                \conditionalProbability{
                    \textrm{LLM}_J
                }{}{\newSampleRV_m}{Q, A^{\of{1:N}}, t, \newSampleRV_{-m}}
                ,
                \conditionalProbability{
                    \textrm{LLM}_J
                }{}{\newSampleRV_m}{Q, S, t, \newSampleRV_{-m}}
            } = 0    
            \\
            &
            \ifAndOnlyIf^{\of{1}}
            \conditionalProbability{
                    \textrm{LLM}_J
                }{}{\newSampleRV_m}{Q, A^{\of{1:N}}, t, \newSampleRV_{-m}}
            =
             \conditionalProbability{
                    \textrm{LLM}_J
                }{}{\newSampleRV_m}{Q, S, t, \newSampleRV_{-m}}
            \\
            &
            \ifAndOnlyIf^{\of{2}}
            \conditionalProbability{}{}{
                \newSampleRV_m
            }{
                A^{\of{1:N}}, \newSampleRV_{-m}
            }
            =
            \conditionalProbability{}{}{
                \newSampleRV_m
            }{
                S, \newSampleRV_{-m}
            }
            \\
            &
            \ifAndOnlyIf^{\of{3}}
            \conditionalProbability{}{}{
                \newSampleRV
            }{
                A^{\of{1:N}}
            }
            =
            \conditionalProbability{}{}{
                \newSampleRV
            }{
                S
            }
            \\
            &
            \ifAndOnlyIf^{\of{4}}
            \mutualInformation{}{}{A^{\of{1:N}}}{\newSampleRV}
            =
            \mutualInformation{}{}{S}{\newSampleRV}
        \end{aligned}
    \end{equation}
\end{corollary}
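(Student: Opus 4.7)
The corollary is essentially a concatenation of four equivalences, each of which is already available either as a general fact about probability metrics, as a modeling convention, or as one of the two previous theorems in the appendix. My plan is therefore to verify each of the four biconditionals in turn, rather than to prove anything substantially new.

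For step $(1)$, I would invoke the standard fact that the $1$-Wasserstein distance $\mathcal{W}^1$ is a genuine metric on the space of probability distributions over the (finite) vocabulary $\mathbf{V}$: in particular, $\mathcal{W}^1(\mu,\nu)=0$ iff $\mu=\nu$. Applying this to the two distributions $\conditionalProbability{\textrm{LLM}_J}{}{\newSampleRV_m}{Q, A^{\of{1:N}}, t, \newSampleRV_{-m}}$ and $\conditionalProbability{\textrm{LLM}_J}{}{\newSampleRV_m}{Q, S, t, \newSampleRV_{-m}}$ immediately delivers $(1)$. For step $(2)$, I would appeal directly to the modeling identification stated in Equation~\ref{appendix:theory:llm-based-modeling}, which by definition equates the $\text{LLM}_J$-induced conditionals (including the task-prompt $t$ and the question $Q=q$) with the abstract conditionals $\conditionalProbability{}{}{\newSampleRV_m}{A^{\of{1:N}}, \newSampleRV_{-m}}$ and $\conditionalProbability{}{}{\newSampleRV_m}{S, \newSampleRV_{-m}}$ appearing in the theoretical derivation. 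No work beyond unfolding this convention is needed.

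For step $(3)$, I would invoke Theorem~\ref{appendix:theory:selfreflect-predictive-sufficiency:core-theorem}, which was established earlier in the appendix and precisely states that the pointwise equality of the masked-token conditionals for every masking index $m$ is equivalent to the full-sentence conditional equality $\conditionalProbability{}{}{\newSampleRV}{A^{\of{1:N}}} = \conditionalProbability{}{}{\newSampleRV}{S}$. The quantification ``for all masking indices $m$'' in the corollary statement matches exactly the quantification in that theorem. Finally, for step $(4)$, I would apply Theorem~\ref{appendix:theory:connection-of-selfreflect-to-predictive-sufficiency} (the appendix version of Proposition~\ref{thm:theory:connection-to-predictive-sufficiency}), which delivers the equivalence between the full-sentence conditional equality and the mutual information equality $\mutualInformation{}{}{A^{\of{1:N}}}{\newSampleRV} = \mutualInformation{}{}{S}{\newSampleRV}$. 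Chaining the four biconditionals yields the full statement.

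\paragraph{Where the work lies.}
Because each step is either a standard metric fact, a definitional identification, or a direct citation of a previously proved theorem, there is essentially no new mathematical content to produce; the corollary is a packaging result. The only mildly delicate point is step $(2)$: one has to be comfortable treating the $\textrm{LLM}_J$-based conditionals (which carry the extra conditioning items $Q=q$ and the task description $t$) as faithful surrogates for the abstract conditionals of the theory. I would briefly remark that $Q$ is fixed throughout (so it may be absorbed into the conditioning side uniformly on both sides of each equation) and that the task description $t$ is a constant prompt common to both distributions, so it does not affect the equivalence. With that caveat made explicit, the four equivalences compose cleanly and the corollary follows.
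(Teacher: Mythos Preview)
Your proposal is correct and matches the paper's own proof essentially line for line: the paper justifies step $(1)$ by the metric property of $\mathcal{W}^1$, step $(2)$ by the modeling identification of Equation~\ref{appendix:theory:llm-based-modeling}, step $(3)$ by Theorem~\ref{appendix:theory:selfreflect-predictive-sufficiency:core-theorem}, and step $(4)$ by Theorem~\ref{thm:appendix:theory:connection-of-selfreflect-to-predictive-sufficiency}. Your additional remark about $Q$ and $t$ being common conditioning items is a welcome clarification but does not alter the approach.
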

\begin{proof}
    Step $\of{4}$ follows from Theorem~\ref{thm:appendix:theory:connection-of-selfreflect-to-predictive-sufficiency}, step $\of{3}$ follows from Theorem~\ref{thm:appendix:theory:selfreflect-predictive-sufficiency:core-theorem}, step $\of{2}$ follows from modeling in Equation~\ref{appendix:theory:llm-based-modeling}, and step $\of{1}$ follows from the fact that the $\mathcal{W}^1$ ($1-$Wasserstein) distance between two distributions is $0$ if and only if the distributions are identical. 
\end{proof}

\subsubsection*{Discussion}
\label{appendix:theory:modeling-with-LLM:discussion}
\noindent We conclude this section by discussing two important points about our derivation. 
\begin{enumerate}
    \item Firstly, LLMs are known to behave significantly better with careful design of prompts~\citep{sahoo2024systematic}. 
    Thus, in our modeling of Equation~\ref{appendix:theory:llm-based-modeling}, one may try to optimize the prompting template and the task description $t$ in order to further obtain sharper versions of the SelfReflect metric. 
    In this aspect, note that our derivation does not provide a mechanism for optimizing for the prompt template or task description $t$. 
    In fact, irrespective of this detail, the derivation holds true. 
    \item Secondly, we state the assumptions required for the derivation, as stated in~\cref{appendix:theory:setup-notations-and-assumptions:assumptions}, are needed for establishing the connection of SelfReflect metric with the notion of predictive sufficiency. 
    However, these are not needed for defining, implementing, or using the SelfReflect metric. 
    Users may find our SelfReflect metric useful even in cases where one or more of the assumptions are loosened. 
    Also, further generalizing the SelfReflect metric in cases where the assumptions are loosened or proving that the current formulation holds in those scenarios remains an interesting direction for future theoretical work.
\end{enumerate}

\newpage
\section{Convergence of the SelfReflect metric} \label{app:stability}

\subsection{Reducing both $N$ and $M$}

In the main paper, we evaluate SelfReflect on $1000$ questions per dataset with $N=M=50$ conditioning and masked-out answers. This is based on a convergence analysis that we present in this section. We use Qwen 2.5 72B Instruct and Natural Questions as an example and calculate the average SelfReflect score across an increasing number of questions and conditioning and masked-out answers in \cref{fig:num_answers_50,fig:num_answers_20,fig:num_answers_10,fig:num_answers_5,fig:num_answers_1}. The question is how many questions are needed to arrive at a stable average score.

It can be seen in \cref{fig:num_answers_50} that at $N=M=50$, the SelfReflect score converges at $1000$ questions, our setup for the paper. One can of course reduce $N$ and $M$, which will roughly linearly reduce the runtime required to compute the score. However, when for example reducing to $N=M=20$ questions in \cref{fig:num_answers_20}, convergence to the final value sets in only at about $2500$ questions, which linearly increases the runtime, so that the runtime advantage vanishes. If one allows the score to be a bit less converged, for example in development rather than in reporting test results, we suggest to use $N=M=10$ and 500 questions. This reduces the runtime to calculate SelfReflect to 9 minutes on a node with 8 A100 GPUs, compared to the 67 minutes of $N=M=50$ and 1000 questions. 

The only real outlier to these trends is $N=M=1$. Here, it is especially important that $N=1$, i.e., in the context of the answer distribution prompt, there is only a single response. In this case, the ideal summary is actually to return exactly this response rather than a summary of the distribution. Hence, in \cref{fig:num_answers_1}, \emph{Greedy} obtains a better SelfReflect score than \emph{Sample \& Summarize}. This underlines the importance of why SelfReflect uses \emph{multiple} samples from the answer distribution in the context.

\subsection{Reducing only $M$}

\hl{We can study this effect further by reducing only the number of answers that we compute masked-out tasks over, $M$, and keeping the number of reference distribution answers in the LM judge context constant at $N=50$. The results, and a comparison to all baselines, is shown in \cref{tab:reduce_M}. We observe that, similar as in the previous section, $M$ can be reduced down to $M=5$ without losing much of the ability to detect fine-grained quality differences. At $M\in\{1,2\}$, performance degrades slightly on good vs almost-good and percentage vs or-concatenated. But it remains above the baselines and unlike in the previous experiment, does not degrade in verbalized vs only majority answer. This confirms that the collapse in the previous experiment was due to lowering the number of answers in the reference distribution to $N=1$, in which case the task becomes a top-1 matching task rather than a distribution matching task.}

\begin{table}[hb]
    \caption{We lower the compute spent to calculate the SelfReflect score by reducing the number of masked-out task answers $M$, keeping the number of reference distribution answers $N=50$ constant. Mean $\pm$ 95\% confidence interval. It can be seen that SelfReflect's performance stays roughly untouched from $N=50$ down to $N=5$, and stays above baselines even for $N\in\{1, 2\}$. Qwen 2.5 7B Instruct distributions over questions from the Natural Questions dataset.}%
    \label{tab:reduce_M}
    \resizebox{\textwidth}{!}{
    \begin{tabular}{lcccccc}
    \toprule
        Metric & \shortstack{Good summaries vs \\ bad summaries} & \shortstack{Good vs \\ almost-good} & \shortstack{Detailed vs \\ truncated} & \shortstack{Verbalized uncertainty vs \\ only majority answer} & \shortstack{Verbalized vs \\ or-concatenated} & \shortstack{Percentage vs \\ or-concatenated} \\
    \midrule
        Summarization & 97.40\%\tiny{$\pm 0.99\%$}&	38.70\%\tiny{$\pm 3.02\%$}&	53.55\%\tiny{$\pm 7.85\%$}&	11.57\%\tiny{$\pm 5.70\%$}& 57.02\%\tiny{$\pm 8.82\%$}& 65.29\%\tiny{$\pm 8.48\%$} \\
        LM Judge & 98.33\%\tiny{$\pm 0.46\%$}&	47.32\%\tiny{$\pm 1.91\%$}&	59.92\%\tiny{$\pm 5.93\%$}&	19.37\%\tiny{$\pm 5.60\%$}&	34.55\%\tiny{$\pm 6.74\%$}& 35.08\%\tiny{$\pm 6.77\%$} \\
        Opt. Transport & 80.16\%\tiny{$\pm 1.43\%$}&	60.78\%\tiny{$\pm 1.87\%$}&	39.69\%\tiny{$\pm 5.92\%$}&	48.69\%\tiny{$\pm 7.09\%$}&	52.88\%\tiny{$\pm 7.08\%$}&	69.11\%\tiny{$\pm 6.55\%$} \\
        Embedding & 96.50\%\tiny{$\pm 0.66\%$}&	65.49\%\tiny{$\pm 1.82\%$}&	65.65\%\tiny{$\pm 5.75\%$}&	10.99\%\tiny{$\pm 4.44\%$}&	43.98\%\tiny{$\pm 7.04\%$}&	36.65\%\tiny{$\pm 6.83\%$} \\
        SelfReflect $M=1$ & 99.00\%\tiny{$\pm 0.62\%$}&	94.23\%\tiny{$\pm 1.48\%$}& 96.13\%\tiny{$\pm 3.04\%$}&	84.30\%\tiny{$\pm 6.48\%$}&	71.07\%\tiny{$\pm 8.08\%$}&	78.51\%\tiny{$\pm 7.32\%$} \\
        SelfReflect $M=2$ & 99.60\%\tiny{$\pm 0.39\%$}&	96.12\%\tiny{$\pm 1.23\%$}& 98.06\%\tiny{$\pm 2.17\%$}&	90.91\%\tiny{$\pm 5.12\%$}&	77.69\%\tiny{$\pm 7.42\%$}&	73.55\%\tiny{$\pm 7.86\%$} \\
        SelfReflect $M=5$ & 99.70\%\tiny{$\pm 0.34\%$}&	97.90\%\tiny{$\pm 0.91\%$}& 98.71\%\tiny{$\pm 1.78\%$}&	90.91\%\tiny{$\pm 5.12\%$}&	71.90\%\tiny{$\pm 8.01\%$}&	80.99\%\tiny{$\pm 6.99\%$} \\
        SelfReflect $M=10$ & 99.80\%\tiny{$\pm 0.28\%$}&	98.22\%\tiny{$\pm 0.84\%$}& 98.06\%\tiny{$\pm 2.17\%$}&	92.56\%\tiny{$\pm 4.68\%$}&	76.03\%\tiny{$\pm 7.61\%$}&	82.64\%\tiny{$\pm 6.75\%$} \\
        SelfReflect $M=20$ & 99.80\%\tiny{$\pm 0.28\%$}&	98.64\%\tiny{$\pm 0.74\%$}& 98.06\%\tiny{$\pm 2.17\%$}&	95.04\%\tiny{$\pm 3.87\%$}&	71.07\%\tiny{$\pm 8.08\%$}&	84.30\%\tiny{$\pm 6.48\%$} \\
        SelfReflect $M=50$ & 99.90\%\tiny{$\pm 0.28\%$}&	98.74\%\tiny{$\pm 0.71\%$}& 98.06\%\tiny{$\pm 2.17\%$}&	95.04\%\tiny{$\pm 3.87\%$}&	74.38\%\tiny{$\pm 7.78\%$}&	83.47\%\tiny{$\pm 6.62\%$} \\
    \bottomrule
    \end{tabular}
    }
\end{table}

\begin{figure}[h]
    \centering
    \includegraphics[width=0.85\linewidth]{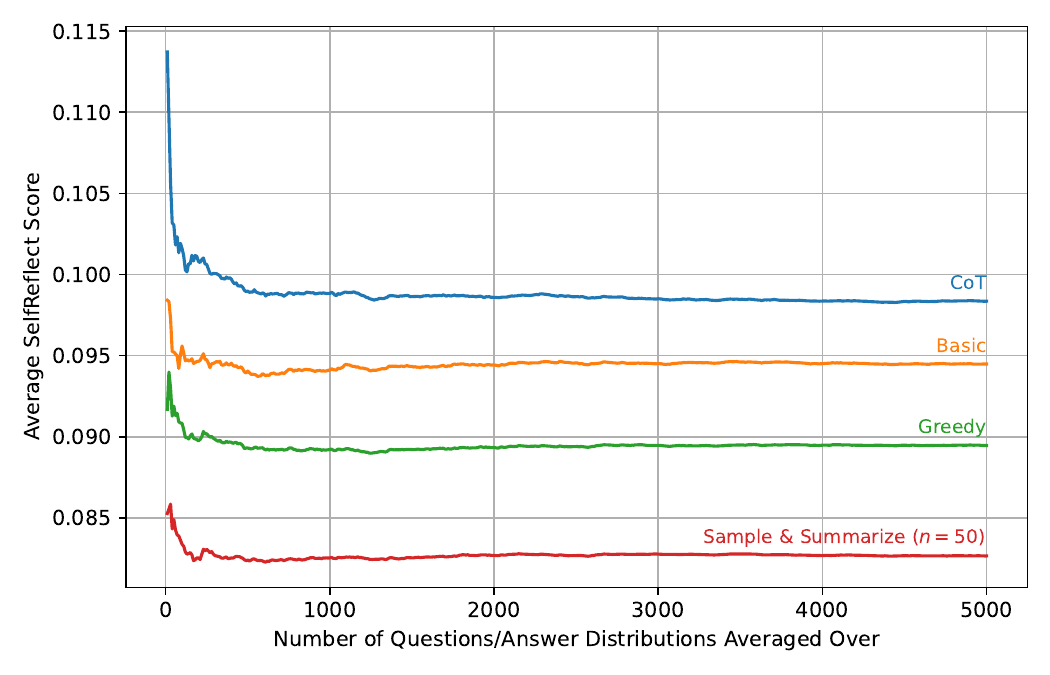}
    \caption{Convergence of the SelfReflect score with $N=M=50$ and an increasing number of queries we evaluate on. Answer Distributions of Qwen 2.5 72B Instruct on Natural Questions.}
    \label{fig:num_answers_50}
\end{figure}

\begin{figure}[h]
    \centering
    \includegraphics[width=0.85\linewidth]{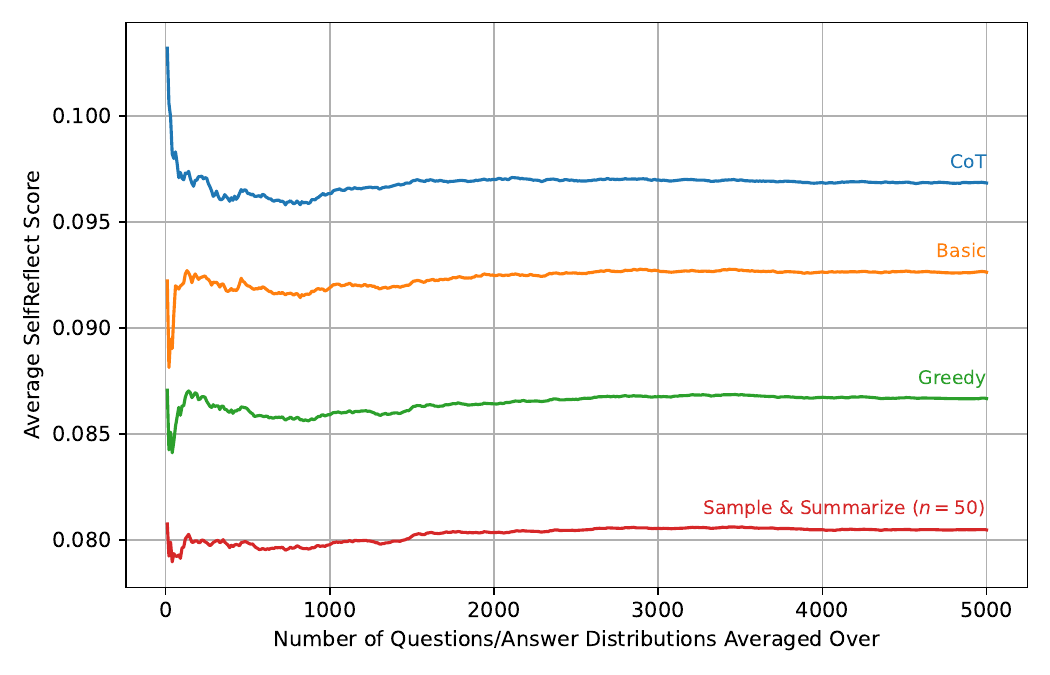}
    \caption{Convergence of the SelfReflect score with $N=M=20$ and an increasing number of queries we evaluate on. Answer Distributions of Qwen 2.5 72B Instruct on Natural Questions.}
    \label{fig:num_answers_20}
\end{figure}

\begin{figure}[h]
    \centering
    \includegraphics[width=0.85\linewidth]{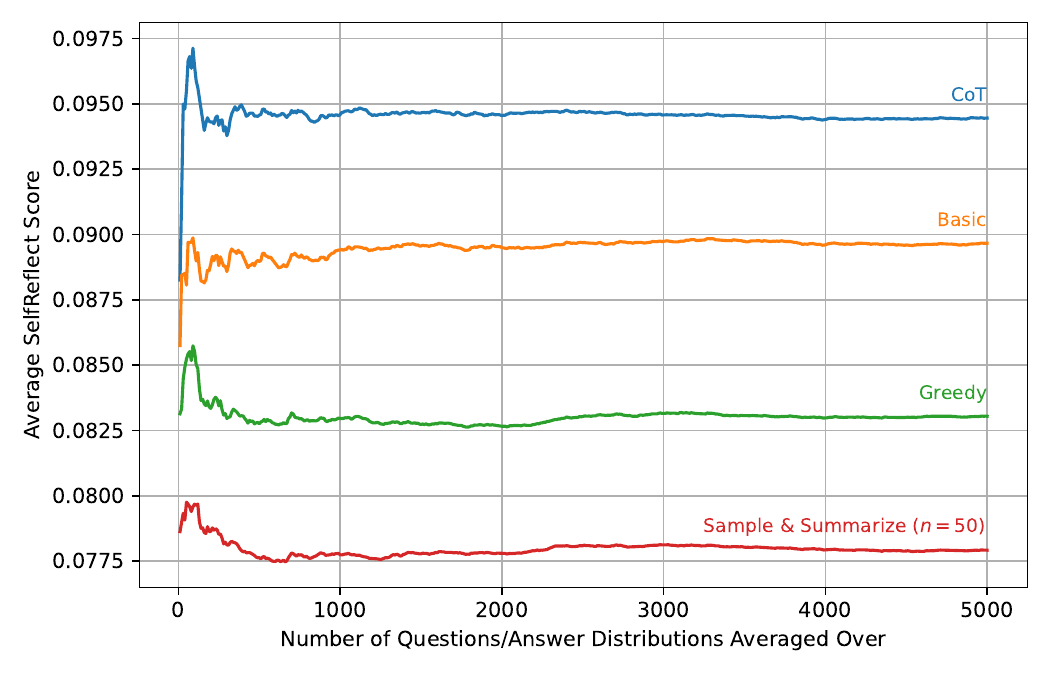}
    \caption{Convergence of the SelfReflect score with $N=M=10$ and an increasing number of queries we evaluate on. Answer Distributions of Qwen 2.5 72B Instruct on Natural Questions.}
    \label{fig:num_answers_10}
\end{figure}

\begin{figure}[h]
    \centering
    \includegraphics[width=0.85\linewidth]{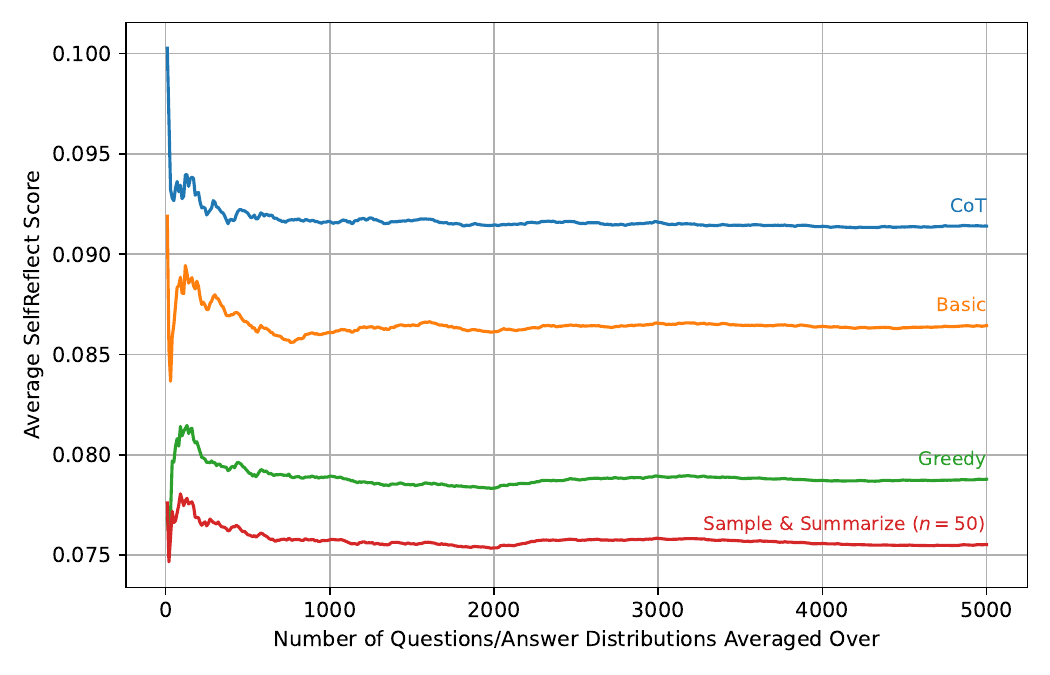}
    \caption{Convergence of the SelfReflect score with $N=M=5$ and an increasing number of queries we evaluate on. Answer Distributions of Qwen 2.5 72B Instruct on Natural Questions.}
    \label{fig:num_answers_5}
\end{figure}

\begin{figure}[h]
    \centering
    \includegraphics[width=0.85\linewidth]{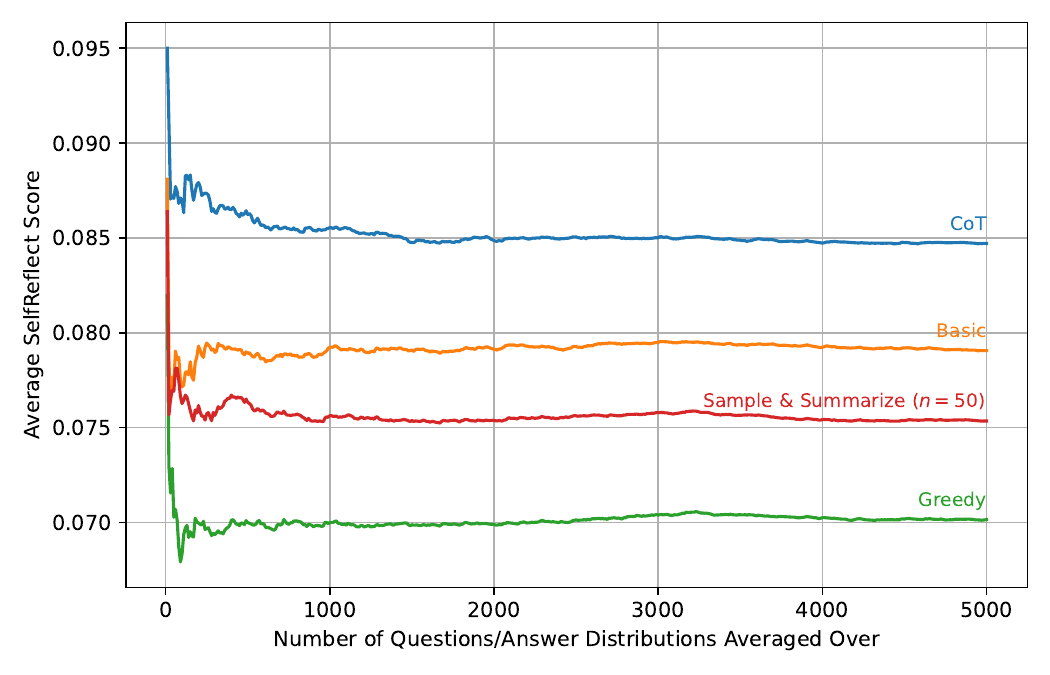}
    \caption{Convergence of the SelfReflect score with $N=M=1$ and an increasing number of queries we evaluate on. Answer Distributions of Qwen 2.5 72B Instruct on Natural Questions.}
    \label{fig:num_answers_1}
\end{figure}

\newpage
\clearpage
\section{Which $\text{LLM}_J$ judge to use to generate SelfReflect logits} \label{app:judge}

\begin{table}[h]
    \caption{To find out which LLM judge produces the best logits, we test how often SelfReflect correctly distinguishes a good (top) from a bad (bottom) summarywith different possible judges $\text{LLM}_J$ that calculate the SelfReflect metric, across different LLM's $\text{LLM}_\theta$ whose answer distributions are being summarized. Automatically generated summaries on Natural Questions, following \cref{tab:edgecases}. Results for Phi 4 14B as a judge for Llama 3.1 8B Instruct are pending and will be added.}%
    \label{tab:judges}
    \resizebox{\textwidth}{!}{
    \begin{tabular}{llcccccc}
    \toprule
        $\text{LLM}_\theta$ & $\text{LLM}_J$ & \shortstack{Good summaries vs \\ bad summaries} & \shortstack{Good vs \\ almost-good} & \shortstack{Detailed vs \\ truncated} & \shortstack{Verbalized uncertainty vs \\ only majority answer} & \shortstack{Verbalized vs \\ or-concatenated} & \shortstack{Percentage vs \\ or-concatenated} \\
        \midrule
        Llama 3.1 8B Instruct & Llama 3.1 8B Instruct & 99.73\%\tiny{$\pm 0.37\%$}&	96.13\%\tiny{$\pm 1.38\%$}&	94.92\%\tiny{$\pm 3.96\%$}&	97.39\%\tiny{$\pm 2.91\%$}&	80.00\%\tiny{$\pm 7.31\%$}&	87.83\%\tiny{$\pm 5.98\%$} \\
        Phi 4 14B & Llama 3.1 8B Instruct & 99.75\%\tiny{$\pm 0.49\%$}&	97.50\%\tiny{$\pm 1.53\%$}&	100.00\%\tiny{$\pm 0.00\%$}&	96.30\%\tiny{$\pm 5.03\%$}&	51.85\%\tiny{$\pm 13.33\%$}&	66.67\%\tiny{$\pm 12.57\%$} \\
        Qwen2.5 7B Instruct & Llama 3.1 8B Instruct & 99.70\%\tiny{$\pm 0.34\%$}&	94.10\%\tiny{$\pm 1.46\%$}&	100.00\%\tiny{$\pm 0.00\%$}&	97.52\%\tiny{$\pm 2.77\%$}&	47.93\%\tiny{$\pm 8.90\%$}&	80.99\%\tiny{$\pm 6.99\%$} \\
        \midrule
        Llama 3.1 8B Instruct & Phi 4 14B & 99.87\%\tiny{$\pm 0.26\%$}&	94.93\%\tiny{$\pm 1.57\%$}&	94.92\%\tiny{$\pm 3.96\%$}&	99.13\%\tiny{$\pm 1.70\%$}&	87.83\%\tiny{$\pm 5.98\%$}&	86.09\%\tiny{$\pm 6.32\%$} \\
        Phi 4 14B & Phi 4 14B & 100.00\%\tiny{$\pm 0.00\%$}&	94.25\%\tiny{$\pm 2.28\%$}&	94.44\%\tiny{$\pm 5.33\%$}&	48.15\%\tiny{$\pm 13.33\%$}&	59.26\%\tiny{$\pm 13.11\%$}&	59.26\%\tiny{$\pm 13.11\%$} \\
        Qwen2.5 7B Instruct & Phi 4 14B & 99.70\%\tiny{$\pm 0.34\%$}&	93.10\%\tiny{$\pm 1.57\%$}&	98.71\%\tiny{$\pm 1.78\%$}&	95.04\%\tiny{$\pm 3.87\%$}&	59.50\%\tiny{$\pm 8.75\%$}&	75.21\%\tiny{$\pm 7.69\%$}\\
        \midrule
        Llama 3.1 8B Instruct & Qwen2.5 7B Instruct & 100.00\%\tiny{$\pm 0.00\%$}&	95.73\%\tiny{$\pm 1.45\%$}&	95.76\%\tiny{$\pm 3.64\%$}&	95.65\%\tiny{$\pm 3.73\%$}&	80.87\%\tiny{$\pm 7.19\%$}&	85.22\%\tiny{$\pm 6.49\%$} \\
        Phi 4 14B & Qwen2.5 7B Instruct & 99.25\%\tiny{$\pm 0.85\%$}&	96.75\%\tiny{$\pm 1.74\%$}&	98.59\%\tiny{$\pm 2.74\%$}&	94.44\%\tiny{$\pm 6.11\%$}&	70.37\%\tiny{$\pm 12.18\%$}&	77.78\%\tiny{$\pm 11.09\%$}\\
        Qwen2.5 7B Instruct & Qwen2.5 7B Instruct & 99.80\%\tiny{$\pm 0.28\%$}&	94.20\%\tiny{$\pm 1.45\%$}&	98.06\%\tiny{$\pm 2.17\%$}&	95.04\%\tiny{$\pm 3.87\%$}&	74.38\%\tiny{$\pm 7.78\%$}&	83.47\%\tiny{$\pm 6.62\%$} \\
        \midrule
        Llama 3.1 8B Instruct & Qwen2.5 72B Instruct & 99.87\%\tiny{$\pm 0.26\%$}&	96.13\%\tiny{$\pm 1.38\%$}&	97.46\%\tiny{$\pm 2.84\%$}&	99.13\%\tiny{$\pm 1.70\%$}&	86.96\%\tiny{$\pm 6.15\%$}&	78.26\%\tiny{$\pm 7.54\%$}\\
        Phi 4 14B & Qwen2.5 72B Instruct & 98.75\%\tiny{$\pm 1.09\%$}&	97.50\%\tiny{$\pm 1.53\%$}&	98.59\%\tiny{$\pm 2.74\%$}&	96.30\%\tiny{$\pm 5.03\%$}&	72.22\%\tiny{$\pm 11.95\%$}&	55.56\%\tiny{$\pm 13.25\%$}\\
        Qwen2.5 7B Instruct & Qwen2.5 72B Instruct & 99.80\%\tiny{$\pm 0.28\%$}&	94.40\%\tiny{$\pm 1.43\%$}&	99.35\%\tiny{$\pm 1.27\%$}&	99.17\%\tiny{$\pm 1.62\%$}&	75.21\%\tiny{$\pm 7.69\%$}&	66.94\%\tiny{$\pm 8.38\%$}\\
    \bottomrule
    \end{tabular}
    }
\end{table}

A mandatory component to calculate the SelfReflect metric is a judge $\text{LLM}_J$ that predicts which masked-out words are possible, given either a summary or a concatenation of samples. This judge needs to be able to "understand" both the details of the answer and the probabilistic aspect of this task, all the while not overwriting its context information with its own world knowledge when making the prediction. The choice of the judge can thus be seen as a hyperparameter to be optimized to produce SelfReflect scores that are as discriminative as possible between good and bad and almost-good summaries. We test four different judges in this section, Llama 3.1 8B Instruct, Phi 4 14B, Qwen 2.5 7B Instruct (which we ultimately use in the paper), and Qwen 2.5 72B Instruct. We generate answer distributions on Natural Questions for different $\text{LLM}_\theta$ (Llama 3.1 8B Instruct, Phi 4 14B, and Qwen 2.5 7B Instruct), then use Gemini 2.0 to generate summaries like in \cref{sec:goodvsbad}, and calculate how often SelfReflect correctly tells apart good from bad (or almost-good) summaries.

\cref{tab:judges} shows that SelfReflect is very robust to the choice of the judge LLM: All judges can tell apart good from bad summaries in almost all cases. In particular, there is also no indication of a ``home-bias'', i.e., that a judge would perform better in judging answer distributions that it sampled itself. This, along with the fact that especially bad summaries, which explicitly introduce statements that are wrong and go against the judge's world knowledge, are almost always judged as worse than good summaries, shows that there is no world-knowledge leakage. We attribute this to LLMs' abilities to predict from their context, and to the fact that SelfReflect runs its prediction both conditional on the summary and conditional on the answer distribution, so that should there be any world knowledge leakage, it would likely be equal and removed. 

To make the choice of which LLM judge to use, we pay particular attention to the last three columns of \cref{tab:judges}: Comparing a verbalized or percentage uncertainty answer to an or-concatenated answer is among the most subtle challenges and tests whether the judge correctly infers the relative probabilities in both the answer distributions and the summaries, even when they are not explicit. Here we see that the Qwen family sets itself slightly off Phi 4 and Llama 3.1. Within the Qwen family, the 7B model is within the confidence interval of the 72B model (with a mean result better for percentage vs or-concatenated, and worse for the other two), so we use it in the main paper due to its lower inference cost. We note that we also tried using a Qwen 2.5 0.5B Instruct judge, however, this small model was not able to tell apart good from bad summaries. Finally, we note that there exists a research opportunity in developing an LLM judge specialized to perform the SelfReflect judging, either to compress the 7B model into a smaller and faster one, or to improve the last bits of performance on challenging cases. However, we decide against this in this paper, since a specialized model would increase the complexity of our method and add a dependency on a particular model (-checkpoint), which is likely to be outdated soon in the fast-moving field of LLMs.

\newpage
\clearpage
\section{Example of SelfReflect scores per masked-out word} \label{app:example}

To deepen the understanding of how the SelfReflect score judges summaries, we provide a worked example. We break down the SelfReflect score to the penalty it gives to each masked-out word. To simplify this educational example, we use only $N=M=7$ samples and make the answers in the conditioning of the prompt equal to the masked-out test answers.

The question posed to the LLM is \emph{``Who received the first Nobel Prize in physics?''}. As can be seen below, the LLM's answer distribution includes Wilhelm Conrad Röntgen as most likely answer, as well as Hendrik Antoon Lorentz and Pieter Zeeman or Henri Becquerel as additional possibilities, and details on their work. Let us now first look at how SelfReflect judges a relatively bad summary of this distribution which just returns the greedy answer \emph{``Wilhelm Conrad Röntgen received the first Nobel Prize in Physics.''}. Overall, SelfReflect assigns this bad summary a distance of $\color{red}0.102$ (or taken $\times 1000$ like in \cref{tab:summary_methods_performance}: 102). This score is due to SelfReflect detecting that Hendrik Antoon Lorentz and Pieter Zeeman or Henri Becquerel are not predictable from the summary, and neither the details of the works, as we can see in the per-word penalties below (darker red = higher penalty). 

\begin{center}
\includegraphics[width=\linewidth,trim=0.4cm 0.5cm 0.4cm 1.5cm, clip]{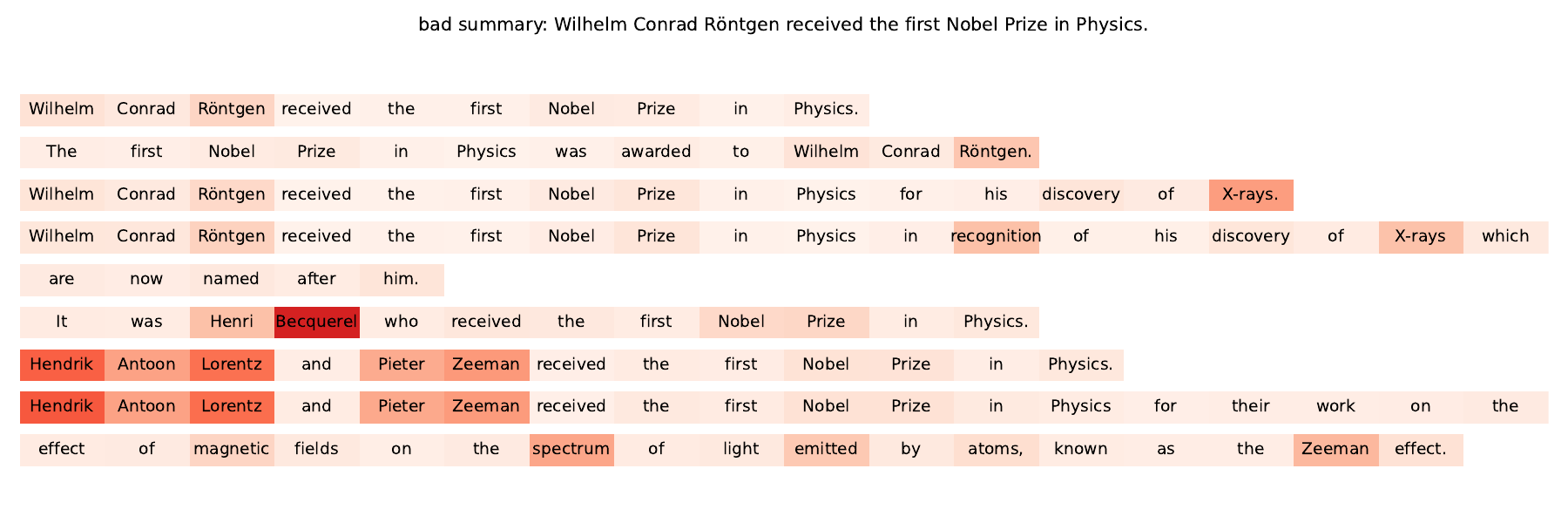} \vspace{-7mm}
\captionof{figure}{SelfReflect per-word penalties on how far the prediction of each masked-out word based on the summary \emph{``Wilhelm Conrad Röntgen received the first Nobel Prize in Physics.''} differs from the prediction based on the samples from the internal distribution. Total penalty: $\color{red}0.102$.}
\end{center}

We can now improve this summary by adding the two other possibilities, namely \emph{``It's most likely that Wilhelm Conrad Röntgen received the first Nobel Prize in Physics. But the laureates could also have been Hendrik Antoon Lorentz and Pieter Zeeman or Henri Becquerel.''}. With this better summary, SelfReflect correctly removes the penalty on Hendrik Antoon Lorentz, Pieter Zeeman, and Henri Becquerel. But it correctly still penalizes the summary for not mentioning the details of any of the works. This results in an overall score of $\color{red}0.084$ (or 84).

\begin{center}
\includegraphics[width=\linewidth,trim=0.4cm 0.5cm 0.4cm 1.5cm, clip]{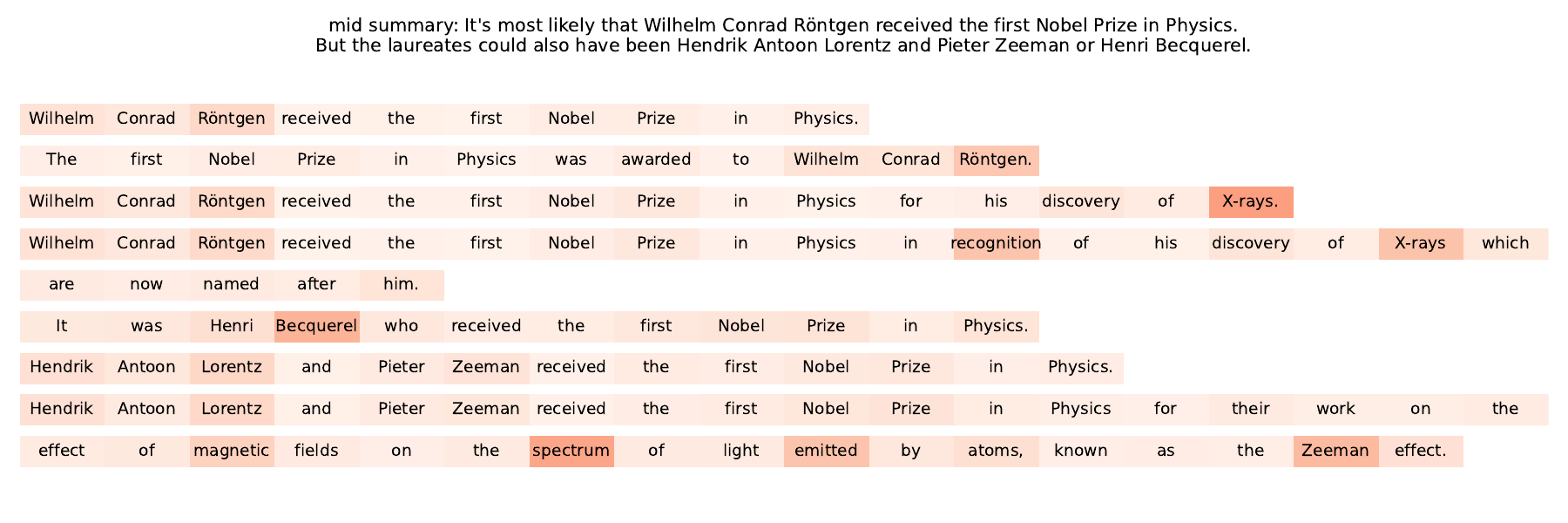} \vspace{-7mm}
\captionof{figure}{SelfReflect per-word penalties on how far the prediction of each masked-out word based on the summary \emph{``It's most likely that Wilhelm Conrad Röntgen received the first Nobel Prize in Physics. But the laureates could also have been Hendrik Antoon Lorentz and Pieter Zeeman or Henri Becquerel.''} differs from the prediction based on the samples from the internal distribution. Total penalty: $\color{red}0.084$.}
\end{center}

Having added all answer possibilities, we can now add details mentioned in the individual answers. As a good summary, we give \emph{``It's most likely that Wilhelm Conrad Röntgen received the first Nobel Prize in Physics in recognition of his discovery of X-rays which are now named after him. But the laureates could also have been Hendrik Antoon Lorentz and Pieter Zeeman or Henri Becquerel.''}. SelfReflect removes the penalty on X-rays, which the summary mentions. The remaining penalty of $\color{red}0.078$ (or 78) is due to the summary still not mentioning the details on the Zeeman effect, plus some remaining noise mostly on the names.

\begin{center}
\includegraphics[width=\linewidth,trim=0.4cm 0.5cm 0.4cm 1.5cm, clip]{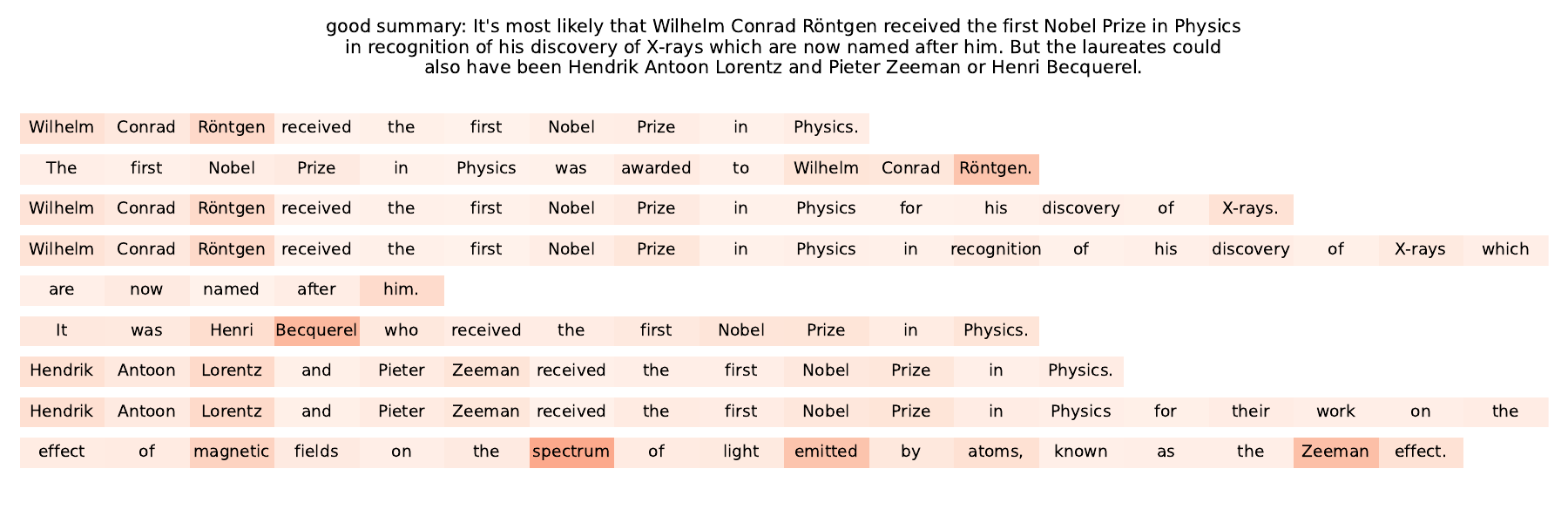} \vspace{-7mm}
\captionof{figure}{SelfReflect per-word penalties on how far the prediction of each masked-out word based on the summary \emph{``It's most likely that Wilhelm Conrad Röntgen received the first Nobel Prize in Physics in recognition of his discovery of X-rays which are now named after him. But the laureates could also have been Hendrik Antoon Lorentz and Pieter Zeeman or Henri Becquerel.''} differs from the prediction based on the samples from the internal distribution. Total penalty: $\color{red}0.078$.}
\end{center}

These examples demonstrate that SelfReflect punishes summaries for the correct reasons: Either when they don't mention all possibilities or all details of the actual internal answer distribution. We have seen in \cref{sec:goodvsbad,sec:mmlu} that SelfReflect also correctly punishes deviations from the relative frequencies. To this end, let us modify the second summary which previously had a score of $0.084$ (or 84) and state that Henri Becquerel was the most likely first Nobel laureate, which is in conflict with the LLM's internal answer distribution: \emph{``It's most likely that Henri Becquerel received the first Nobel Prize in Physics. But the laureates could also have been Hendrik Antoon Lorentz and Pieter Zeeman or Wilhelm Conrad Röntgen.''}. This correctly leads to higher penalties on Wilhelm Conrad Röntgen and Henri Becquerel because both of their implied probabilities are off (while keeping the same penalties on Hendrik Antoon Lorentz and Pieter Zeeman, as well as the in both cases unmentioned details on their works) and worsens the SelfReflect score to $\color{red}0.092$ (or 92).

\begin{center}
\includegraphics[width=\linewidth,trim=0.4cm 0.5cm 0.4cm 1.5cm, clip]{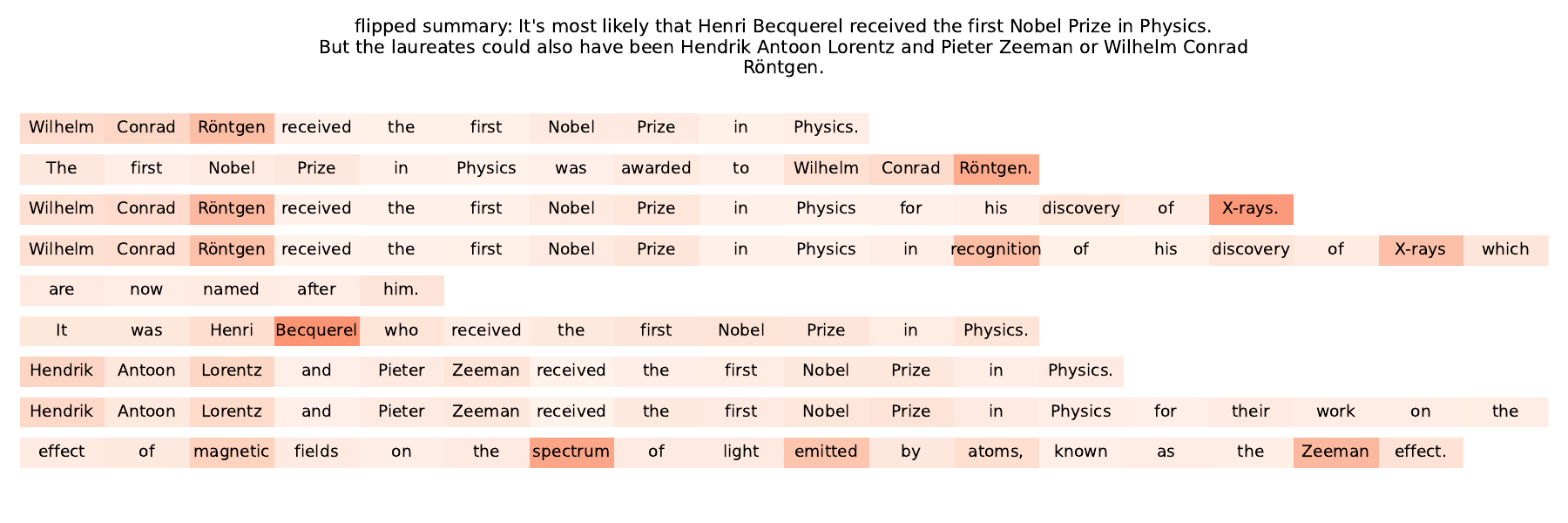} \vspace{-7mm}
\captionof{figure}{SelfReflect per-word penalties on how far the prediction of each masked-out word based on the summary \emph{``It's most likely that Henri Becquerel received the first Nobel Prize in Physics. But the laureates could also have been Hendrik Antoon Lorentz and Pieter Zeeman or Wilhelm Conrad Röntgen.''} (note that Henri Becquerel is in fact not the most likely; it is Wilhelm Conrad Röntgen) differs from the prediction based on the samples from the internal distribution. Total penalty: $\color{red}0.092$.}
\end{center}

This demonstrates that SelfReflect's score works as intended, not only on the dataset or question level as studied in the main paper, but also on a word-level granularity. This example is a regular case, one of the 95\%+ (see \cref{tab:edgecases}) where SelfReflect correctly scores the summaries. We note, however, that there are around 5\% of questions where it does not score correctly. In most of these cases, the scores of a good and a slightly worse summary are very close to one another and the mis-decision is mostly due to noise. We thus recommend to run SelfReflect over 1000 questions per dataset, as noted in \cref{app:stability} and the main paper, in order to smoothen out some of the remaining noise.

\section{Implementation details} \label{app:details}

\subsection{SelfReflect score}

To calculate the SelfReflect score, in every masked-out task we run the two prompts in \cref{fig:method} through a judge LLM, which is by default Qwen 2.5 7B-Instruct. This makes the judge predict the logits over the vocabulary size for the current token of the fill-in word. If a fill-in word consists of multiple tokens, where we add the tokens of the true fill-in word one after another into the autoregressive context of the assistant answer. Given the two fill-in token vectors conditioned either on the summary or on the concatenated answers, we apply a temperature of 5 to flatten it. This is in order to give some weight to synonyms, since instruct-tuned LM judges otherwise would give nearly probability 1 to only one possible token (in which case SelfReflect would still be valid, but simplify into comparing whether the two contexts lead to predicting the exactly same word). We found that a temperature of $\tau=5$ improves the SelfReflect score, making it able to discern good from almost-good summaries more often on a validation dataset. We then softmax the flattened logit vectors and calculate the 1-Wasserstein distance between the log probability vectors. Since these are categorical vectors, the 1-Wasserstein distance simplifies into the $L_1$ distance, times $0.5$. We repeat this over all tokens of a masked-out word, then over all masked-out words of each of the $M=50$ answers (that are not stopwords), then across all 1000 questions of a dataset. The global average gives the SelfReflect score. 

\subsection{SR sampling-free score}

The sampling-free ablation of the SelfReflect metric also gives two prompts to a judge LLM to calculate the masked-in task. The difference is that the prompt which in SelfReflect contains the sampled answers does not contain sampled answers. Instead it just gives the question and then the masked-out task.

\subsection{SR-PMI score}

The PMI ablation of the SelfReflect metric uses no masked out task. Instead it poses the question, gives either the summary or the sampled answers as background information in context, and then measures the logit vectors assigned to each token of each of the $M=50$ answers. In other words, the answers are not given word-by-word with masked-out tasks, but measured as one full answer. As for SelfReflect, we then calculate the 1-Wasserstein distance between the flattened logit vectors and average.

\subsection{SR-P(True) score}

In the P(True) ablation of SelfReflect, we turn the generative masked-out task into a discriminative one. We first generate three candidate words to fill in the masked word: One is the true masked word, one is a word sampled from the masked-out task prompt given the summary and the last is a word sampled from the masked-out task prompt given the distribution samples. With these candidate fill-in words, we then run two prompts, one conditional on the summary and one on the answers, to let the judge LLM predict how likely they fit in, see \cref{fig:method_ptrue}. As in the normal SelfReflect, this gives a distribution over the vocabulary size, concentrated on "True" and "False" tokens. We then compare the two flattened logit vectors via the 1-Wasserstein distance and average as in the original SelfReflect. 

\begin{figure}[h]
    \definecolor{blueish}{HTML}{0076BA}
    \definecolor{greenish}{HTML}{1DB100}
    \centering
    \includegraphics[width=\linewidth, trim=1.6cm 8.7cm 1.6cm 4cm, clip]{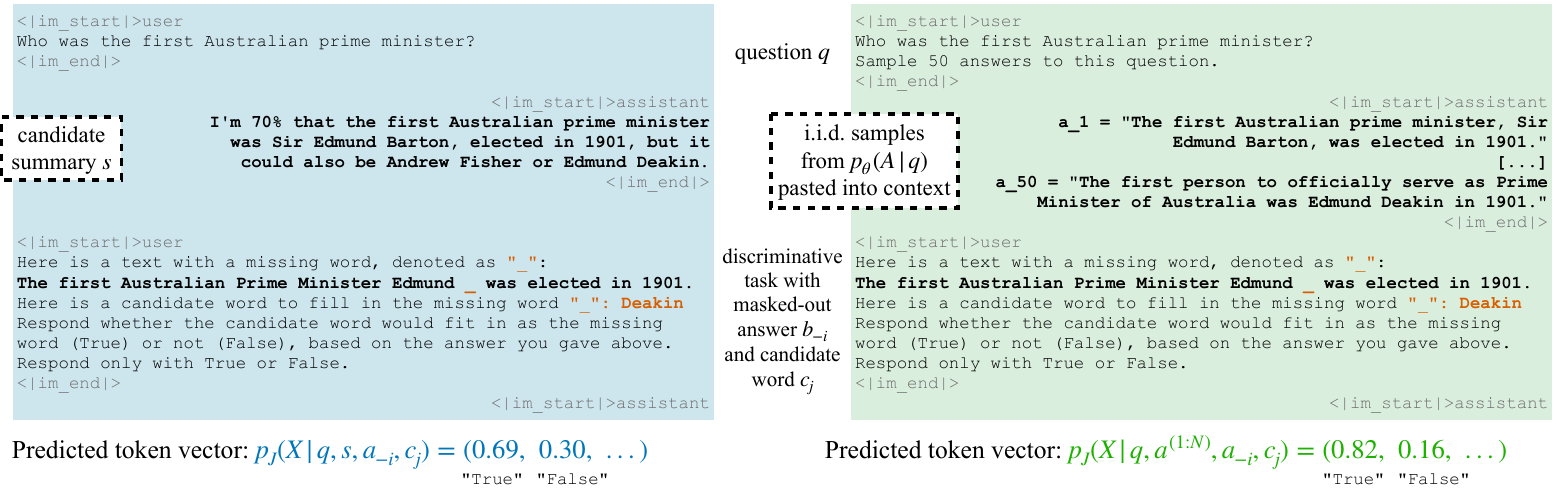}
    \scriptsize 
    $
        m_{\textit{SR-P(True)}}(s) 
        =
        \mathcal{W}^1
        \left(
            \color{blueish}
            \conditionalProbability{J}{}{X}{
                q, s, a_{-i}, c_j
            }^{1/\tau}
            \,
            \color{black}
            ,
            \color{greenish}
            \conditionalProbability{J}{}{
                X
            }{
                q, 
                    a^{\of{1:N}}
                ,  
                a_{-i}, c_j
            }^{1/\tau} \color{black}
        \right)
    $
    \caption{The P(True) ablation of SelfReflect adds a candidate word $c_j$ into the context and asks the judge LLM to classify whether this word fits as masked-out word or not. It compares the probability vectors predicted given either the summary or the concatenated samples.}
    \label{fig:method_ptrue}
\end{figure}

\subsection{Embedding score}

We compare the gte-Qwen2-7B-instruct \citep{li2023towards} embedding of the summary to the embedding of the samples of the distribution, normalize them and take the inner product to form cosine distances. We average over all samples. The reason why we select this particular embedding model is that at the time of submission it was the best-performing open-source model on the MTEB benchmark \citep{enevoldsen2025mmtebmassivemultilingualtext}.

\subsection{Summarization score} 
\label{app:prompts_summaries}

We follow the prompts of \citet{jain2023multi} that prompt an LLM to judge a summary in terms of consistency, fluency, relevance, and coherence with a few-shot example. We then normalize all scores to [0, 1] and average them to get the summarization score.

\begin{tcolorbox}[colback=gray!5, colframe=black, fontupper=\small\ttfamily, title=Prompt~\refstepcounter{prompt}\theprompt: Prompt for the 'Summarization' metric in \cref{tab:edgecases} to judge fluency., breakable,enhanced]
Fluency measures the quality of individual sentences, and whether they are well-written and grammatically correct. Rate the summary of a given text on a scale of 0 to 1 on fluency.\\

Here are some examples:
<4 few-shot examples>\\

Now here is the summary whose fluency you are supposed to rate:

Summary: \{summary\} \\
    
Fluency:
\end{tcolorbox}

\begin{tcolorbox}[colback=gray!5, colframe=black, fontupper=\small\ttfamily, title=Prompt~\refstepcounter{prompt}\theprompt: Prompt for the 'Summarization' metric in \cref{tab:edgecases} to judge coherence., breakable,enhanced]
Rate the following summaries on a scale from 0 to 1 on coherence, with a higher value corresponding to higher coherence. Coherence is a collective quality of all sentences. To score highly on it, the summary should be well-structured and well-organized. It should not just be a heap of related information, but should build from sentence to sentence to form a coherent body of information about the topic.\\

Here are some examples:
<4 few-shot examples>\\

Now here is the summary whose coherence you are supposed to rate:

Summary: \{summary\} \\
    
Coherence:
\end{tcolorbox}

\begin{tcolorbox}[colback=gray!5, colframe=black, fontupper=\small\ttfamily, title=Prompt~\refstepcounter{prompt}\theprompt: Prompt for the 'Summarization' metric in \cref{tab:edgecases} to judge consistency., breakable,enhanced]
Consistency measures whether the details in the summary reproduce the facts present in the text accurately. Rate the summary of given text on a scale from 0 to 1 on consistency.\\

Here are some examples:
<4 few-shot examples>\\

Now here is the text and summary whose consistency you are supposed to rate:

Text: We received many answers to our question '\{question\}'. Here they are: 

x\_1 = '\{answer\}'

...

x\_\{n\_answers\} = '\{answer\}' \\

Summary: \{summary\} \\
    
Consistency:
\end{tcolorbox}

\begin{tcolorbox}[colback=gray!5, colframe=black, fontupper=\small\ttfamily, title=Prompt~\refstepcounter{prompt}\theprompt: Prompt for the 'Summarization' metric in \cref{tab:edgecases} to judge relevance., breakable,enhanced]
Relevance is the quality of a summary to capture important information from a reference text. Rate the summary on a scale from 0 to 1 on relevance.\\

Here are some examples:
<4 few-shot examples>\\

Now here is the text and summary whose relevance you are supposed to rate:

Text: We received many answers to our question '\{question\}'. Here they are: 

x\_1 = '\{answer\}'

...

x\_\{n\_answers\} = '\{answer\}' \\

Summary: \{summary\} \\
    
Relevance:
\end{tcolorbox}

\subsection{LM Judge score}

We follow \citet{xu2024sayself} to build a metric that asks an LM judge to chain-of-thoughts think and rate how well a summary matches a distribution of answers, including a few-shot example. The prompt is shown below.

\begin{tcolorbox}[colback=gray!5, colframe=black, fontupper=\small\ttfamily, title=Prompt~\refstepcounter{prompt}\theprompt: Prompt for the 'LM Judge' metric in \cref{tab:edgecases}., breakable,enhanced]
Your task is to analyze whether a summarized answer correctly contains all the possibilities that {len(answers)} individual answers to a question mention.

Note that some individual answers occur more often than other individual answers. You should output a score from 0 to 10, indicating whether the summarized answer mentions all possibilities and whether it correctly outlines which are the most often appearing individual answers and which appear less often. A higher score is means the summarized answer matches the distribution of individual answers better.

Also note that some individual answers may be factually wrong. Do not correct those, just report how good the summarized answer matches the individual answers.

You should first provide your reasoning for how well the summarized answer matches the distribution over individual answers, and then assign a score based on this reasoning. The output should be in the following format:  \\

Reason: [REASON] 

Score: [SCORE]\\

Here is an example:

Question: <Example question>

Individual answers:

<Example answer samples>

Summarized answer: <Example summary> \\

Then your output can be:

Reason: The summarized answer mentions the most likely possibility, and it also correctly mentions that this is the most likely one. For other possibilities, it mentions Wilhelm Conrad Röntgen, but does not mention that he got the award for his discovery of x-rays, which the individual answers do mention. It also does not mention the possibility of Hendrik Antoon Lorentz and Pieter Zeeman, which the individual answers mention.

Score: 8 \\

Now consider the following case:

\{question\}

Individual answers:

x\_1 = '\{answer\}'

...

x\_\{n\_answers\} = '\{answer\}' \\

Summarized answer: '\{summary\}' \\

Please provide the reason and the score of how good the summarized answer matches the distribution of individual answers.
\end{tcolorbox}

\subsection{Optimal Transport score}

The optimal transport metric consists of two steps. First, we break down a summary into a distribution of statements and their probabilities. This is done with the following prompt.

\begin{tcolorbox}[colback=gray!5, colframe=black, fontupper=\small\ttfamily, title=Prompt~\refstepcounter{prompt}\theprompt: Prompt for the 'Optimal transport' metric in \cref{tab:edgecases} to split a summary into core statements and their probabilities., breakable,enhanced]
Question: \{question\} \\

Here is some background information. This background information defines a distribution of possible answers you can later sample from: 

\{summary\} \\

Now, split this distribution up into its mutually fundamental statements and the explicitly or implicitly connected probabilities.

Split it up such that each statement is mutually exclusive and the probabilities sum to 1.

Include an 'I don't know' statement with the remaining percentage if the background information explicitly mentions not being certain.

Return a json file with a list of dictionaries, where in every dictionary the first key is called 'prob' and includes the numerical probability and the second key is 'statement' and includes a string of the fundamental statement.
\end{tcolorbox}

In the second step, we use an NLI model to calculate an entailment probability in [0, 1] between how much each sample answer entails each statement and vice versa. We multiply $(1 - \text{entailment probability})$ of both directions to get an distance score for each sample answer and statement. This defines a distance matrix with the statements as rows and the sample answers as columns. Besides a distance matrix, optimal transport also requires marginals for both rows and columns. For the rows, we use the probabilities assigned to the statements in the above prompt. For the columns, we assign each individual sample answer a uniform probability. We then compute the earth movers distance using \citet{flamary2021pot}. This matches sample answers to summary statements in such a manner that the marginals are preserved and that overall all pairs in sum have the smallest possible distance. The resulting overall distance then tells how far the answer samples are from the summary.

\subsection{InfoSumm score}

\hl{InfoSumm \citep{jung2024informationtheoretic} is a metric from summarization literature to compare the pointwise mutual information between a summary and a document string based on masked-out tasks, similar to SelfReflect in spirit but with some notable differences in the estimator. InfoSumm expects to have a summary string $s$ and a long document, which corresponds to concatenated answers from the sampled distribution $a^{\of{1:N}}$ in our case. They then estimate PMI once via saliency and once via faithfulness. Saliency predicts masked words of the long document with an LLM judge that is given the summary, and faithfulness predicts masked words of the summary with an LLM judge given the long document.\footnote{There is no differentiation between answers for the conditioning $a$ and for masked-out tasks $b$ in their setup, because they never condition one on the other.} After removing terms in the denominator that are independent of the summary in question, this gives two log likelihood terms:
\begin{align}
    \texttt{saliency\_score}&:=\log\left(\conditionalProbability{
            \textrm{LLM}_J
        }{}{A_{m}^{\of{1:N}} = a_{m}^{\of{1:N}}}{Q = q, A_{-m}^{\of{1:N}} = a_{-m}^{\of{1:N}}, s} \right),  \hspace{-1cm}\\
    \texttt{faithfulness\_score}&:=\log\left(\conditionalProbability{
            \textrm{LLM}_J
        }{}{S_{m} = s_{m}}{Q = q, S_{-m} = s_{-m}, A^{\of{1:N}} = a^{\of{1:N}}} \right),
\end{align}
where $x_{m}$ denotes masked-out words and $x_{-m}$ the remaining non-masked text. To make the masked-out task for saliency less ambiguous to the LLM judge (since there may be duplicated answers in $a_{m}^{\of{1:N}}$), we compute the saliency score per answer:
\begin{align}
    \texttt{saliency\_score}&:=\frac{1}{N} \sum\limits_{n=1}^{N}\log\left(\conditionalProbability{
            \textrm{LLM}_J
        }{}{A_{m}^{n} = a_{m}^{n}}{Q = q, A_{-m}^{n} = a_{-m}^{n}, s} \right).
\end{align}
In the original paper, these terms are thresholded and used as filters to create train datasets. In our application as a score, we add them up and multiply by $(-1)$ to ensure that lower is better. This gives the InfoSumm baseline:
\begin{align}
    \texttt{InfoSumm\_score}&:=-(\texttt{saliency\_score} + \texttt{faithfulness\_score}).
\end{align}
We integrate this over the masked-out tasks of all $M$ answers as we do for SelfReflect.}

\subsection{Licensing information}
\cref{tab:licenses} contains licensing information for models used in this paper.

\begin{table}[h]
\scriptsize
    \centering
    \caption{Licencing information for models used in this work}
    \label{tab:licenses}
    \begin{tabular}{p{4
    cm}p{5cm}l}
         \toprule
         LLM & License & Reference  \\
         \midrule
         DeepSeek R1 Distill Qwen 2.5 32B & MIT & \citet{deepseekai2025deepseekr1incentivizingreasoningcapability}\\
         Gemma 3 family & \url{https://ai.google.dev/gemma/terms} & \citet{team2025gemma} \\
         Gemini 2.0 Flash & Apache 2.0 & \\
         Ministral 8B Instruct 2410 & \url {https://mistral.ai/static/licenses/MRL-0.1.md}& \citet{ministral} \\
         Llama 3.1 70B Instruct & \url{https://www.llama.com/llama3_1/license/} & \citet{llama3.1} \\
         Llama 3.3 70B Instruct & \url{https://www.llama.com/llama3_3/license/} & \citet{llama3.3}\\
         Llama 4 Scout 17b 16e Instruct & \url{https://www.llama.com/llama4/license/} & \citet{llama4} \\
         Phi-4 & MIT & \citet{abdin2024phi} \\
         gte Qwen 2 7B Instruct & Apache-2.0 & \citet{li2023towards}\\
         Qwen 2.5 family & Apache-2.0 & \citet{qwen2.5}\\
         Qwen 3 family & Apache-2.0 &\citet{qwen3} \\
         QwQ 32B & Apache-2.0 & \citet{qwq32b} \\
         \bottomrule
         
    \end{tabular}
\end{table}

\newpage
\section{Rating good and bad summaries written by humans} \label{app:goodvsbadhuman}

\begin{table}[h]
    \caption{Mirroring \cref{tab:edgecases}, we compare how often our SelfReflect metric, and other possible metrics, discriminates good from bad summaries of answer distributions. In this version, the summaries are written by humans rather than by Gemini, on a disjoint set of questions. Mean $\pm$ 95\% interval. Confidence intervals are larger than in \cref{tab:edgecases} because we have less manually written summaries of answer distributions than the automated ones in \cref{tab:edgecases}.}
    \label{tab:edgecases_human}
    \resizebox{\textwidth}{!}{
    \begin{tabular}{lcccccc}
    \toprule
        Metric & \shortstack{Good summaries vs \\ bad summaries} & \shortstack{Good vs \\ almost-good} & \shortstack{Detailed vs \\ truncated} & \shortstack{Verbalized uncertainty vs \\ only majority answer} & \shortstack{Verbalized vs \\ or-concatenated} & \shortstack{Percentage vs \\ or-concatenated} \\
    \midrule
        Summarization & 98.20\%\tiny{$\pm 1.43\%$}&	60.18\%\tiny{$\pm 5.25\%$}&	70.00\%\tiny{$\pm 20.08\%$}&	24.14\%\tiny{$\pm 7.93\%$}&55.17\%\tiny{$\pm 18.10\%$}& 51.72\%\tiny{$\pm 18.19\%$} \\
        LM Judge & 98.50\%\tiny{$\pm 1.30\%$}&	54.19\%\tiny{$\pm 5.34\%$}&	55.00\%\tiny{$\pm 21.80\%$}&	34.48\%\tiny{$\pm 17.30\%$}&	37.93\%\tiny{$\pm 17.66\%$}& 24.14\%\tiny{$\pm 15.58\%$} \\
        Opt. Transport & 78.14\%\tiny{$\pm 4.43\%$}&	57.19\%\tiny{$\pm 5.31\%$}&	10.00\%\tiny{$\pm 13.15\%$}&	58.62\%\tiny{$\pm 17.93\%$}&	79.31\%\tiny{$\pm 14.74\%$}&	72.41\%\tiny{$\pm 16.27\%$} \\
        Embedding & 74.85\%\tiny{$\pm 4.65\%$}&	44.01\%\tiny{$\pm 5.32\%$}&	60.00\%\tiny{$\pm 21.47\%$}&	20.69\%\tiny{$\pm 14.74\%$}&	41.38\%\tiny{$\pm 17.93\%$}&	13.79\%\tiny{$\pm 12.55\%$} \\
        SR-PMI & 85.33\%\tiny{$\pm 3.79\%$}&	60.18\%\tiny{$\pm 5.25\%$}&	75.00\%\tiny{$\pm 18.98\%$}&	\hspace{1mm}3.45\%\tiny{$\pm 6.64\%$}&	24.14\%\tiny{$\pm 15.58\%$}&	31.03\%\tiny{$\pm 16.84\%$} \\
        SR-sampling-free & 92.22\%\tiny{$\pm 2.87\%$}&	80.24\%\tiny{$\pm 4.27\%$}&	75.00\%\tiny{$\pm 18.98\%$}&	62.07\%\tiny{$\pm 17.66\%$}&	62.07\%\tiny{$\pm 17.66\%$}&	58.62\%\tiny{$\pm 17.93\%$} \\
        SR-P(True) & 47.90\%\tiny{$\pm 5.36\%$}&	58.38\%\tiny{$\pm 5.29\%$}&	35.00\%\tiny{$\pm 20.90\%$}&	\textbf{96.55\%\tiny{$\pm 6.64\%$}}&	82.76\%\tiny{$\pm 13.75\%$}&	79.31\%\tiny{$\pm 14.74\%$} \\
        SelfReflect & \textbf{99.70\%\tiny{$\pm 0.59\%$}}&	\textbf{94.61\%\tiny{$\pm 2.42\%$}}&	\textbf{95.00\%\tiny{$\pm 9.55\%$}}&	86.21\%\tiny{$\pm 12.55\%$}&	\textbf{93.10\%\tiny{$\pm 9.22\%$}}&	\textbf{82.76\%\tiny{$\pm 13.75\%$}} \\
    \bottomrule
    \end{tabular}
    }
\end{table}

In \cref{tab:edgecases} in the main paper, we use Gemini 2.0 Flash to generate various types of good and bad summaries from sampled answers. We choose an automated LLM approach because it is more scalable (with accordingly preciser 95\% intervals) and reproducible than manual annotation. For reproducibility, we also report the prompts below. To ensure the quality of the results, we have, however, also replicated the experiments where we wrote the summaries manually for 334 questions of the Natural Questions dataset, on a disjoint split from those in \cref{tab:edgecases}. \cref{tab:edgecases_human} reports the results on how often SelfReflect, and other metrics, rated good summaries as better than their worse counterparts. The results are analogous to those in the main paper in that SelfReflect scores the highest on all metrics except on one where the P(True) ablation achieves a slightly better result. 

\begin{tcolorbox}[colback=gray!5, colframe=black, fontupper=\small\ttfamily, title=Prompt~\refstepcounter{prompt}\theprompt: Gemini 2.0 Flash prompt to generate 'good' summaries in \cref{tab:edgecases}., breakable,enhanced]
Below, you are given \{n\_answers\} individual answers to the question '\{question\}'. \\

Your goal is to summarize the \{n\_answers\} answers into one answer. 
\begin{itemize} 
\item The summarized answer should mention the main possibilities mentioned by the \{n\_answers\} answers. If a possibility is mentioned only once, it can be skipped so that the summary remains concise.
\item If some possibilities are mentioned much more often than others, delineate which possibilities are more often found in the others by using words like "most likely" and "could also be".
\item The format of the summarized answer should be the same as each individual answer. Provide only the answer, as if it were part of the \{n\_answers\} answers, without statements like "The answers include...".
\item Similarly, the summarized answer should use the same wording as the original answers. If the original answer always uses "is situated", then use "is situated" and not "is located".
\item The summarized answer should reflect what the \{n\_answers\} answers deem possible. They can contain factually wrong options. Do not correct those, just report the possibilities as they are given in the answers.
\end{itemize} 

Here are the \{n\_answers\} answers:

x\_1 = '\{answer\}'

...

x\_\{n\_answers\} = '\{answer\}'\\
    
Please provide the summarized answer.
\end{tcolorbox}

\begin{tcolorbox}[colback=gray!5, colframe=black, fontupper=\small\ttfamily, title=Prompt~\refstepcounter{prompt}\theprompt: Gemini 2.0 Flash prompt to generate an 'almost-good' summary from a 'good' summary in \cref{tab:edgecases}. Also used to generate 'truncated' from 'detailed' summaries., breakable,enhanced]
Below, you are given an answer to the question '\{question\}'. \\

Your goal is to shorten the answer.
\begin{itemize}
\item If the answer mentions multiple possibilities, only return the main possibility.
\item If the answer includes a main answer and details, remove the details.
\item The shortened answer should have the same format as the original answer. If the original answer uses full sentences, the shortened answer should also use a full sentence.
\item The shortened answer should use the same wording as the original answers. If the original answer always uses "is situated", then use "is situated" and not "is located".
\item The answer can contain factually wrong options. Do not correct those, just shorten what the answer says, even if it is factually wrong.
\end{itemize}

Original answer: \{good\_summary\} \\

Please provide the shortened answer.
\end{tcolorbox}

\begin{tcolorbox}[colback=gray!5, colframe=black, fontupper=\small\ttfamily, title=Prompt~\refstepcounter{prompt}\theprompt: Gemini 2.0 Flash prompt to generate a 'bad' summary from a 'good' summary in \cref{tab:edgecases}., breakable,enhanced]
Below, you are given a response to the question '\{question\}'. \\

Your goal is to change the answer.
\begin{itemize}
\item The answer should generally stay close to the original answer, with only some key factual terms changed.
\item The answer might already be factually wrong. But the goal is still to change the key facts, so that the changed answer is different from the original one.
\item The changed answer should have the same format as the original answer. The structure should remain the same, only keywords should be exchanged.
\item The changed answer should also use the same wording as the original answers for any non-factual words. If the original answer always uses "is situated", then use "is situated" and not "is located".
\end{itemize}

Original answer: \{good\_summary\} \\

Please provide the changed answer.
\end{tcolorbox}

For verbalized, percentage, or-concatenated and majority answers, we first use a prompt to cluster the answer distribution into clusters of statements and which answers belong to which cluster statement:

\begin{tcolorbox}[colback=gray!5, colframe=black, fontupper=\small\ttfamily, title=Prompt~\refstepcounter{prompt}\theprompt: Gemini 2.0 Flash prompt to cluster samples from an answer distribution into a list of cluster representatives and cluster memberships., breakable,enhanced]
Below, you are given \{n\_answers\} individual answers to the question '\{question\}'. These \{n\_answers\} answers can be seen as samples from an answer distribution. Your goal is to cluster the distribution in two steps: \\

First step: Find the clusters and their representatives.
\begin{itemize}
\item Each cluster contains a set of answers that are essentially the same. This means they may vary in the level of detail, but their primary answer should be the same.
\item Different clusters should be mutually exclusive answers. 
\item There are at least two clusters.
\item The answer can contain factually wrong options. Do not correct them, just cluster the answers as they are.
\item Output a json file with each entry giving the "cluster\_id" (cluster\_1, cluster\_2, ...), and a "representative\_answer", copy-pasted from the answers below.
\end{itemize}

Second step: Match the answers to their clusters.
\begin{itemize}
\item Match each of the \{n\_answers\} individual answers to one cluster representative.
\item Output a json file with each entry giving the "cluster\_id" (cluster\_1, cluster\_2, ...) and the "cluster\_members", a list of [x\_1, x\_26, ...].
\end{itemize}

Here are the \{n\_answers\} answers:

x\_1 = '\{answer\}'

...

x\_\{n\_answers\} = '\{answer\}'\\

Please output the two json files, one after another. Each json file should start with ```json 
\end{tcolorbox}

We then count how many member each cluster has (manually in code as opposed to asking the LLM since this increases accuracy), and provide lists of the representative answers of each cluster and their relative frequencies to build the percentage and or-concatenated summaries. We sort the resulting list frequency. For the majority answer, we directly return the representative answer of the highest-likely cluster. The verbalized uncertainty summary is built by removing the percentages in their brackets from the percentage summary.

\begin{tcolorbox}[colback=gray!5, colframe=black, fontupper=\small\ttfamily, title=Prompt~\refstepcounter{prompt}\theprompt: Gemini 2.0 Flash prompt to generate 'percentage-uncertainty' summaries in \cref{tab:edgecases}., breakable,enhanced]
Below, you are given list of answers with their probabilities to the question '\{question\}'. \\

Your goal is to stitch these answers together into one sentence.
\begin{itemize}
\item The sentence should have the structure 'It is most likely that <Answer A> (<probability of Answer A>\% sure), but it could also be <Answer B> (<probability of Answer B>\% sure) or <Answer C> (<probability of Answer C>\% sure) or ...'
\item Stick to the original wording of the answers as much as possible, but you can add small words so that the sentence becomes a grammatically coherent sentence.
\item The answer can contain factually wrong options. Do not correct those, just stitch together the answer options, even if it is factually wrong.
\end{itemize}

List of answers: 

[

\,\,  \{

\,\,\,\,    'prob': 0.72,
    
\,\,\,\,    'statement': ...
    
\,\,  \},

\,\,  \{

\,\,\,\,    'prob': 0.22,
    
\,\,\,\,    'statement': ...
    
\,\,  \}

]\\

Please provide the coherent sentence.
\end{tcolorbox}

\begin{tcolorbox}[colback=gray!5, colframe=black, fontupper=\small\ttfamily, title=Prompt~\refstepcounter{prompt}\theprompt: Gemini 2.0 Flash prompt to generate 'or-concatenated' summaries in \cref{tab:edgecases}., breakable,enhanced]
Below, you are given list of answers with their probabilities to the question '\{question\}'. \\

Your goal is to stitch these answers together into one sentence.
\begin{itemize}
\item The sentence should have the structure 'Either <Answer A> or <Answer B> or <Answer C> or ...'
\item The sentence should be grammatically coherent.
\item Stick to the original wording of the answers as much as possible.
\item The answer can contain factually wrong options. Do not correct those, just stitch together the answer options, even if it is factually wrong.
\end{itemize}

List of answers: 

[

\,\,  \{

\,\,\,\,    'prob': 0.72,
    
\,\,\,\,    'statement': ...
    
\,\,  \},

\,\,  \{

\,\,\,\,    'prob': 0.22,
    
\,\,\,\,    'statement': ...
    
\,\,  \}

]\\

Please provide the coherent sentence.
\end{tcolorbox}

\clearpage
\section{How well does SelfReflect distinguish good from bad summaries} \label{sec:table_1_per_dataset}

\newcommand{\res}[2]{#1\tiny{$\pm$ #2}}
\newcommand{\respct}[2]{#1\%\tiny{$\pm$ #2\%}}
\begin{table}[h]
    \caption{How well does SelfReflect, and other metrics, discriminate between good and bad summaries of answer distributions. For each column, the second summary lacks textual details or misrepresents relative probabilities. Mean $\pm$ 95\% confidence interval. Results per dataset, where \cref{tab:edgecases} shows results averaged across datasets.}
    \label{tab:edgecases_per_dataset}
    \resizebox{\textwidth}{!}{
    \begin{tabular}{lcccccc}
    \toprule
        Metric & \shortstack{Good summaries vs \\ bad summaries} & \shortstack{Good vs \\ almost-good} & \shortstack{Detailed vs \\ truncated} & \shortstack{Verbalized uncertainty vs \\ only majority answer} & \shortstack{Verbalized vs \\ or-concatenated} & \shortstack{Percentage vs \\ or-concatenated} \\
    \midrule
        \multicolumn{7}{l}{\textbf{Natural Questions}} \\
        Summarization & 98.70\%\tiny{$\pm 0.70\%$}&	40.67\%\tiny{$\pm 3.12\%$}&	53.55\%\tiny{$\pm 7.85\%$}&	11.57\%\tiny{$\pm 5.70\%$}&57.02\%\tiny{$\pm 8.82\%$}& 65.29\%\tiny{$\pm 8.48\%$} \\
        LM Judge & 99.10\%\tiny{$\pm 0.59\%$}&	50.21\%\tiny{$\pm 3.17\%$}&	65.16\%\tiny{$\pm 7.50\%$}&	24.79\%\tiny{$\pm 7.69\%$}&	33.88\%\tiny{$\pm 8.43\%$}& 38.02\%\tiny{$\pm 8.65\%$} \\
        Opt. Transport & 91.89\%\tiny{$\pm 1.69\%$}&	56.50\%\tiny{$\pm 3.15\%$}&	45.16\%\tiny{$\pm 7.83\%$}&	48.76\%\tiny{$\pm 8.91\%$}&	47.11\%\tiny{$\pm 8.89\%$}&	70.25\%\tiny{$\pm 8.15\%$} \\
        Embedding & 94.49\%\tiny{$\pm 1.41\%$}&	31.34\%\tiny{$\pm 2.94\%$}&	47.10\%\tiny{$\pm 7.86\%$}&	\hspace{1.5mm}5.79\%\tiny{$\pm 4.61\%$}&	46.28\%\tiny{$\pm 8.88\%$}&	38.02\%\tiny{$\pm 8.65\%$} \\
        SR-logl & 99.10\%\tiny{$\pm 0.59\%$}&	90.04\%\tiny{$\pm 1.90\%$}&	90.97\%\tiny{$\pm 4.51\%$}&	66.94\%\tiny{$\pm 8.38\%$}&	39.67\%\tiny{$\pm 8.72\%$}&	49.59\%\tiny{$\pm 8.91\%$} \\
        SR-PMI & 90.89\%\tiny{$\pm 1.78\%$}&	47.90\%\tiny{$\pm 3.17\%$}&	69.68\%\tiny{$\pm 7.24\%$}&	23.97\%\tiny{$\pm 7.61\%$}&	\hspace{1.5mm}9.09\%\tiny{$\pm 5.12\%$}&	16.53\%\tiny{$\pm 6.62\%$} \\
        SR-sampling-free & 96.30\%\tiny{$\pm 1.17\%$}&	74.42\%\tiny{$\pm 2.77\%$}&	82.58\%\tiny{$\pm 5.97\%$}&	39.67\%\tiny{$\pm 8.72\%$}&	29.75\%\tiny{$\pm 8.15\%$}&	30.58\%\tiny{$\pm 8.21\%$} \\
        SR-P(True) & 55.86\%\tiny{$\pm 3.08\%$}&	74.95\%\tiny{$\pm 2.75\%$}&	62.58\%\tiny{$\pm 7.62\%$}&	92.56\%\tiny{$\pm 4.68\%$}&	69.42\%\tiny{$\pm 8.21\%$}&	85.95\%\tiny{$\pm 6.19\%$} \\
        SelfReflect & 99.90\%\tiny{$\pm 0.20\%$}&	98.74\%\tiny{$\pm 0.71\%$}&	98.06\%\tiny{$\pm 2.17\%$}&	95.04\%\tiny{$\pm 3.87\%$}&	74.38\%\tiny{$\pm 7.78\%$}&	83.47\%\tiny{$\pm 6.62\%$} \\
    \midrule
        \multicolumn{7}{l}{\textbf{SimpleQA}} \\
        Summarization & \respct{85.40}{2.19} & \respct{36.36}{3.03} & \respct{50.00}{17.89} & \respct{26.32}{19.80} & \respct{63.16}{21.69} & \respct{68.42}{20.90} \\
        LM Judge & \respct{96.30}{1.17} & \respct{50.31}{3.15} & \respct{46.67}{17.85} & \respct{10.53}{13.80} & \respct{31.58}{20.90} & \respct{31.58}{20.90} \\
        Opt. Transport & \respct{73.50}{2.74} & \respct{68.70}{2.92} & \respct{16.67}{13.34} & \respct{42.11}{22.20} & \respct{63.16}{21.69} & \respct{68.42}{20.90} \\
        Embedding & \respct{97.80}{0.91} & \respct{81.61}{2.44} & \respct{83.33}{13.34} & \respct{10.53}{13.80} & \respct{42.11}{22.20} & \respct{36.84}{21.69} \\
        SR-logl & \respct{91.50}{1.73} & \respct{84.09}{2.30} & \respct{100.00}{0.00} & \respct{36.84}{21.69} & \respct{31.58}{20.90} & \respct{31.58}{20.90} \\
        SR-PMI & \respct{84.00}{2.27} & \respct{25.21}{2.74} & \respct{33.33}{16.87} & \respct{26.32}{19.80} & \respct{21.05}{18.33} & \respct{26.32}{19.80} \\
        SR-sampling-free & \respct{79.20}{2.52} & \respct{36.47}{3.03} & \respct{60.00}{17.53} & \respct{15.79}{16.40} & \respct{31.58}{20.90} & \respct{52.63}{22.45} \\
        SR-P(True) & \respct{72.10}{2.78} & \respct{87.50}{2.08} & \respct{83.33}{13.34} & \respct{84.21}{16.40} & \respct{63.16}{21.69} & \respct{78.95}{18.33} \\
        SelfReflect & \respct{96.60}{1.12} & \respct{91.63}{1.74} & \respct{100.00}{0.00} & \respct{68.42}{20.90} & \respct{68.42}{20.90} & \respct{73.68}{19.80} \\
    \midrule
        \multicolumn{7}{l}{\textbf{TriviaQA}} \\
        Summarization & \respct{95.90}{1.23} & \respct{43.02}{3.64} & \respct{53.25}{11.14} & \respct{37.25}{13.27} & \respct{58.82}{13.51} & \respct{62.75}{13.27} \\
        LM Judge & \respct{99.60}{0.39} & \respct{39.35}{3.60} & \respct{54.55}{11.12} & \hspace{1.55mm}\respct{9.80}{8.16} & \respct{37.25}{13.27} & \respct{29.41}{12.51} \\
        Opt. Transport & \respct{75.10}{2.68} & \respct{55.71}{3.66} & \respct{37.66}{10.82} & \respct{50.98}{13.72} & \respct{62.75}{13.27} & \respct{66.67}{12.94} \\
        Embedding & \respct{97.20}{1.02} & \respct{89.42}{2.26} & \respct{96.10}{4.32} & \respct{23.53}{11.64} & \respct{39.22}{13.40} & \respct{33.33}{12.94} \\
        SR-logl & \respct{98.50}{0.75} & \respct{82.79}{2.78} & \respct{72.73}{9.95} & \respct{45.10}{13.66} & \respct{47.06}{13.70} & \respct{54.90}{13.66} \\
        SR-PMI & \respct{90.30}{1.83} & \respct{25.95}{3.23} & \respct{28.57}{10.09} & \respct{29.41}{12.51} & \respct{23.53}{11.64} & \respct{27.45}{12.25} \\
        SR-sampling-free & \respct{89.30}{1.92} & \respct{53.60}{3.67} & \respct{59.74}{10.95} & \respct{45.10}{13.66} & \respct{49.02}{13.72} & \respct{50.98}{13.72} \\
        SR-P(True) & \respct{67.90}{2.89} & \respct{83.64}{2.72} & \respct{77.92}{9.26} & \respct{78.43}{11.29} & \respct{80.39}{10.90} & \respct{90.28}{8.16} \\
        SelfReflect & \respct{99.80}{0.28} & \respct{87.87}{2.40} & \respct{80.52}{8.85} & \respct{68.63}{12.73} & \respct{70.59}{12.51} & \respct{74.51}{11.96} \\
    \bottomrule
    \end{tabular}
    }
\end{table}

\section{MMLU tests of the SelfReflect metric per dataset} \label{app:mmlu_per_dataset}

\begin{table}[h]
\vspace{-4.7mm}
  \centering
  \caption{Rank Correlations between how SelfReflect, and others, rank good/overconfident/random summaries of MMLU multiple-choice summaries versus how a metric specialized for this task does. Mean $\pm$ 95\% CI. Per Q means we check the rank correlation on each individual question and then average the rank correlations globally, Avg means we let the approaches calculate the average score across all good/confident/random summaries of the 1000 questions and then rank them based on the average (like in a benchmark).}
    \label{tab:mmlu_per_dataset}
    \resizebox{0.5\textwidth}{!}{
    \begin{tabular}{lcccc}
    \toprule
        \multirow{2}{*}{Metric} & \multicolumn{2}{c}{Gemma 3 12B} & \multicolumn{2}{c}{Qwen 2.5 7B Instruct} \\
        & Per Q & Avg
        & Per Q & Avg\\
    \midrule
        Summarization       & \res{0.44}{0.03} & \res{0.80}{0.00} & \res{0.54}{0.06} & \res{0.80}{0.00} \\
        LM Judge            & \textbf{\res{0.80}{0.02}} & \textbf{\res{1.00}{0.00}} & \res{0.56}{0.06} & \res{0.72}{0.01}\\
        Opt. Transport      & \res{0.69}{0.02} & \res{0.88}{0.01} & \textbf{\res{0.60}{0.06}} & \res{0.81}{0.00}\\
        Embedding           & \res{0.29}{0.03} & \res{0.18}{0.02} & \res{0.02}{0.08} & \res{0.40}{0.00}\\
        SR-logl              & \res{0.58}{0.03} & \res{1.00}{0.00} & \res{0.64}{0.06} & \res{1.00}{0.00}\\
        SR-PMI              & \res{-0.03}{0.03} & \res{-0.20}{0.00} & \res{0.29}{0.08} & \res{0.47}{0.01}\\
        SR-sampling-free    & \res{0.57}{0.03} & \res{0.83}{0.00} & \res{-0.16}{0.09} & \res{-0.42}{0.01}\\
        SR-P(True)          & \res{0.66}{0.03} & \textbf{\res{1.00}{0.00}} & \res{0.08}{0.07} & \res{0.20}{0.00}\\
        SelfReflect         & \res{0.66}{0.03} & \textbf{\res{1.00}{0.00}} & \res{0.56}{0.07} & \textbf{\res{0.93}{0.01}} \\
    \bottomrule
    \end{tabular}
    }
    \vspace{-15mm}
\end{table}

\clearpage

\newpage
\clearpage
\section{User study details}\label{app:userstudy}

User studies were carried out using TryRating, with five raters per task. Raters were allowed to rate as many tasks as they wanted. All raters were US-based English speakers, and were paid \$18/hr.

The users were presented with the instructions shown in \cref{fig:userstudy}, which included two examples with hand-crafted summaries (for space reasons, we include only one summary here). 

\begin{figure}[h]
    \centering
    \includegraphics[width=\linewidth]{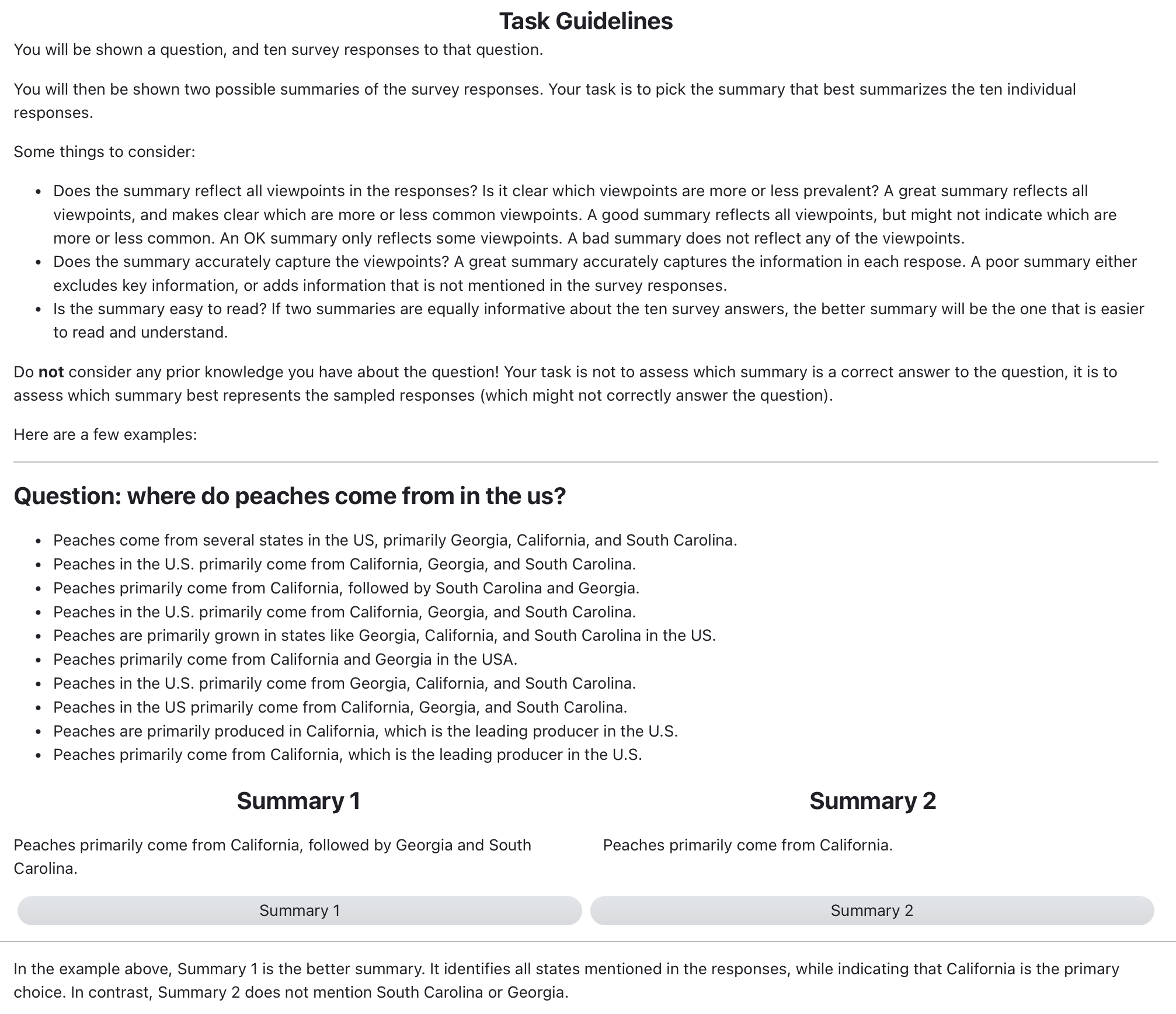}
    \caption{Instructions for user study (truncated; actual instructions contained a second example, which we have cut here for space).}
    \label{fig:userstudy}
\end{figure}

To ensure quality responses, we constructed twenty ``golden answer'' tasks, where the summaries were manually constructed to either fit the definition of ``good'', ``nearly good'', or ``bad'' summaries, as described in \Cref{sec:goodvsbad}. Ten of these questions were given as an entrance exam, with raters required to answer the golden answer in 80\% of tasks to proceed. The remaining ten questions were periodically included as verification checks. A total of 215 raters passed the entrance exam and contributed ratings.

Confidence intervals were calculated using 100 bootstrapped samples.

\newpage
\section{Automatic summary generation}
\label{app:detailed_summary_methods_performance}

\subsection{Experimental details}

In this section, we denote sampling parameters as  \texttt{\{T=1, topp=1, topk=None, minp=0\}}.

\paragraph{Non-reasoning/RLHF models}
For the models in \autoref{tab:summary_methods_performance}, 
the same model is used for both generating the answers, and generating the summaries.
We sample answers with \texttt{\{T=1, topp=1, topk=None, minp=0\}}.
For summaries generation, we use greedy decoding, i.e., \texttt{\{T=0, topp=1, topk=None, minp=0\}}.

\paragraph{Reasoning/RLVR models}
For the models in \autoref{tab:reasoning}, 
we also want to sample the answers without reasoning and make use of the reasoning only for the summaries generation. 
For Qwen3, it is possible to suppress the reasoning with \texttt{tokenizer.apply\_chat\_template(..., enable\_thinking=False)}.
For QwQ-32B and DeepSeek-R1-Distill-Qwen2.5-32B, we did not find a way to suppress reasoning and hence we sample the answers from Qwen2.5-32B-Instruct, which is the RLHF model which served as a base model for the RLVR training.
We sample answers with \texttt{\{T=1, topp=1, topk=None, minp=0\}}.

For the \emph{Greedy} summary generation, we use non-reasoning greedy decoding with the same model as for the answers generation, i.e., for rows QwQ-32B and DeepSeek-R1-Distill-Qwen2.5-32B, we use Qwen2.5-32B-Instruct. 

For the \emph{Basic} and \emph{Sample \& Summarize} summaries generation, we use the reasoning mode of each respective model. 
Unlike for RLHF models, we stray away from using greedy decoding for summary generation, because the creators of the reasoning models we use warn that the use of greedy decoding with reasoning  ``as it can lead to performance degradation and endless repetitions''.
Hence, for each model we use the respective recommended sampling parameters available on their respective HuggingFace model card.

\subsection{Prompts used}

\begin{tcolorbox}[colback=gray!5, colframe=black, fontupper=\small\ttfamily, title=Prompt~\refstepcounter{prompt}\theprompt: Prompt for the \emph{basic} summary generation method in \cref{sec:summary_methods_performance}., breakable,enhanced]
Please respond to the following question '\{question\}'. \\

Your goal is to summarize all possible answers to this question:
\begin{itemize}
\item If there are multiple possible answers, the summarized answer should mention the main possible answers. However, you do not have to list possibilities that are too unlikely.
\item If some possibilities are more likely than others, delineate which possibilities are more more likely by using words like "most likely" and "could also be".
\item The format of the summarized answer should be the same as a normal answer.
\item If there is only clear answer to the question, just provide that answer, without hedging across possibilities.
\end{itemize}

Please provide the summarized answer.
\end{tcolorbox}

\begin{tcolorbox}[colback=gray!5, colframe=black, fontupper=\small\ttfamily, title=Prompt~\refstepcounter{prompt}\theprompt: Prompt for the \emph{CoT} summary generation method in \cref{sec:summary_methods_performance}., breakable,enhanced]
Please respond to the following question '\{question\}'. \\

Your goal is to first reason about all possible answers to this question and then summarize them into a final answer:
\begin{itemize}
\item Reflect on whether there are multiple possible answers to this question.
\item If there are multiple possible answers, the summarized answer should mention the main possible answers. However, you do not have to list possibilities that are too unlikely.
\item If some possibilities are more likely than others, delineate which possibilities are more more likely by using words like "most likely" and "could also be".
\item The format of the summarized answer should be the same as a typical answer and be stand-alone.
\item If there is only clear answer to the question, just provide that answer, without hedging across possibilities.
\end{itemize}

The output should be in the following format: \\
Reasoning: [REASONING ABOUT WHICH POSSIBLITIES THERE ARE AND HOW LIKELY THEY ARE] \\
Summary: [SUMMARIZED ANSWER] \\

Please provide the reasoning and then the summarized answer.
\end{tcolorbox}

\subsection{Results per dataset}

\begin{table}[h]
  \centering
  \caption{
  SelfReflect score $\downarrow$ ($\times 10^{-3}$, rounded for readability) for the SimpleQA dataset, averaged across 1000 questions.
  The results in small font are relative to \emph{Greedy}.
  }
  \label{tab:summary_methods_performance_basicv8vc_simpleqa}
    \resizebox{\textwidth}{!}{
  \begin{tabular}{lrrrrrr}
    \toprule
    Model & \multicolumn{1}{c}{$\conditionalProbability{\theta}{}{A}{q}$} & \multicolumn{3}{c}{Single-decoding methods} & \multicolumn{2}{c}{Sample \& summarize} \\
    \cmidrule(rl){3-5}
    \cmidrule(rl){6-7}
    & unimodal & Greedy & Basic & CoT & $N=10$ & $N=20$ \\
    \midrule
    Qwen2.5 0.5B Instruct \citep{qwen2.5} & 1\% & 98 & 97\tiny{$\color{Green}-1$} & 95\tiny{$\color{Green}-3$} & 99\tiny{$\color{Red}+1$} & 99\tiny{$\color{Red}+1$} \\
    Qwen2.5 1.5B Instruct \citep{qwen2.5} & 2\% & 98 & 97\tiny{$\color{Green}\phantom{-}0$} & 93\tiny{$\color{Green}-5$} & 90\tiny{$\color{Green}-8$} & 89\tiny{$\color{Green}-8$} \\
    Qwen2.5 3B Instruct \citep{qwen2.5} & 5\% & 101 & 101\tiny{$\color{Green}\phantom{-}0$} & 99\tiny{$\color{Green}-1$} & 93\tiny{$\color{Green}-8$} & 91\tiny{$\color{Green}-9$} \\
    Qwen2.5 7B Instruct \citep{qwen2.5} & 15\% & 98 & 102\tiny{$\color{Red}+4$} & 102\tiny{$\color{Red}+3$} & 93\tiny{$\color{Green}-5$} & 92\tiny{$\color{Green}-6$} \\
    Qwen2.5 14B Instruct \citep{qwen2.5} & 23\% & 96 & 101\tiny{$\color{Red}+5$} & 102\tiny{$\color{Red}+7$} & 88\tiny{$\color{Green}-8$} & 87\tiny{$\color{Green}-9$} \\
    Qwen2.5 32B Instruct \citep{qwen2.5} & 17\% & 103 & 108\tiny{$\color{Red}+5$} & 110\tiny{$\color{Red}+8$} & 95\tiny{$\color{Green}-8$} & 94\tiny{$\color{Green}-8$} \\
    Qwen2.5 72B Instruct \citep{qwen2.5} & 18\% & 95 & 99\tiny{$\color{Red}+3$} & 100\tiny{$\color{Red}+5$} & 88\tiny{$\color{Green}-8$} & 87\tiny{$\color{Green}-9$} \\
    Phi 4 \citep{abdin2024phi} & 3\% & 99 & 99\tiny{$\color{Green}\phantom{-}0$} & 97\tiny{$\color{Green}-2$} & 89\tiny{$\color{Green}-10$} & 87\tiny{$\color{Green}-12$} \\
    Ministral 8B Instruct 2410 \citep{ministral} & 1\% & 117 & 116\tiny{$\color{Green}\phantom{-}0$} & 114\tiny{$\color{Green}-2$} & 109\tiny{$\color{Green}-7$} & 107\tiny{$\color{Green}-9$} \\
    Llama 3.1 70B Instruct \citep{llama3.1} & 16\% & 97 & 97\tiny{$\color{Green}\phantom{-}0$} & 97\tiny{$\color{Red}+1$} & 90\tiny{$\color{Green}-6$} & 90\tiny{$\color{Green}-7$} \\
    Llama 3.3 70B Instruct \citep{llama3.3} & 29\% & 100 & 103\tiny{$\color{Red}+3$} & 113\tiny{$\color{Red}+13$} & 92\tiny{$\color{Green}-8$} & 91\tiny{$\color{Green}-9$} \\
    Llama 4 Scout 17B 16e Instruct \citep{llama4} & 20\% & 95 & 100\tiny{$\color{Red}+5$} & 103\tiny{$\color{Red}+8$} & 89\tiny{$\color{Green}-6$} & 87\tiny{$\color{Green}-7$} \\
    Gemma 3 1B Instruct \citep{team2025gemma} & 17\% & 123 & 134\tiny{$\color{Red}+11$} & 135\tiny{$\color{Red}+11$} & 124\tiny{$\color{Green}\phantom{-}0$} & 118\tiny{$\color{Green}-6$} \\
    Gemma 3 4B Instruct \citep{team2025gemma} & 25\% & 118 & 135\tiny{$\color{Red}+16$} & 138\tiny{$\color{Red}+20$} & 108\tiny{$\color{Green}-10$} & 106\tiny{$\color{Green}-12$} \\
    Gemma 3 12B Instruct \citep{team2025gemma} & 35\% & 117 & 129\tiny{$\color{Red}+12$} & 135\tiny{$\color{Red}+18$} & 112\tiny{$\color{Green}-5$} & 112\tiny{$\color{Green}-5$} \\
    Gemma 3 27B Instruct \citep{team2025gemma} & 49\% & 109 & 124\tiny{$\color{Red}+16$} & 134\tiny{$\color{Red}+25$} & 103\tiny{$\color{Green}-5$} & 101\tiny{$\color{Green}-7$} \\
    \bottomrule
  \end{tabular}
}

\end{table}

\begin{table}[h]
  \centering
  \caption{
  SelfReflect score $\downarrow$ ($\times 10^{-3}$, rounded for readability) for the Natural Questions dataset, averaged across 1000 questions.
  The results in small font are relative to \emph{Greedy}.
  }
  \label{tab:summary_methods_performance_google-research-datasets_natural_questions}
    \resizebox{\textwidth}{!}{
  \begin{tabular}{lrrrrrr}
    \toprule
    Model & \multicolumn{1}{c}{$\conditionalProbability{\theta}{}{A}{q}$} & \multicolumn{3}{c}{Single-decoding methods} & \multicolumn{2}{c}{Sample \& summarize} \\
    \cmidrule(rl){3-5}
    \cmidrule(rl){6-7}
    & unimodal & Greedy & Basic & CoT & $N=10$ & $N=20$ \\
    \midrule
    Qwen2.5 0.5B Instruct \citep{qwen2.5} & 5\% & 92 & 90\tiny{$\color{Green}-1$} & 90\tiny{$\color{Green}-2$} & 92\tiny{$\color{Green}\phantom{-}0$} & 92\tiny{$\color{Green}\phantom{-}0$} \\
    Qwen2.5 1.5B Instruct \citep{qwen2.5} & 13\% & 90 & 91\tiny{$\color{Red}+1$} & 89\tiny{$\color{Green}-1$} & 85\tiny{$\color{Green}-5$} & 84\tiny{$\color{Green}-6$} \\
    Qwen2.5 3B Instruct \citep{qwen2.5} & 30\% & 95 & 96\tiny{$\color{Red}+1$} & 97\tiny{$\color{Red}+2$} & 88\tiny{$\color{Green}-7$} & 87\tiny{$\color{Green}-8$} \\
    Qwen2.5 7B Instruct \citep{qwen2.5} & 32\% & 94 & 98\tiny{$\color{Red}+4$} & 101\tiny{$\color{Red}+7$} & 90\tiny{$\color{Green}-5$} & 89\tiny{$\color{Green}-5$} \\
    Qwen2.5 14B Instruct \citep{qwen2.5} & 56\% & 91 & 97\tiny{$\color{Red}+6$} & 100\tiny{$\color{Red}+9$} & 86\tiny{$\color{Green}-5$} & 85\tiny{$\color{Green}-6$} \\
    Qwen2.5 32B Instruct \citep{qwen2.5} & 50\% & 94 & 100\tiny{$\color{Red}+6$} & 104\tiny{$\color{Red}+10$} & 89\tiny{$\color{Green}-5$} & 89\tiny{$\color{Green}-5$} \\
    Qwen2.5 72B Instruct \citep{qwen2.5} & 48\% & 89 & 94\tiny{$\color{Red}+5$} & 98\tiny{$\color{Red}+9$} & 84\tiny{$\color{Green}-5$} & 83\tiny{$\color{Green}-6$} \\
    Phi 4 \citep{abdin2024phi} & 36\% & 89 & 88\tiny{$\color{Green}-1$} & 92\tiny{$\color{Red}+3$} & 83\tiny{$\color{Green}-6$} & 82\tiny{$\color{Green}-7$} \\
    Ministral 8B Instruct 2410 \citep{ministral} & 17\% & 101 & 99\tiny{$\color{Green}-1$} & 99\tiny{$\color{Green}-2$} & 95\tiny{$\color{Green}-6$} & 94\tiny{$\color{Green}-7$} \\
    Llama 3.3 70B Instruct \citep{llama3.3} & 68\% & 91 & 96\tiny{$\color{Red}+5$} & 104\tiny{$\color{Red}+13$} & 86\tiny{$\color{Green}-5$} & 85\tiny{$\color{Green}-6$} \\
    Llama 4 Scout 17B 16e Instruct \citep{llama4} & 59\% & 90 & 96\tiny{$\color{Red}+6$} & 104\tiny{$\color{Red}+14$} & 88\tiny{$\color{Green}-2$} & 87\tiny{$\color{Green}-4$} \\
    Gemma 3 1B Instruct \citep{team2025gemma} & 22\% & 113 & 126\tiny{$\color{Red}+13$} & 127\tiny{$\color{Red}+14$} & 113\tiny{$\color{Red}+1$} & 108\tiny{$\color{Green}-5$} \\
    Gemma 3 4B Instruct \citep{team2025gemma} & 56\% & 106 & 123\tiny{$\color{Red}+17$} & 128\tiny{$\color{Red}+22$} & 100\tiny{$\color{Green}-6$} & 99\tiny{$\color{Green}-7$} \\
    Gemma 3 12B Instruct \citep{team2025gemma} & 57\% & 103 & 118\tiny{$\color{Red}+14$} & 121\tiny{$\color{Red}+17$} & 99\tiny{$\color{Green}-5$} & 99\tiny{$\color{Green}-5$} \\
    Gemma 3 27B Instruct \citep{team2025gemma} & 72\% & 100 & 116\tiny{$\color{Red}+15$} & 121\tiny{$\color{Red}+21$} & 98\tiny{$\color{Green}-3$} & 97\tiny{$\color{Green}-3$} \\
    \bottomrule
  \end{tabular}
}

\end{table}

\begin{table}[h]
  \centering
  \caption{
  SelfReflect score $\downarrow$ ($\times 10^{-3}$, rounded for readability) for the TriviaQA dataset, averaged across 1000 questions.
  The results in small font are relative to \emph{Greedy}.
  }
  \label{tab:summary_methods_performance_mandarjoshi_trivia_qa}
    \resizebox{\textwidth}{!}{
  \begin{tabular}{lrrrrrr}
    \toprule
    Model & \multicolumn{1}{c}{$\conditionalProbability{\theta}{}{A}{q}$} & \multicolumn{3}{c}{Single-decoding methods} & \multicolumn{2}{c}{Sample \& summarize} \\
    \cmidrule(rl){3-5}
    \cmidrule(rl){6-7}
    & unimodal & Greedy & Basic & CoT & $N=10$ & $N=20$ \\
    \midrule
    Qwen2.5 0.5B Instruct \citep{qwen2.5} & 15\% & 97 & 96\tiny{$\color{Green}-1$} & 96\tiny{$\color{Green}-1$} & 98\tiny{$\color{Red}+1$} & 98\tiny{$\color{Green}\phantom{-}0$} \\
    Qwen2.5 1.5B Instruct \citep{qwen2.5} & 36\% & 93 & 94\tiny{$\color{Red}+1$} & 93\tiny{$\color{Green}\phantom{-}0$} & 87\tiny{$\color{Green}-6$} & 87\tiny{$\color{Green}-7$} \\
    Qwen2.5 3B Instruct \citep{qwen2.5} & 45\% & 97 & 99\tiny{$\color{Red}+2$} & 100\tiny{$\color{Red}+2$} & 91\tiny{$\color{Green}-6$} & 90\tiny{$\color{Green}-7$} \\
    Qwen2.5 7B Instruct \citep{qwen2.5} & 60\% & 95 & 98\tiny{$\color{Red}+3$} & 100\tiny{$\color{Red}+5$} & 91\tiny{$\color{Green}-3$} & 91\tiny{$\color{Green}-4$} \\
    Qwen2.5 14B Instruct \citep{qwen2.5} & 76\% & 88 & 93\tiny{$\color{Red}+5$} & 94\tiny{$\color{Red}+6$} & 85\tiny{$\color{Green}-3$} & 84\tiny{$\color{Green}-4$} \\
    Qwen2.5 32B Instruct \citep{qwen2.5} & 79\% & 92 & 98\tiny{$\color{Red}+6$} & 101\tiny{$\color{Red}+8$} & 89\tiny{$\color{Green}-3$} & 89\tiny{$\color{Green}-3$} \\
    Qwen2.5 72B Instruct \citep{qwen2.5} & 85\% & 87 & 89\tiny{$\color{Red}+3$} & 90\tiny{$\color{Red}+4$} & 84\tiny{$\color{Green}-3$} & 83\tiny{$\color{Green}-4$} \\
    Phi 4 \citep{abdin2024phi} & 69\% & 89 & 88\tiny{$\color{Green}-1$} & 89\tiny{$\color{Green}\phantom{-}0$} & 84\tiny{$\color{Green}-5$} & 83\tiny{$\color{Green}-6$} \\
    Ministral 8B Instruct 2410 \citep{ministral} & 56\% & 104 & 103\tiny{$\color{Green}\phantom{-}0$} & 103\tiny{$\color{Green}-1$} & 99\tiny{$\color{Green}-4$} & 98\tiny{$\color{Green}-5$} \\
    Llama 3.1 70B Instruct \citep{llama3.1} & 82\% & 89 & 88\tiny{$\color{Green}-1$} & 89\tiny{$\color{Green}\phantom{-}0$} & 85\tiny{$\color{Green}-4$} & 85\tiny{$\color{Green}-5$} \\
    Llama 3.3 70B Instruct \citep{llama3.3} & 92\% & 91 & 93\tiny{$\color{Red}+2$} & 95\tiny{$\color{Red}+4$} & 89\tiny{$\color{Green}-2$} & 88\tiny{$\color{Green}-3$} \\
    Llama 4 Scout 17B 16e Instruct \citep{llama4} & 81\% & 89 & 92\tiny{$\color{Red}+2$} & 96\tiny{$\color{Red}+6$} & 87\tiny{$\color{Green}-2$} & 86\tiny{$\color{Green}-3$} \\
    Gemma 3 1B Instruct \citep{team2025gemma} & 40\% & 112 & 127\tiny{$\color{Red}+14$} & 125\tiny{$\color{Red}+12$} & 113\tiny{$\color{Green}\phantom{-}0$} & 108\tiny{$\color{Green}-4$} \\
    Gemma 3 4B Instruct \citep{team2025gemma} & 76\% & 101 & 114\tiny{$\color{Red}+13$} & 118\tiny{$\color{Red}+17$} & 96\tiny{$\color{Green}-5$} & 94\tiny{$\color{Green}-6$} \\
    Gemma 3 12B Instruct \citep{team2025gemma} & 84\% & 95 & 103\tiny{$\color{Red}+8$} & 107\tiny{$\color{Red}+13$} & 94\tiny{$\color{Green}-1$} & 93\tiny{$\color{Green}-1$} \\
    Gemma 3 27B Instruct \citep{team2025gemma} & 93\% & 91 & 99\tiny{$\color{Red}+8$} & 104\tiny{$\color{Red}+13$} & 91\tiny{$\color{Green}\phantom{-}0$} & 91\tiny{$\color{Green}\phantom{-}0$} \\
    \bottomrule
  \end{tabular}
}

\end{table}

\begin{table}[h]
  \centering
  \caption{
  Runtime of generating different summaries in seconds per prompt per GPU, averaged across all three datasets with 1000 questions each. Note that some models are sharded across multiple GPUs -- in this case, to represent their total computational requirements, we report the summed runtime of all GPUs, i.e., although a prompt may run through in one second using four GPUs, we will count it as four seconds total.
  In small font, relative comparisons w.r.t.\ \emph{Greedy}.
  }
  \label{tab:summary_methods_time}
    \resizebox{\textwidth}{!}{
  \begin{tabular}{lrrrr}
    \toprule
    Model & \multicolumn{3}{c}{Single-decoding methods} & \multicolumn{1}{c}{Sample \& summarize} \\
    \cmidrule(rl){2-4}
    \cmidrule(rl){5-5}
    & Greedy & Basic & CoT & $N\!=\!10$ \\
    \midrule
    Qwen2.5 0.5B Instruct \citep{qwen2.5} & 0.93 & 1.26\tiny{$\times1.35$} & 1.26\tiny{$\times1.36$} & 1.79\tiny{$\times1.93$} \\
    Qwen2.5 1.5B Instruct \citep{qwen2.5} & 0.94 & 0.91\tiny{$\times0.96$} & 1.29\tiny{$\times1.37$} & 1.89\tiny{$\times2.01$} \\
    Qwen2.5 3B Instruct \citep{qwen2.5} & 0.84 & 0.85\tiny{$\times1.02$} & 1.00\tiny{$\times1.20$} & 1.82\tiny{$\times2.18$} \\
    Qwen2.5 7B Instruct \citep{qwen2.5} & 0.80 & 0.84\tiny{$\times1.05$} & 1.09\tiny{$\times1.36$} & 1.91\tiny{$\times2.38$} \\
    Qwen2.5 14B Instruct \citep{qwen2.5} & 0.96 & 1.01\tiny{$\times1.05$} & 1.33\tiny{$\times1.38$} & 2.44\tiny{$\times2.53$} \\
    Qwen2.5 32B Instruct \citep{qwen2.5} & 1.11 & 1.22\tiny{$\times1.09$} & 1.72\tiny{$\times1.54$} & 3.42\tiny{$\times3.07$} \\
    Qwen2.5 72B Instruct \citep{qwen2.5} & 1.68 & 1.63\tiny{$\times0.97$} & 3.48\tiny{$\times2.07$} & 4.41\tiny{$\times2.62$} \\
    Phi 4 14B \citep{abdin2024phi} & 1.09 & 1.02\tiny{$\times0.94$} & 1.43\tiny{$\times1.31$} & 2.96\tiny{$\times2.71$} \\
    Ministral 8B Instruct 2410 \citep{ministral} & 0.91 & 0.91\tiny{$\times1.00$} & 1.04\tiny{$\times1.15$} & 1.82\tiny{$\times2.00$} \\
    Llama 3.1 70B Instruct \citep{llama3.1} & 4.16 & 4.45\tiny{$\times1.07$} & 10.58\tiny{$\times2.54$} & 10.59\tiny{$\times2.55$} \\
    Llama 3.3 70B Instruct \citep{llama3.3} & 3.54 & 3.17\tiny{$\times0.89$} & 4.25\tiny{$\times1.20$} & 7.23\tiny{$\times2.04$} \\
    Llama 4 Scout 17B 16e Instruct \citep{llama4} & 3.45 & 3.92\tiny{$\times1.13$} & 5.67\tiny{$\times1.64$} & 9.05\tiny{$\times2.62$} \\
    Gemma 3 1B Instruct \citep{team2025gemma} & 0.89 & 0.86\tiny{$\times0.97$} & 0.93\tiny{$\times1.04$} & 1.74\tiny{$\times1.96$} \\
    Gemma 3 4B Instruct \citep{team2025gemma} & 0.95 & 0.93\tiny{$\times0.98$} & 1.09\tiny{$\times1.15$} & 1.96\tiny{$\times2.06$} \\
    Gemma 3 12B Instruct \citep{team2025gemma} & 1.12 & 1.14\tiny{$\times1.02$} & 1.54\tiny{$\times1.38$} & 2.43\tiny{$\times2.17$} \\
    Gemma 3 27B Instruct \citep{team2025gemma} & 1.32 & 1.36\tiny{$\times1.03$} & 2.26\tiny{$\times1.71$} & 3.09\tiny{$\times2.34$} \\
    \bottomrule
  \end{tabular}
}

\end{table}

\begin{table}[h]
  \centering
  \caption{
  Character length of different summaries in seconds per prompt per GPU, averaged across all three datasets with 1000 questions each. 
  In small font, relative comparisons w.r.t.\ \emph{Greedy}.
  }
  \label{tab:summary_methods_length}
    \resizebox{\textwidth}{!}{
  \begin{tabular}{lrrrr}
    \toprule
    Model & \multicolumn{3}{c}{Single-decoding methods} & \multicolumn{1}{c}{Sample \& summarize} \\
    \cmidrule(rl){2-4}
    \cmidrule(rl){5-5}
    & Greedy & Basic & CoT & $N\!=\!10$ \\
    \midrule
    Qwen2.5 0.5B Instruct \citep{qwen2.5} & 130.10 & 699.93\tiny{$\times5.38$} & 896.85\tiny{$\times6.89$} & 155.39\tiny{$\times1.19$} \\
    Qwen2.5 1.5B Instruct \citep{qwen2.5} & 152.44 & 151.44\tiny{$\times0.99$} & 528.68\tiny{$\times3.47$} & 310.16\tiny{$\times2.03$} \\
    Qwen2.5 3B Instruct \citep{qwen2.5} & 81.55 & 168.60\tiny{$\times2.07$} & 254.03\tiny{$\times3.11$} & 274.03\tiny{$\times3.36$} \\
    Qwen2.5 7B Instruct \citep{qwen2.5} & 93.16 & 149.61\tiny{$\times1.61$} & 198.33\tiny{$\times2.13$} & 178.72\tiny{$\times1.92$} \\
    Qwen2.5 14B Instruct \citep{qwen2.5} & 92.93 & 170.49\tiny{$\times1.83$} & 245.97\tiny{$\times2.65$} & 203.53\tiny{$\times2.19$} \\
    Qwen2.5 32B Instruct \citep{qwen2.5} & 64.53 & 145.30\tiny{$\times2.25$} & 186.80\tiny{$\times2.89$} & 138.33\tiny{$\times2.14$} \\
    Qwen2.5 72B Instruct \citep{qwen2.5} & 97.30 & 157.04\tiny{$\times1.61$} & 247.99\tiny{$\times2.55$} & 177.61\tiny{$\times1.83$} \\
    Phi 4 14B \citep{abdin2024phi} & 124.85 & 203.35\tiny{$\times1.63$} & 281.96\tiny{$\times2.26$} & 273.05\tiny{$\times2.19$} \\
    Ministral 8B Instruct 2410 \citep{ministral} & 49.15 & 100.54\tiny{$\times2.05$} & 179.57\tiny{$\times3.65$} & 130.08\tiny{$\times2.65$} \\
    Llama 3.1 70B Instruct \citep{llama3.1} & 168.64 & 267.63\tiny{$\times1.59$} & 499.54\tiny{$\times2.96$} & 225.05\tiny{$\times1.33$} \\
    Llama 3.3 70B Instruct \citep{llama3.3} & 113.47 & 229.45\tiny{$\times2.02$} & 277.31\tiny{$\times2.44$} & 152.70\tiny{$\times1.35$} \\
    Llama 4 Scout 17B 16e Instruct \citep{llama4} & 132.81 & 275.57\tiny{$\times2.07$} & 220.68\tiny{$\times1.66$} & 214.80\tiny{$\times1.62$} \\
    Gemma 3 1B Instruct \citep{team2025gemma} & 22.66 & 38.91\tiny{$\times1.72$} & 183.15\tiny{$\times8.08$} & 61.71\tiny{$\times2.72$} \\
    Gemma 3 4B Instruct \citep{team2025gemma} & 26.45 & 80.66\tiny{$\times3.05$} & 113.46\tiny{$\times4.29$} & 101.16\tiny{$\times3.83$} \\
    Gemma 3 12B Instruct \citep{team2025gemma} & 28.51 & 102.76\tiny{$\times3.60$} & 174.70\tiny{$\times6.13$} & 70.67\tiny{$\times2.48$} \\
    Gemma 3 27B Instruct \citep{team2025gemma} & 37.77 & 137.62\tiny{$\times3.64$} & 192.39\tiny{$\times5.09$} & 70.97\tiny{$\times1.88$} \\
    \bottomrule
  \end{tabular}
}

\end{table}

\clearpage
\section{Experiment details of CoT deep dive}
\label{app:cot_deepdive}

\subsection{Results per dataset}

\begin{figure}[htbp]
    \centering
    \begin{subfigure}[t]{0.27\textwidth}
        \centering
        \includegraphics[width=\linewidth]{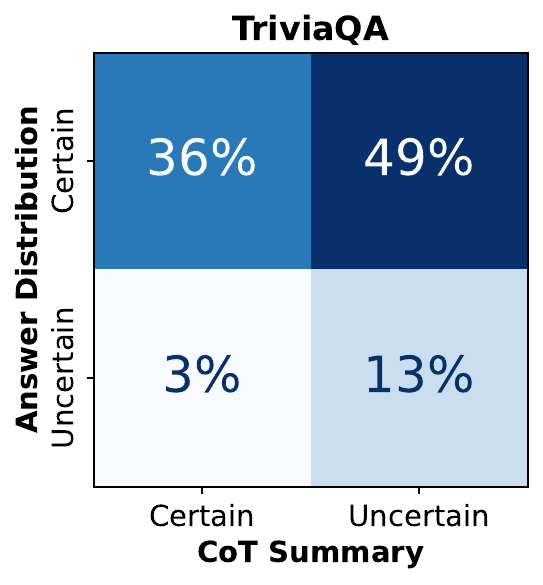
        }

    \end{subfigure}
    \hfill
    \begin{subfigure}[t]{0.27\textwidth}
        \centering
        \includegraphics[width=\linewidth]{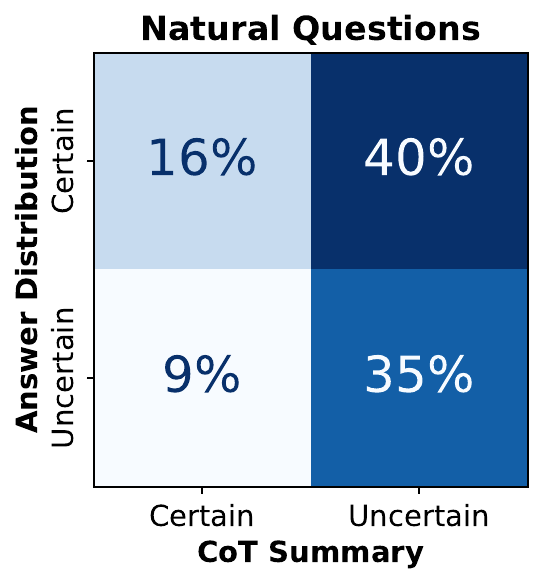}

    \end{subfigure}
    \hfill
    \begin{subfigure}[t]{0.27\textwidth}
        \centering
        \includegraphics[width=\linewidth]{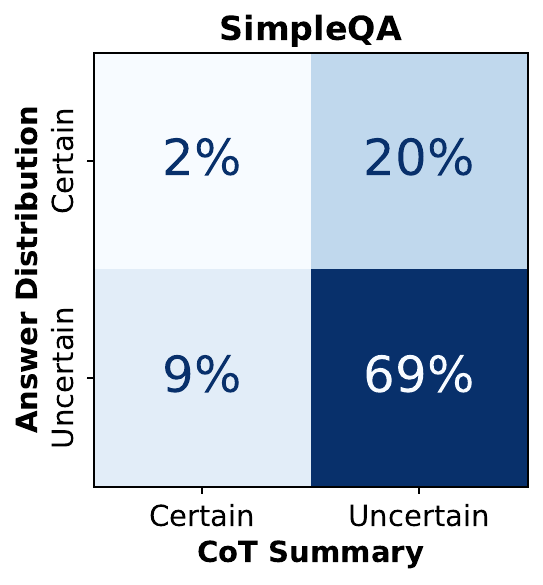
        }

    \end{subfigure}
    \caption{Confusion matrices between certainty of \emph{CoT} summaries vs. actual answer distributions for Qwen2.5 72B Instruct. Judged by Gemini 2.0 Flash for $500$ random questions per dataset.}
    \label{fig:cot_confusion_detailed}
\end{figure}

We show the results per dataset in \cref{fig:cot_confusion_detailed}. 
By just looking at the marginals of the answer distribution, we can infer that the question difficulty increases %
from TriviaQA to Natural Questions to SimpleQA, as the number of questions with uncertain answer distributions increases from $16 \%$ to $44 \%$ to $78 \%$ of questions. However, the majority of the \emph{CoT} summaries are uncertain for all three datasets (for $62 \%$, $75 \%$, and $89 \%$ of questions respectively)%
, meaning that \emph{CoT} is \emph{underconfident} on TriviaQA and Natural Quesions---i.e., it suggests answers that do not have high probability under the true distribution. A balance is clearly needed. The \emph{Greedy} method, by contrast, is overconfident. While it outperforms \emph{CoT} on average, it underperforms \emph{Sample \& Summarize} since it will only present a single option.
In a similar way, simply making \emph{CoT} more confident %
would likely lead to detrimental effects on harder questions. Already as-is, there are relevant shares of questions in SimpleQA and Natural Questions where \emph{CoT} is certain while the actual answer distribution is not ($9 \%$ each). 
Generating self-reflective summaries that are truthful to the model's internal uncertainty %
thus remains a challenging task.

\subsection{Prompts used to classify certainty vs. uncertainty}

We used Gemini 2.0 Flash to judge if a summary mentions multiple semantically different answers or not, and likewise if samples from the distributions do so. In particular, we provide it with the corresponding context, and ask it to classify into three classes: A) Fully certain, B) not fully certain, C) Completely uncertain. We then aggregate B and C to form the general ``uncertain'' category (we found this to give better results than to directly give a binary task to Gemini, because it allows us to explicitly tell how to handle the ``not fully certain'' edge case). We manually verified that the classification is qualitatively correct. We show the prompts below\hl{, along with qualitative examples of how Gemini classified each summary and each answer distribution}.

\begin{tcolorbox}[colback=gray!5, colframe=black, fontupper=\small\ttfamily, title=Prompt~\refstepcounter{prompt}\theprompt: Gemini 2.0 Flash prompt to classify \emph{CoT} summary in \cref{fig:cot_confusion}. We call A. 'Certain' and group B. and C. into 'Uncertain'., breakable,enhanced]

Below, you are given an answer to the question '\{question\}'.\\

Your goal is to classify which type of answer this is:\\
A. The answer is certain, it only mentions one answer option.\\
B. The answer is not fully certain. It might mention one or two further answer options but judges them as less likely.\\
C. The answer is very uncertain. It mentions many mutually exclusive answer options, without a clear single most likely answer.\\

Ignore differences in form and style. You are only supposed to judge the answer semantically.\\

Here is the answer:
\{CoT summary\}\\

Please respond with the category of what type of this answer this is. Respond only with A, B, or C.
\end{tcolorbox}

\begin{tcolorbox}[colback=gray!5, colframe=black, fontupper=\small\ttfamily, title=Prompt~\refstepcounter{prompt}\theprompt: Gemini 2.0 Flash prompt to classify answer distribution samples in \cref{fig:cot_confusion}. We call A. 'Certain' and group B. and C. into 'Uncertain', breakable,enhanced]

Below, you are given \{n\_answers\} individual answers to the question '\{question\}'. These \{n\_answers\} answers can be seen as samples from an answer distribution. \\

Your goal is to classify which type of distribution this is:\\
A. The answers all do not contradict each other, up to one or two that differ from the majority answer.\\
B. The answers give multiple mutually exclusive answer options, but there is one answer option that is given in the majority of cases.\\
C. The answers give multiple mutually exclusive answer options, and they are almost all different, without a clear majority answer.\\

The answers will have some natural variability. Ignore differences in form and style. You are only supposed to judge if answer options are semantically different.\\

Here are the \{n\_answers\} answers:

x\_1 = '\{answer\}'

...

x\_\{n\_answers\} = '\{answer\}'\\

Please respond with the category of what type of this distribution this is. Respond only with A, B, or C.
\end{tcolorbox}

\textbf{Examples for bottom right quadrant: CoT classified as uncertain, answer distribution uncertain}

\textbf{Question:} What day, month, and year was the municipality of Valdivia, Antioquia, Colombia, founded?

\textbf{CoT summary:} "The most likely founding date of the municipality of Valdivia, Antioquia, Colombia, is July 16, 1679. However, some sources suggest it could also be August 15, 1679, or there might be earlier records of a settlement in the area."

\textbf{Answer distribution:} \begin{itemize}
        \item "Valdivia, Antioquia, Colombia, was founded on July 29, 1854.",
        \item "Valdivia, Antioquia, Colombia was founded on August 5, 1836.",
        \item "Valdivia, Antioquia, Colombia, was founded on July 7, 1861.",
        \item "Valdivia, Antioquia, Colombia, was founded on July 3, 1616.",
        \item "Valdivia, Antioquia, Colombia was founded on July 12, 1854.",
        \item "Valdivia, Antioquia, Colombia, was founded on January 27, 1616.",
        \item "Valdivia, Antioquia, Colombia was founded on August 5, 1645.",
        \item "Valdivia, Antioquia, Colombia, was founded on July 1, 1886.",
        \item "Valdivia, Antioquia, Colombia, was founded on May 19, 1849.",
        \item "Valdivia, Antioquia, Colombia, was founded on January 19, 1886."
\end{itemize}

\textbf{Question:} After the War of 1812, Edward William McBride (1791-1834) worked as what for the king's printer, John Cameron, on the York Gazette until April 1815?

\textbf{CoT summary:} "Edward William McBride most likely worked as a printer's assistant for John Cameron at the York Gazette, but he could also have been a typesetter or proofreader."

    \textbf{Answer distribution:}\begin{itemize}
        \item "Edward William McBride worked as a printer for John Cameron on the York Gazette until April 1815.",
        \item "Edward William McBride worked as a printer for John Cameron on the York Gazette until April 1815.",
        \item "Edward William McBride worked as a printer for John Cameron on the York Gazette until April 1815.",
        \item "Edward William McBride worked as a printer for the king's printer, John Cameron, on the York Gazette until April 1815.",
        \item "Edward William McBride worked as a printer for John Cameron on the York Gazette until April 1815.",
        \item "Edward William McBride worked as a printer for John Cameron on the York Gazette until April 1815.",
        \item "Edward William McBride worked as a compositor for John Cameron on the York Gazette until April 1815.",
        \item "Edward William McBride worked as a compositor for John Cameron on the York Gazette until April 1815.",
        \item "Edward William McBride worked as a printer for John Cameron on the York Gazette until April 1815.",
        \item "Edward William McBride worked as a printer for John Cameron on the York Gazette until April 1815."
\end{itemize}

\textbf{Examples for top left quadrant: CoT classified as certain, answer distribution certain}

\textbf{Question:} who hosted and won the inagural world cup?

\textbf{CoT summary:} "Uruguay hosted and won the inaugural FIFA World Cup in 1930."

    \textbf{Answer distribution:}\begin{itemize}
        \item "The inaugural World Cup was hosted by Uruguay in 1930, and Uruguay also won the tournament.",
        \item "The inaugural World Cup was hosted by Uruguay in 1930, and Uruguay also won the tournament.",
        \item "The inaugural World Cup was hosted by Uruguay in 1930, and Uruguay also won the tournament.",
        \item "The inaugural World Cup was hosted by Uruguay in 1930, and Uruguay also won the tournament.",
        \item "The inaugural World Cup was hosted by Uruguay in 1930, and Uruguay also won the tournament.",
        \item "The inaugural World Cup was hosted by Uruguay in 1930, and Uruguay also won the tournament.",
        \item "The inaugural World Cup was hosted by Uruguay in 1930, and Uruguay also won the tournament.",
        \item "The inaugural World Cup was hosted by Uruguay in 1930, and Uruguay also won the tournament.",
        \item "The inaugural World Cup was hosted by Uruguay in 1930, and Uruguay also won the tournament.",
        \item "The inaugural FIFA World Cup was hosted by Uruguay in 1930, and Uruguay also won the tournament."
\end{itemize}

\textbf{Question:} Who received the IEEE Frank Rosenblatt Award in 2019?

\textbf{CoT summary:} "Yann LeCun received the IEEE Frank Rosenblatt Award in 2019."

    \textbf{Answer distribution:}\begin{itemize}
        \item "Yann LeCun received the IEEE Frank Rosenblatt Award in 2019.",
        \item "Yann LeCun received the IEEE Frank Rosenblatt Award in 2019.",
        \item "Yann LeCun received the IEEE Frank Rosenblatt Award in 2019.",
        \item "Yann LeCun received the IEEE Frank Rosenblatt Award in 2019.",
        \item "Yann LeCun received the IEEE Frank Rosenblatt Award in 2019.",
        \item "Yann LeCun received the IEEE Frank Rosenblatt Award in 2019.",
        \item "Yann LeCun received the IEEE Frank Rosenblatt Award in 2019.",
        \item "Yann LeCun received the IEEE Frank Rosenblatt Award in 2019.",
        \item "Yann LeCun received the IEEE Frank Rosenblatt Award in 2019.",
        \item "Yann LeCun received the IEEE Frank Rosenblatt Award in 2019."
\end{itemize}

\textbf{Examples for top right quadrant: CoT classified as uncertain, answer distribution certain}

\textbf{Question:} During what year did Tatsuo Miyajima have his first solo exhibition?

\textbf{CoT summary:} "The most likely year for Tatsuo Miyajima's first solo exhibition is 1988, as this is the year most commonly cited in reliable sources. However, there is a possibility that it could have been as early as 1986, though this is less well-documented."

    \textbf{Answer distribution:}\begin{itemize}
        \item "1988",
        \item "Tatsuo Miyajima had his first solo exhibition in 1988.",
        \item "Tatsuo Miyajima had his first solo exhibition in 1988.",
        \item "1988",
        \item "1988",
        \item "1988",
        \item "1988",
        \item "1988",
        \item "Tatsuo Miyajima had his first solo exhibition in 1988.",
        \item "1988"
\end{itemize}

\textbf{Question:} At which University of Puerto Rico campus did New York State Representative Nydia Velázquez work as a professor from 1976 to 1981?

\textbf{CoT summary:} "The most likely campus where Nydia Velázquez worked as a professor from 1976 to 1981 is the University of Puerto Rico, Río Piedras, given its prominence and alignment with her academic and political background. However, it could also be the University of Puerto Rico, Mayagüez or Cayey, though these are less likely."

    \textbf{Answer distribution:}\begin{itemize}
        \item "University of Puerto Rico, Río Piedras Campus.",
        \item "University of Puerto Rico, Río Piedras Campus.",
        \item "University of Puerto Rico, Río Piedras Campus.",
        \item "University of Puerto Rico, Río Piedras Campus.",
        \item "University of Puerto Rico, Río Piedras Campus.",
        \item "University of Puerto Rico, Río Piedras Campus.",
        \item "University of Puerto Rico, Rio Piedras Campus.",
        \item "University of Puerto Rico, Río Piedras Campus.",
        \item "University of Puerto Rico, Río Piedras Campus.",
        \item "University of Puerto Rico, Río Piedras Campus."
\end{itemize}

\textbf{Examples for bottom left quadrant: CoT classified as certain, answer distribution uncertain}

\textbf{Question:} Danger Island, The Micro Ventures, The Arabian Knights and The Three Musketeers were short segments in which classic children's TV programme?

\textbf{CoT summary:} The most likely answer is The Banana Splits Adventure Hour, as it featured a mix of different adventure segments that could include titles like Danger Island, The Micro Ventures, The Arabian Knights, and The Three Musketeers.

    \textbf{Answer distribution:}\begin{itemize}
        \item These segments were part of the classic children's TV programme "The Banana Splits Adventure Hour."
        \item The programme you're referring to is "The Banana Splits Adventure Hour."
        \item The classic children's TV programme that featured these short segments is "The Banana Splits Adventure Hour."
        \item These segments were part of the classic children's TV programme "The Banana Splits Adventure Hour."
        \item The programme you're referring to is "The Quatermass Experiment," but it seems there might be a mix-up. Those segments are actually from the classic children's TV show "Vision On," which aired in the UK from 1964 to 1976.
        \item These were segments in the children's TV show "The Storyteller."
        \item These segments were part of the classic children's TV programme "The Banana Splits Adventure Hour."
        \item These segments were part of the classic children's TV programme "The Dame's House," also known as "Dame's House" or "The hastily assembled Dame's house for children," which aired in the UK in the 1940s and 1950s. However, the most well-known incarnation featuring these segments was likely "The Goon Show," although it's important to note that "Danger Island" and "The Micro Ventures" were specific to "The Dame’s House."
        \item The segments you mentioned were part of the classic children's TV program "The Banana Splits Adventure Hour."
        \item These segments were part of the classic children's TV programme "The Banana Splits Adventure Hour."
\end{itemize}

\textbf{Question:} Which 'A-road' passes through Preston and Harrogate?

\textbf{CoT summary:} There is no single A-road that passes through both Preston and Harrogate. The closest options are the A65, which passes through Harrogate, and the A683, which passes through Preston.

\textbf{Answer distribution:} \begin{itemize}
        \item The A65 passes through both Preston and Harrogate.
        \item The A59 road passes through both Preston and Harrogate.
        \item The A65 road passes through both Preston and Harrogate.
        \item The A59 road passes through Preston and Harrogate.
        \item The A65 road passes through both Preston and Harrogate.
        \item The A65 passes through both Preston and Harrogate.
        \item The A65 road passes through both Preston and Harrogate.
        \item The A59 road passes through Preston and Harrogate.
        \item The A65 passes through Preston and Harrogate.
        \item The A65 passes through both Preston and Harrogate.
\end{itemize}

\color{black}

\section{Finetuning to generate self-reflective summaries} \label{app:sft}

\cref{tab:tuning} includes experiments with post-training to generate summarized uncertainties. Despite testing multiple setups, we have not found a strategy that trains the model to output summaries of its uncertainties. It seems to capture the style, and the results for certain train samples, but fails to find a generalizable mechanism. We see this as a potential for future studies.

\begin{table}[h]
\centering

\caption{SelfReflect scores of models that have been supervised finetuned (SFT) and/or direct preference optimized (DPO) on sample-and-summarize summaries with LoRA adapters. Each epoch includes 10,000 example summaries. No approach consistently improves both in- and out-of-distribution over sampling greedily from Qwen 3 8B Instruct.}
\label{tab:tuning}
 \resizebox{\textwidth}{!}{
\begin{tabular}{lccc@{\hskip 0.5in}ccc}
\toprule
Finetuning 
& Train & Eval & OOD & Train & Eval & OOD \\
& Natural Q & Natural Q & TriviaQA & TriviaQA & TriviaQA & Natural Q \\
\cmidrule(lr){1-7}
Qwen 3 8B Instruct & 93 & 94 & 95 & 98 & 95 & 94 \\
\midrule
+ SFT (5 epochs), lr 1e-4     &   90\footnotesize{$\color{Green}-3$}     &   92\footnotesize{$\color{Green}-2$}    &   96\footnotesize{$\color{Red}+1$}     &   96\footnotesize{$\color{Green}-2$}    &   96\footnotesize{$\color{Red}+1$}    &   92\footnotesize{$\color{Green}-2$}    \\
+ SFT (10 epochs), lr 1e-4     &   89\footnotesize{$\color{Green}-4$}    &   94\footnotesize{$\color{Red}-0$}    &   98\footnotesize{$\color{Red}+3$}    &   94\footnotesize{$\color{Green}-4$}    &   97\footnotesize{$\color{Red}-2$}    &    94\footnotesize{$\color{Red}-0$}   \\ 
+ SFT (15 epochs), lr 1e-4     &   88\footnotesize{$\color{Green}-5$}    &   94\footnotesize{$\color{Red}-0$}    &   97\footnotesize{$\color{Red}+2$}    &  92\footnotesize{$\color{Green}-6$}     &    97\footnotesize{$\color{Red}-2$}   &   94\footnotesize{$\color{Red}-0$}    \\
+ SFT (20 epochs), lr 1e-4     &   87\footnotesize{$\color{Green}-6$}    &   93\footnotesize{$\color{Green}-1$}    &    98\footnotesize{$\color{Red}+3$}   &   92\footnotesize{$\color{Green}-6$}    &    98\footnotesize{$\color{Red}-3$}   &    94\footnotesize{$\color{Red}-0$}   \\ %
\midrule
+ DPO (5 epochs), lr 1e-5     &   95\footnotesize{$\color{Red}+2$}    &   95\footnotesize{$\color{Red}+1$}    &   97\footnotesize{$\color{Red}+2$}    &    99\footnotesize{$\color{Red}+1$}   &  97\footnotesize{$\color{Red}+2$}     &   94\footnotesize{$\color{Red}+0$}    \\ %
+ DPO (10 epochs), lr 1e-5     &   95\footnotesize{$\color{Red}+2$}    &   95\footnotesize{$\color{Red}+1$}    &   97\footnotesize{$\color{Red}+2$}    &    100\footnotesize{$\color{Red}+2$}   &   97\footnotesize{$\color{Red}+2$}    &    95\footnotesize{$\color{Red}+1$}   \\ %
+ DPO (15 epochs), lr 1e-5     &   96\footnotesize{$\color{Red}+3$}    &   96\footnotesize{$\color{Red}+2$}    &   97\footnotesize{$\color{Red}+2$}    &    100\footnotesize{$\color{Red}+2$}   &   97\footnotesize{$\color{Red}+2$}    &    95\footnotesize{$\color{Red}+1$}   \\ %
+ DPO (20 epochs), lr 1e-5     &   96\footnotesize{$\color{Red}+3$}    &  96\footnotesize{$\color{Red}+2$}     &   97\footnotesize{$\color{Red}+2$}    &    99\footnotesize{$\color{Red}+1$}   &  97\footnotesize{$\color{Red}+2$}     &  95\footnotesize{$\color{Red}+1$}     \\ %
\midrule
+ SFT (5 ep.), lr 1e-4 + DPO (20 ep.), lr 1e-5   &   97\footnotesize{$\color{Red}+4$}    &   97\footnotesize{$\color{Red}+3$}    &  99\footnotesize{$\color{Red}+4$}   &   99\footnotesize{$\color{Red}+1$}    &   99\footnotesize{$\color{Red}+4$}    &  97\footnotesize{$\color{Red}+3$}   \\
\midrule
+ DPO, lr 1e-4 (20 epochs)     &   99\footnotesize{$\color{Red}+6$}    &   98\footnotesize{$\color{Red}+4$}    &   102\footnotesize{$\color{Red}+7$}    &    102\footnotesize{$\color{Red}+4$}   &   100\footnotesize{$\color{Red}+5$}    &   98\footnotesize{$\color{Red}+4$}    \\ 
+ DPO, lr 1e-4, $\beta=0$ (20 epochs)     &   93\footnotesize{$\color{Red}-0$}    &   93\footnotesize{$\color{Green}-1$}    &   95\footnotesize{$\color{Red}-0$}    &   98\footnotesize{$\color{Red}-0$}    &  95\footnotesize{$\color{Red}-0$}     &   93\footnotesize{$\color{Green}-1$}    \\ 
+ DPO, lr 1e-5, $\beta=0.5$ (20 epochs)   &   94\footnotesize{$\color{Red}+1$}    &   94\footnotesize{$\color{Red}-0$}    &  96\footnotesize{$\color{Red}+1$}   &    98\footnotesize{$\color{Red}-0$}   &   96\footnotesize{$\color{Red}+1$}    &   93\footnotesize{$\color{Green}-1$}  \\
+ SFT (5 epochs), lr 1e-4 + DPO, lr 1e-4 (20 epochs)   &   97\footnotesize{$\color{Red}+4$}    &   97\footnotesize{$\color{Red}+3$}    &  99\footnotesize{$\color{Red}+4$}   &   99\footnotesize{$\color{Red}+1$}    &   99\footnotesize{$\color{Red}+4$}    &  97\footnotesize{$\color{Red}+3$}   \\
+ SFT (5 epochs), lr 1e-4 + DPO, lr 1e-4, $\beta=0$ (20 epochs)   &   90\footnotesize{$\color{Green}-3$}    &   92\footnotesize{$\color{Green}-2$}    &  96\footnotesize{$\color{Red}+1$}   &    96\footnotesize{$\color{Green}-2$}   &   96\footnotesize{$\color{Red}+1$}    &   92\footnotesize{$\color{Green}-2$}  \\
+ SFT (5 epochs), lr 1e-4 + DPO, lr 1e-5 (20 epochs)   &     90\footnotesize{$\color{Green}-3$}  &   92\footnotesize{$\color{Green}-2$}    &   96\footnotesize{$\color{Red}+1$}  &    96\footnotesize{$\color{Green}-2$}   &   96\footnotesize{$\color{Red}+1$}    &   92\footnotesize{$\color{Green}-2$}  \\
+ SFT (5 epochs), lr 1e-4 + DPO, lr 1e-5, $\beta=0.5$ (20 epochs)   &   92\footnotesize{$\color{Green}-1$}    &    94\footnotesize{$\color{Red}-0$}   &   96\footnotesize{$\color{Red}+1$}  &   96\footnotesize{$\color{Green}-2$}    &  96\footnotesize{$\color{Red}+1$}     &   93\footnotesize{$\color{Green}-1$}  \\
\bottomrule
\end{tabular}
}
\end{table}

\section{Results on Retrieval-augmented Generation} \label{sec:hotpotqa}

\hl{The datasets we have used in \cref{sec:summary_methods_performance} are closed-book question-answering datasets. This is to have an experimental setup in which there are sufficient questions in which the LLMs have non-unimodal output distributions and must actually summarize multiple options. However, SelfReflect can measure whether an LLM can summarize its current answer state no matter which previous context it is responding to. To show this, we run an additional experiment on HotpotQA \citep{yang2018hotpotqa}, a retrieval-augmented generation (RAG) dataset in which each question is given along with background information from an according Wikipedia page. }

\begin{table}[h]
  \centering
  \caption{
  SelfReflect score $\downarrow$  ($\times 10^{-3}$ for readability) on HotpotQA, a RAG dataset. 
  The results in small font are relative to \emph{Greedy}.
  }
  \label{tab:hotpotqa}
  
\begin{tabular}{lrrrr}
    \toprule
    Model & Greedy & Basic & CoT & Sample \& Summarize ($N\!=\!20$) \\
    \midrule
    Qwen 2.5 1.5B  
        & 76 
        & 76\tiny{$\color{Red}-0$} 
        & 76\tiny{$\color{Red}-0$} 
        & 73\tiny{$\color{Green}-3$} \\
        
    Qwen 2.5 3B    
        & 80 
        & 81\tiny{$\color{Red}+1$}
        & 81\tiny{$\color{Red}+1$}
        & 79\tiny{$\color{Green}-1$} \\
        
    Qwen 2.5 7B    
        & 79 
        & 79\tiny{$\color{Red}-0$}
        & 80\tiny{$\color{Red}+1$}
        & 78\tiny{$\color{Green}-1$} \\
        
    Qwen 2.5 14B   
        & 78 
        & 80\tiny{$\color{Red}+2$}
        & 80\tiny{$\color{Red}+2$}
        & 76\tiny{$\color{Green}-2$} \\
        
    Qwen 2.5 32B   
        & 79 
        & 80\tiny{$\color{Red}+1$}
        & 80\tiny{$\color{Red}+1$}
        & 78\tiny{$\color{Green}-1$} \\
        
    Qwen 2.5 72B   
        & 78 
        & 79\tiny{$\color{Red}+1$}
        & 80\tiny{$\color{Red}+2$}
        & 76\tiny{$\color{Green}-2$} \\
        
    Microsoft Phi 4 
        & 79 
        & 78\tiny{$\color{Green}-1$}
        & 78\tiny{$\color{Green}-1$}
        & 76\tiny{$\color{Green}-3$} \\
        
    Ministral 8B 2410 
        & 86 
        & 86\tiny{$\color{Red}-0$}
        & 85\tiny{$\color{Green}-1$}
        & 84\tiny{$\color{Green}-2$} \\
        
    Llama 3.1 70B  
        & 80 
        & 79\tiny{$\color{Green}-1$}
        & 79\tiny{$\color{Green}-1$}
        & 78\tiny{$\color{Green}-2$} \\
    \bottomrule
\end{tabular}

\end{table}

\hl{The results are similar to the main paper results: Sample \& Summarize is the only summary generator that consistently produces better summaries than just giving the greedy answer. Note that the gap to greedy is tighter here than in the main paper. This is because in RAG questions, the answer less uncertain, so that distributions are more often unimodal and no hedging is required.}

\section{Behavior of Sample \& Summarize} \label{sec:sample_and_summarize}

\hl{To get a better understanding of the best-working approach, sample \& summarize, we extend its analysis here. First, we provide qualitative examples to understand how summary strings generated by it look like. We conduct this analysis on Qwen 2.5 72B Instruct on TriviaQA questions.}

\hl{In 50\% of the questions, the LLM is certain about its answer, thus the 50 samples from its answer distribution are (virtually) the same. In this case, the Sample \& Summarize string just looks like a normal answer, for example:}

\hl{\textit{Question: The naval battle of Coronel in World War I took place in which ocean?} \\
Sample \& Summarize (N=50): The naval battle of Coronel took place in the Pacific Ocean.}

\hl{In the large majority (±45\%) of other cases, the LLM is uncertain about its answer, but usually just split between two or three answer options. In this case, the answer remains relatively brief, for example:}

\hl{\textit{Question: When used by John Major in a speech of October 1993, which three word term was misinterpreted as a re-assertion of traditional moral values?} \\
Sample \& Summarize (N=50): "Back to Basics" is the most likely term, though it could also be "classless society.".}

\hl{Then there are the high-variance cases where the LLM answers (or hallucinates) many different options. This happens for about 5\% of the questions, and we provide two examples below:}

\hl{\textit{Question: What is the highest award in the Boys Brigade?} \\
Sample \& Summarize (N=50): The highest award in the Boys Brigade is most likely the Queen's Badge, but it could also be the King's Scout Award or the President's Badge. Other possibilities include the Medal of the Order of St. George, the Diamond Efficiency Medal, the Duke of Edinburgh's Gold Award, the King's Badge, the Jubilee Badge, the Diamond Award, the Chief's Medal, and the BB Company Award.}

\hl{\textit{Question: Which company announced in early 2002 that it would be transferring production from its factory in Malmesbury, Wiltshire to Malaysia?} \\
Sample \& Summarize (N=50): The company that announced the transfer of production from its factory in Malmesbury, Wiltshire to Malaysia in early 2002 was most likely Smiths Aerospace. It could also be British Aerospace, which later became part of BAE Systems, or Dyson. Other possibilities mentioned include CORGI, Swindon Powertrains, Psion, and GKN.}

\hl{Sample \& Summarize method leads to a naive listing behavior in those high-variance cases. We also see this effect when we increase the answers that Sample \& Summarize samples up to $N=100$ in \cref{tab:ss_n}. The length of the average summary string grows, driven by this subset of questions, but seems to converge. Likewise, the SelfReflect score does not reward longer and longer lists. Instead, it flattens out since the additional examples carry no more information.}

\hl{It is an interesting question what other answers would faithfully reflect the uncertainty especially in these high-variance listing cases. One could for example give an answer that roughly outlines the distribution (“It was some larger British technology firm, but I am not sure which.”) rather than giving exact options. We do not want to impose this way over another in our paper, we take a technical perspective: Whatever the answer may look like, it is supposed to give the same information to the user (or a judge LLM) that we would get if we sampled the LLM multiple times. This is the criterion that the SelfReflect score measures (by this, it also penalizes if there is a list of unrelated options, because they would lower the likelihood of the actually relevant options, and thus move away from a faithful representation of the distribution, see the examples in the next question). We propose SelfReflect to give this measurement tool to the community in the future search for different ways to communicate uncertainties to the user.}

\begin{table}
\caption{Sample \& Summarize summaries that summarize an increasing number of sampled answers. Generated by Qwen 2.5 72B Instruct on TriviaQA questions.}
\label{tab:ss_n}
\resizebox{\textwidth}{!}{
\begin{tabular}{lrrrrrrrrrr}
  \toprule
  Answers used in Sample \& Summarize & 10 & 20 & 30 & 40 & 50 & 60 & 70 & 80 & 90 & 100 \\
   SelfReflect score $\downarrow (\times 10^{-3})$ & 84 & 83 & 83 & 82 & 82 & 82 & 82 & 82 & 82 & 82 \\
  Length (characters) & 93.33 & 104.37 & 113.22 & 120.46 & 123.97 & 127.27 & 133.82 & 135.28 & 136.42 & 137.58
 \\
  \bottomrule
\end{tabular}
}
\end{table}

\section{Results with self-critique}

\hl{One additional approach we tried to improve the performance of summaries was self-critique, where the model first generates a summary based on one of the other strategies and is then given the chance to critique and improve on its summary. In \cref{tab:summary_selfcritique_performance} we show the results for applying it to the Sample \& summarize (N=10) strategy, where we give the model access to 10 further samples in the second step of critiquing its original summary. As we see, the self-critique approach does not consistently improve the SelfReflect score. However, some improvements can be seen for the larger Gemma models, making this a potentially interesting approach to pursue further in future work, e.g. in an iterative fashion.}

\begin{table}[tb]
  \centering
  \caption{
  SelfReflect score of self-critique approach based on initial summary from Sample \& summarize (N=10) $\downarrow$ ($\times 10^{-3}$ for readability). 
  The results in subscript font are relative to \emph{Sample \& summarize (N=10)}.
  }
  \label{tab:summary_selfcritique_performance}

\resizebox{0.9\textwidth}{!}{
  \begin{tabular}{lrr}
    \toprule
    Model                                         & Sample \& summarize $N\!=\!10$ & Self-Critique $N\!=\!10$    \\
    \midrule
    Qwen2.5 0.5B Instruct \citep{qwen2.5}         & 96                             & 97\tiny{$\color{Red}+1$}    \\
    Qwen2.5 1.5B Instruct \citep{qwen2.5}         & 87                             & 87\tiny{$\color{Red}-0$}    \\
    Qwen2.5 3B Instruct \citep{qwen2.5}           & 91                             & 92\tiny{$\color{Red}+1$}    \\
    Qwen2.5 7B Instruct \citep{qwen2.5}           & 91                             & 90\tiny{$\color{Green}-1$}  \\
    Qwen2.5 14B Instruct \citep{qwen2.5}          & 86                             & 85\tiny{$\color{Green}-1$}  \\
    Qwen2.5 32B Instruct \citep{qwen2.5}          & 91                             & 91\tiny{$\color{Red}-0$}    \\
    Qwen2.5 72B Instruct \citep{qwen2.5}          & 85                             & 84\tiny{$\color{Green}-1$}  \\
    Phi 4 \citep{abdin2024phi}                    & 85                             & 84\tiny{$\color{Green}-1$}  \\
    Ministral 8B Instruct 2410 \citep{ministral}  & 101                            & 100\tiny{$\color{Green}-1$} \\
    Llama 3.1 70B Instruct \citep{llama3.1}       & 87                             & 87\tiny{$\color{Red}-0$}    \\
    Llama 3.3 70B Instruct \citep{llama3.3}       & 89                             & 88\tiny{$\color{Green}-1$}  \\
    Llama 4 Scout 17B 16e Instruct \citep{llama4} & 88                             & 86\tiny{$\color{Green}-2$}  \\
    Gemma 3 1B Instruct \citep{team2025gemma}     & 117                            & 117\tiny{$\color{Red}-0$}   \\
    Gemma 3 4B Instruct \citep{team2025gemma}     & 101                            & 102\tiny{$\color{Red}+1$}   \\
    Gemma 3 12B Instruct \citep{team2025gemma}    & 102                            & 99\tiny{$\color{Green}-3$}  \\
    Gemma 3 27B Instruct \citep{team2025gemma}    & 97                             & 95\tiny{$\color{Green}-2$}  \\
    \bottomrule
  \end{tabular}
}

\end{table}

\end{document}